\newcommand{\shap}{\phi_{\text{SHAP}}}
\DeclareMathOperator*{\argmin}{argmin}
\newcommand{\X}{\mathcal{X}}
\declaretheoremstyle[headpunct=:]{mystyle}
\declaretheorem[style=mystyle, shaded={rulecolor={rgb}{0,0,0}, rulewidth=1pt, bgcolor={rgb}{1,1,1}, margin=3pt}, name=Theorem]{theorem}
\declaretheorem[style=mystyle, shaded={rulecolor={rgb}{0,0,0}, rulewidth=1pt, bgcolor={rgb}{1,1,1}, margin=3pt},name=Lemma, sibling=theorem]{lemma}
\declaretheorem[style=mystyle, shaded={rulecolor={rgb}{0,0,0}, rulewidth=1pt, bgcolor={rgb}{1,1,1}, margin=3pt},name=Claim, sibling=theorem]{claim}
\declaretheorem[style=mystyle, shaded={rulecolor={rgb}{0,0,0}, rulewidth=1pt, bgcolor={rgb}{1,1,1}, margin=3pt},name=Property, sibling=theorem]{property}
\declaretheorem[style=mystyle, shaded={rulecolor={rgb}{0,0,0}, rulewidth=1pt, bgcolor={rgb}{1,1,1}, margin=3pt},style=definition, name=Definition, sibling=theorem]{definition}
\declaretheorem[style=mystyle, shaded={rulecolor={rgb}{0,0,0}, rulewidth=1pt, bgcolor={rgb}{1,1,1}, margin=3pt}, sibling=theorem]{proposition}
\declaretheorem[style=mystyle, shaded={rulecolor={rgb}{0,0,0}, rulewidth=1pt, bgcolor={rgb}{1,1,1}, margin=3pt}, sibling=theorem]{corollary}
\newcommand{\bb}[1]{\mathbb{#1}}
\newcommand{\set}[1]{\{#1\}}
\newcommand{\brac}[1]{\left(#1\right)}
\newcommand{\bric}[1]{\left[#1\right]}
\newcommand{\1}{\mathds{1}}
\newcommand{\0}{\boldsymbol{0}}
\title{Mathematically rigorous proofs for Shapley explanations}
\author[david.van.batenburg@student.uva.nl, 14633485]{David van Batenburg}
\begin{document}
\maketitle
\begin{abstract}
Machine Learning is becoming increasingly more important in today's world. It is therefore very important to provide understanding of the decision-making process of machine-learning models. A popular way to do this is by looking at the Shapley-Values of these models as introduced by Lundberg and Lee. 

In this thesis, we discuss the two main results by Lundberg and Lee from a mathematically rigorous standpoint and provide full proofs, which are not available from the original material.

The first result of this thesis is an axiomatic characterization of the Shapley values in machine learning based on axioms by Young. We show that the Shapley values are the unique explanation to satisfy local accuracy, missingness, symmetry and consistency. Lundberg and Lee claim that the symmetry axiom is redundant for explanations. However, we provide a counterexample that shows the symmetry axiom is in fact essential.

The second result shows that we can write the Shapley values as the unique solution to a weighted linear regression problem. This result is proven with the use of dimensionality reduction.
\end{abstract}

\tableofcontents

\chapter{Introduction}

The rise of AI in today's world makes our lives easier in a lot of ways. We can use image recognition to easily look up what species a certain flower belongs to, just by taking a picture; We can translate text from almost all languages to each other with ease; We can use AI to generate images from prompts. This list goes on and on. AI is really useful, but its rise also comes with new challenges. One of these challenges is that machine learning algorithms are giant black boxes. This means that we, as outside observers, have no idea what goes on inside the algorithm. We do not know why an AI makes certain decisions. ``Why does my image-recognition-algorithm think that this image is a cat and where does it look to conclude this?" is a question that the network does not answer, it only gives the final answer. 

This might seem really innocent. Sure it is fun to know why an algorithm thinks that a certain image is a cat, but in a lot of cases it can be imperative to know why an algorithm makes certain decisions. AI is currently being used in medical image recognition to diagnose patients. In this process, it is very important that a doctor will be able to get a better insight in the diagnostisation made by a model \cite{pinto2023artificial}.

To get an insight about the decision made by a machine learning model, we can use an explanation. These machine learning models are described by a function $f:\mathcal X\to\bb R$ for some $\mathcal X\subseteq\bb R^n$. The outputs of this model will be interpreted in a way that is dependent on the type of model. In the case of a classification model with classes $\set{\pm 1}$, the output $f(x)$ can for example be put in the sign function. The point $x$ can then by assigned the class $\operatorname{sign}(f(x))$. Another example is getting a loan at a bank. In this case, the input of the model might be a list of applicable data, for example current dept, number of children or if the person is living with is parents. The output of the model in this case might be the amount that the person is able to loan from the bank.

For these models, we want to be able to interpret the behavior of a model. In the case of the bank loan, this can be done as follows. The input is a vector $x\in\bb R^n$. We now want to create a vector $\phi(f, x)\in\bb R^n$. This vector is called the explanation of $f$ at $x$. For each $i$, the index $\phi_i(f, x)$ is linked to the index $x_i$ such that $\phi_i(f, x)$ explains how important the feature $x_i$ is in determining the size of a loan. 

A very popular explanation is the SHAP-explanation that is introduced in the paper ``A unified approach to interpreting models" by Lundberg and Lee \cite{NIPS2017_8a20a862}. This paper from 2017 uses game theory to create an explanation for binary classifiers. The popularity of this paper is shown by the fact that it has over 29 thousand citations according to Google Scholar. In the SHAP-explanation, we call $\phi(f, x)$ the Shapley values.

The SHAP-explanation is an axiom-motivated method. This means that Lundberg and Lee introduce four axioms (properties that an explanation can satisfy) and prove that the SHAP-explanation is the unique explanation that satisfies these axioms. The advantage that this approach has is that we do not need to understand the exact explanation to motivate why we want to use this method, we only need to look at the axioms from which we derive it.

Lundberg and Lee prove two important results about the SHAP-method in their paper. The first result is the axiomatic motivation for the SHAP-explanation as mentioned above. Lundberg and Lee show that the SHAP-explanation is the unique explanation to satisfy local accuracy, missingness, symmetry and consistency. They do this by referring to a theorem by Young \cite{young1985monotonic} about the Shapley values in game theory. Lundberg and Lee also claim that the symmetry axiom is actually redundant to show uniqueness of the Shapley values for explanations, because it is implied by another property, consistency.

The second result that Lundberg and Lee show is that the Shapley values are the solution to a weighted linear regression problem. It was already known that the Shapley values is game theory can be seen as the solution to a minimization problem \cite{lehrer2003allocation}. Lundberg and Lee translate this theorem from game theory to machine learning. This theorem is very important for the SHAP-explanation, because it allows us to approximate the Shapley values. To directly calculate the Shapley values takes an infeasable amount of computation, so this approximation makes the Shapley values practical.

While these results are very important, their mathematical completeness can be improved. The first problem is that the proofs that Lundberg and Lee provide are not very rigorous. Take \textit{Theorem 1} from the paper for example. This theorem is not at all trivial and there is no proof to be found for this theorem. \textit{Theorem 2} from Lundberg and Lee has a more rigorous proof, but it is still not complete.

A second problem with the paper by Lundberg and Lee is that the symmetry property is not implied by the consistence property, in contrary to what Lundberg and Lee claim. In this thesis, we will provide a counterexample to show that the symmetry axiom is not redundant and that it is actually necessary to prove the uniqueness of the Shapley values.

\paragraph{Main contributions:} The aim of this thesis is to give a discussion about ``A Unified Approach to Interpreting Models" by Lundberg and Lee and give a formal proof of the theorems from this paper. To do this, we will first discuss the necessary definitions and theorems about explanations and about game theory in \autoref{chapter:2}. The definitions about explanations are based on the definitions given by Lundberg and Lee \cite{NIPS2017_8a20a862} and the definitions and theorems about game theory are based on Young \cite{young1985monotonic}. After this, in \autoref{chapter:discussion}, we discuss the definitions as defined by Lundberg and Lee and give reformulations based on this discussion. With these reformulations, in \autoref{chapter:proof_theorem_1}, we will prove the reformulation of \textit{Theorem 1} from the paper from Lundberg and Lee by making a correspondence between explanations and cooperative games. Finally, in \autoref{chapter:4}, we will prove \textit{Theorem 2} from Lundberg and Lee. This theorem shows that the Shapley values are the solution to a weighted linear regression problem.

\chapter{Explanations and cooperative games}
\label{chapter:2}
In this section, we will review the important literature for this paper. We will first give the definitions as stated in the paper from Lundberg and Lee \cite{NIPS2017_8a20a862}. After this, we will give definitions as described in the paper from Young \cite{young1985monotonic}.

\section{Models and explanations}
\subsection{Simplified models}
In machine learning, we often work with classification functions. First let $k\in\bb N$ and let $\X\subseteq\bb R^k$. A classification function $f:\X\to\bb R$ is a function that classifies each point in $\X$ to some class. An example of this can be seen if we look at a classifier $f:\bb R^k\to\bb R$ where we want to classify a point $x\in\bb R^k$ with classes $\{\pm 1\}$. This can be done by assigning the class $\operatorname{sign}(f(x))$ to the point $x$. 

Given a classifier $f:\X\to\bb R$ and some point $x\in\X$, we want to create an explanation. Let $d\in\bb N$. A local explanation\footnote{In this thesis, we will only look at local explanations, so the terms \textit{explanation} and \textit{local explanation} will be used synonymously.} of $f$ at the point $x$ is a vector $\phi(f, x)\in\bb R^d$ that is dependent on the model $f$ and the point $x$. Usually, $\phi(f, x)$ is defined in a way such that the values in $\phi(f, x)$ can be used to explain why a model gave a certain output. One way to create this link between $\phi(f, x)$ and $x$ is with the use of a simplification function and a simplified model. 

\begin{definition}[Simplification function]
    Let $\mathcal X\subseteq\bb R^n$ and let $d\in\bb N$. Now let $x\in\mathcal X$. We call $h_x:\set{0, 1}^d\to\mathcal X$ the simplification function of $x$ and $x'\in\set{0, 1}^d$ the simplified input of $x$ if the following conditions are satisfied:
    \begin{itemize}
        \item[(a)] $h_x$ is injective;
        \item[(b)] $h_x(x') = x$.
    \end{itemize}
\end{definition}

For a vector $z'\in\set{0, 1}^d$ and an index $i\in\set{1, \dots, d}$, we call $i$ an active index of $z'$ if $z'_i = 1$. We will denote $\mathcal A(z')$ to be the active indices of $z'$. In a more formal definition, for $z'\in\set{0, 1}^d$, we define $$\mathcal A(z'):= \set{i\in[d]: z'_i = 1}.$$

As stated before, the simplification function is used to create a link between $\phi(f, x)$ and $x$ and we wanted this to be interpretable. We therefore also want to define the simplification function in a way that is interpretable. We can do this by linking each index in the input of a simplification function $h_x$ to a certain group of input variables of $f$. We then define $h_x$ such that if we let $z'\in\set{0, 1}^d$ with $i\in\mathcal A(z')$, then $h_x(z')$ would have the input variables that are linked with index $i$ be as they are in $x$. For an index $j$ that is not in $\mathcal A(z')$, there would be some way to not use the input variables in $\X$ that are linked with index $j$. This can be done for example by taking an average over the training data or setting the input variables linked to index $j$ to fixed values.

We also make the assumption that $h_x$ is injective. Since the domain of $h_x$ has $2^d$ elements and the codomain has an uncountable size, most functions will satisfy this condition. 

The ways that $h_x$ can be implemented can be best shown with an example.
\paragraph{Example:}
    Let $f: \mathcal X\to\bb R$ be a model that classifies black and white images. Suppose that the input for this model is in the form of a black-and-white image with a width and height of $256$ pixels. Then The input space will be $\mathcal X = [0, 1]^{256\times 256}$ (assuming that each pixel takes values in $[0, 1]$). We will now divide these pixels into 4 superpixels. We will label these superpixels as follows $$\begin{bmatrix}
        1&2\\3&4
    \end{bmatrix}.$$ Now let $x\in\mathcal X$ be an image. We will define $h_x:\set{0, 1}^4\to\mathcal X$ as the simplified image that sends a vector $z'\in\set{0, 1}^4$ to the image that has the values of $x$ in the superpixels linked to $\mathcal A(z')$ and sets the remaining superpixels to black. The codomain of this $h_x$ is the set of binary vectors with $4$ indices. Each index of a vector $z'\in\set{0, 1}^4$ will be linked to a superpixel. The way that $h_x$ works is shown in \autoref{fig:penguin}. The left image is the point $x\in\mathcal X$. All superpixels (highlighted by the black outpline) are as they are in $x$, so the left image is $h_x(\begin{pmatrix}
        1&1&1&1
    \end{pmatrix})$. The right image has the superpixels linked with index $1$ and $4$ filled in with black. This image is therefore $h_x(\begin{pmatrix}
        0&1&1&0
    \end{pmatrix})$.
    \begin{figure}[h]
        \centering
        \includegraphics[width=0.8\linewidth]{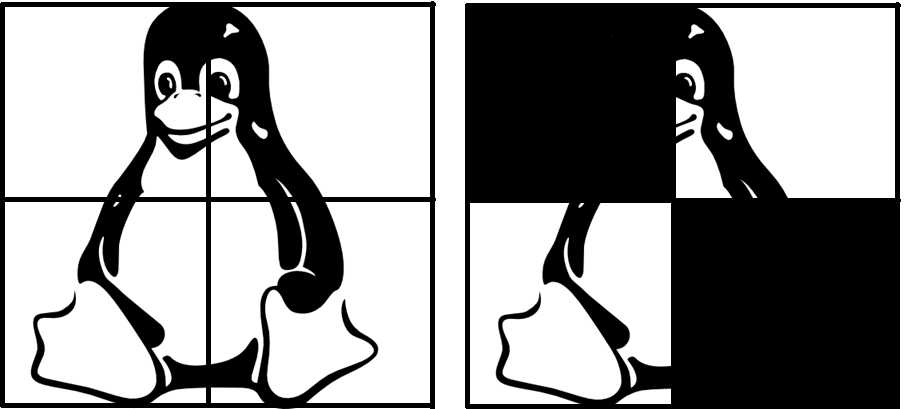}
        \caption{Visualisation of superpixels. The left image is the original image $x$ with the superpixels highlighted. This image will be the output of $h_x(\begin{pmatrix}
            1 & 1 & 1 & 1
        \end{pmatrix})$. The right image is the output $h_x(\begin{pmatrix}
            0 & 1 & 1 & 0
        \end{pmatrix}$).}
        \label{fig:penguin}
    \end{figure}
\\
\\

While it is not always the case, it can be useful for the intuition to view the function $h_x$ as a way to include or exclude certain input variables in a point $x\in\X$.

Now let $S\subseteq[d]$. We define $1_S\in\{0, 1\}^d$ to be the vector such that index $i$ of $1_S$ is 1 if and only if $i\in S$. For example, $d=3$ and $S=\{2, 3\}$ gives us that $$1_S = \begin{pmatrix}
    0\\1\\1
\end{pmatrix}.$$
With this notation, we can define a simplified model.

\begin{definition}[Simplified model]
    Let $f:\X\to\bb R$ be a model, let $x\in\X$ and let $h_x:\set{0, 1}^d\to\X$ be a simplification function. The simplified model $f_x:\mathcal P([d])\to\bb R$ is defined by $$f_x(S) := f(h_x(1_S)).$$
\end{definition}
A simplified model has a lot less information than the model from which it originated, but it does capture all of the information that is obtained from $x$ and the simplification function $h_x$. For certain explanations, this information is enough.

In the following sections, we will assume that the mapping $x\mapsto (h_x, x')$ is fixed. We will also assume that the domain of $h_x$ is $\set{0, 1}^d$. 

This thesis will look at the SHAP-explanation introduced by Lundberg and Lee \cite{NIPS2017_8a20a862}.

\begin{definition}[Shapley values for machine learning]
    Let $f:\mathcal X\to\bb R$ be a model, let $x\in\mathcal X$ and let $h_x:\set{0, 1}^d\to\X$ be the simplification function of $x$. The Shapley values of $f$ in the point $x$ are defined as \begin{equation}
        \shap(f_x)_i = \sum_{\overset{S\subseteq \mathcal A(x')}{i\in S}}\frac{(|S| - 1)!(|\mathcal A(x')| - |S|)!}{|\mathcal A(x')|!}\bric{f_x(S) - f_x(S\setminus\set i)}
    \end{equation}
    for $i\in[d]$.
\end{definition}
In this definition, $\shap$ has the input $f_x$ instead of $(f, x)$, because the Shapley values are only dependent on $f_x$ and not on the behavior of $f$ on values that are not attained by $h_x$.

Intuitively, we can see the value $f_x(S) - f_x(S\setminus\set i)$ as the contribution of index $i$ in the set $S$. We can then see index $i$ of $\shap(f_x)$ as the average contribution of index $i$ over all sets $S\subseteq[d]$ with $i\in S$.

\subsection{Properties}
The Shapley values are the unique explanation that satisfies certain properties. This will be proved in \autoref{thm:true_shap}. In the following section, we will introduce these properties. In the following section, we will let $\X$ be a subset of $\bb R^n$.

\begin{property}[Local Accuracy]
    We say that an explanation $\phi$ satisfies \textit{local accuracy} if for all $x\in\X$ and all models $f:\mathcal X\to\bb R$, $$\phi_0(f, x) + \sum_{i=1}^d\phi_i(f, x) = f(x),
    \quad\text{where $\phi_0(f, x)=f_x(\emptyset)$.}
    $$
    \label{prop:local_accuracy}
\end{property}
 If an explanation $\phi$ satisfies local accuracy, then summing all of the values in $\phi(f, x)$ and $f_x(\emptyset)$ gives the value $f(x)$. This gives the explanation a sort of normalization to ensure that the values in $\phi(f, x)$ have an upper bound and are easier to interpret.

We also want to ensure that if the data-point $x$ does not include certain input variables, then these input variables are also not important in determining $f(x)$. This is given with the following property. 
\begin{property}[Missingness]
    We say that an explanation $\phi$ satisfies \textit{missingness} if $$x'_i = 0\quad\implies\quad \phi_i(f, x) = 0,$$ for all $x\in\X$ and all $f:\mathcal X\to\bb R$.
    \label{prop:missingness}
\end{property}
This property means that if $x$ misses certain data, for example when an experiment made a failed measurement that causes corrupted data, then that missing data is not important in determining $f(x)$.

Furthermore, for a simplification function $h_x:\set{0, 1}^d\to\bb R$, we want that our explanation treats all indices of the input of $h_x$ the same. 
\begin{property}[Symmetry]
    We say that an explanation $\phi$ is \textit{symmetric} if the following implication holds for all $x\in\X$ and all $f:\mathcal X\to\bb R$. For $i, j\in[d]$, if $$f_x(S\cup\set i) = f_x(S\cup\set j)\quad\text{for all $S\subseteq[d]\setminus\set{i, j}$},$$ then $\phi_i(f, x) = \phi_j(f, x)$.
    \label{prop:symmetry_ll}
\end{property} 
This states that if $i$ and $j$ have the same contribution, then the explanation must attribute them the same value.

It is important to note that, in their paper, Lundberg and Lee do not specifically define from what set $S$ is a subset and they do not define from which sets $i$ and $j$ originate. The formulation above is assumed, because it matches with the formulation of the consistency property that is defined below.

\begin{property}[Consistency]
    We say that $\phi$ is \textit{consistent} if for any two models $f,f':\mathcal X\to\bb R$ and all $x\in\mathcal X$, if $$f'_x(S) - f'_x(S\setminus\{i\})\geq f_x(S) - f_x(S\setminus\{i\})\quad\text{for all $S\subseteq[d]$}$$ then $\phi_i(f', x)\geq\phi_i(f, x)$. 
    \label{prop:consistency}
\end{property}
This property means that if for two simplified models $f_x, f'_x$, if an index $i\in[d]$ has a larger contribution for all $S\subseteq[d]$ in $f'_x$ than in $f'_x$, then index $i$ must be more important in determining $f(x)$.

As stated before, the motivation for looking at these properties is given by the Shapley values. Lundberg and Lee give a claim that is very similar to this statement.
\begin{claim}
    \label{claim:ll_shap}
    Let $f:\X\to\bb R$ be a model, let $x\in\X$ and let $h_x:\set{0, 1}^d\to\X$ be a simplification function. There is a unique explanation $\phi(f, x) = \brac{\phi_i(f, x):i\in[d]}$ of $f$ that satisfies Local Accuracy, Missingness and Consistency. 
    For $i\in\{1, \dots, d\}$, this explanation is given by \begin{equation}\phi_i(f, x) =
        \sum_{S\subseteq \mathcal A(x')}\frac{|S|!(d - |S| - 1)!}{d!}\bric{f_x(S) - f_x(S\setminus\{i\})},\label{eq:shap_explanation2}
        \end{equation}
        where we use the convention that if $i\notin S$, then $S\setminus\set i=S$. 
\end{claim}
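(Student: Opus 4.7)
The plan is to reduce this claim to Young's theorem from cooperative game theory, which characterizes the Shapley value on cooperative games as the unique allocation satisfying efficiency, symmetry, and strong monotonicity. To make this reduction, I would first set up a correspondence that attaches a cooperative game to each pair $(f, x)$: the simplified model $f_x:\mathcal{P}([d]) \to \mathbb{R}$ is, up to the irrelevant normalization $f_x(\emptyset)$, a characteristic function on the player set $[d]$, so an explanation $\phi(f, x) \in \mathbb{R}^d$ is precisely a solution concept applied to this game.

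The next step is to translate each explanation axiom into an axiom on the associated game. Local Accuracy becomes the efficiency axiom $\sum_{i=1}^d \phi_i(f, x) = f_x([d]) - f_x(\emptyset)$. Consistency, which compares two models $f, f'$ via the inequalities $f'_x(S) - f'_x(S\setminus\{i\}) \geq f_x(S) - f_x(S\setminus\{i\})$, is a verbatim restatement of Young's strong monotonicity axiom. Missingness pins down the coordinates $i \notin \mathcal{A}(x')$ by a null-player-type argument, using that on $S \subseteq \mathcal{A}(x')$ the index $i$ never contributes. With these three translations in hand, Young's theorem would identify each $\phi_i$ with the Shapley value of the corresponding subgame, giving both existence via the explicit formula and uniqueness.

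The hard part, and the step where I expect the plan to fail, is that Young's theorem requires the symmetry axiom as well, and symmetry is conspicuously absent from the hypotheses here. Lundberg and Lee's position is that consistency implies symmetry, but this implication is far from obvious: consistency only relates $\phi_i$ across different models at a fixed coordinate $i$, whereas symmetry relates $\phi_i$ and $\phi_j$ at a fixed model. The natural route would be to extract symmetry from consistency by comparing a model $f$ with a model $f'$ obtained by swapping the roles of $i$ and $j$, but this only forces $\phi_i(f, x) = \phi_j(f', x)$, not $\phi_i(f, x) = \phi_j(f, x)$ within the same model. Given that the introduction has already flagged a counterexample, I expect the rigorous treatment to confirm that the claim as literally stated is false and that the correct theorem must reintroduce symmetry as a fourth axiom.
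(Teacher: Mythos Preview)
Your diagnosis is correct: the claim as stated is false, and the reason is exactly the one you identify --- consistency constrains each coordinate $\phi_i$ across models but never ties $\phi_i$ to $\phi_j$ within a single model, so symmetry cannot be recovered from it. The paper reaches the same conclusion.

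Where the paper goes further than your proposal is in making the failure concrete rather than heuristic. It constructs an explicit non-Shapley explanation $\psi$ that satisfies local accuracy, missingness, and consistency: fix an enumeration $p_1,\dots,p_k$ of $\mathcal A(x')$, set $S_i=\{p_1,\dots,p_i\}$, and define $\psi_{p_i}(f,x)=f_x(\mathcal A(x')\setminus S_{i-1})-f_x(\mathcal A(x')\setminus S_i)$. This telescopes to give local accuracy, trivially satisfies missingness, and each $\psi_{p_i}$ is a single marginal contribution $f_x(S)-f_x(S\setminus\{p_i\})$ for one particular $S$, so consistency holds term-by-term. A two-player example then shows $\psi$ violates symmetry. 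The paper also observes a second, independent defect: the explicit formula in the claim (with $d!$ in the denominator rather than $|\mathcal A(x')|!$) fails local accuracy whenever $\mathcal A(x')\subsetneq[d]$, so even the existence half of the claim is wrong as written.

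Your translation dictionary (local accuracy $\leftrightarrow$ efficiency, consistency $\leftrightarrow$ strong monotonicity) is exactly the one the paper uses for the corrected theorem, where symmetry is reinstated as a fourth axiom and the sum is restricted to $\mathcal A(x')$ with the appropriate normalization.
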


Lundberg and Lee do not provide a proof for this claim. In \autoref{chapter:discussion}, we will show that this claim is false as currently stated. Furthermore, we will show how the claim can be modified to be correct.

\section{Cooperative games and allocation procedures}
\label{chap:young}
The theorem that the Shapley values are the unique explanation to satisfy the properties listed in the previous section is very similar to a theorem in game theory. This section will introduce this theorem from game theory and give the necessary definitions to understand it. These definitions and the theorem from the following section are all defined and proved by Young \cite{young1985monotonic}.

We will first define a cooperative game. In a cooperative game, we have a set of players that all participate in a game. Their performance in this game gets a score. More formally, we define a cooperative game as follows:
\begin{definition}[Cooperative Game]
    A \textit{cooperative game} with \textit{players} $\{1, \dots, n\} = [n]$ is a function $\nu: \mathcal P([n])\to\bb R$ such that $\nu(\emptyset) = 0$. Here, $\mathcal P([n])$ is used to denote the powerset of the set $[n]$. We call $\nu(S)$ the \textit{value} of $S$ for $S\subseteq[n]$. 
\end{definition}
We can interpret $\nu$ as the function that gives a score to a set of players. Each player may choose if they want to cooperate in the game. If $S$ is the set of players that choose to cooperate, then $\nu(S)$ is the score that they will receive after the game.

\textit{Remark:} In this definition, we might also take any finite set $X$ as our players. This is equivalent, because in a cooperative game, we never use any properties of elements in $[n]$, but only use the finiteness of this set.

\begin{definition}[Allocation Procedure]
    An \textit{allocation procedure} is a funcion $\phi$ that maps a cooperative game $\nu$ to a vector $\phi(\nu)\in\bb R^n$. We denote $\phi_i(\nu)$ to be the $i$-th index of $\phi(\nu)$. We often call $\phi(\nu)$ a \textit{solution concept} to $\nu$. 
\end{definition}
An allocation procedure can be seen as the procedure to attribute a score to each player in a way such that the score is equivalent to the contribution of a player.

One of these allocation procedures is the Shapley value. The definition of the Shapley value is as follows:
\begin{definition}[Shapley values for cooperative games]
    Let $\nu:[n]\to\bb R$ be a cooperative game. The allocation procedure $\phi$ defined by
    \begin{equation}
    \phi_i(\nu) := \sum_{S\subseteq[n]\colon i\in S}\frac{(|S| - 1)! (n - |S|)!}{n!}\nu^i(S) \label{eq:shap}
    \end{equation}
    is called the \textit{Shapley value}. Here $\nu^i(S)$ is the marginal contribution of $i$ in $S$ that is defined as follows \[
\nu^i(S) = \begin{cases}
    \nu(S) - \nu(S\setminus\{i\}),\quad &i\in S\\
    \nu(S\cup\set i) - \nu(S),\quad&i\notin S
\end{cases}
\] for $S\subseteq[n]$.
\end{definition}
The Shapley value of a player $i$ can be seen as the average contribution of player $i$ over all possible combinations of players that include player $i$.

Just like with explanations for machine learning models, the Shapley values are the unique allocation procedure that satisfy certain properties. These properties will be defined below. 

In an allocation procedure, it is often desired that the indices say something about the contribution of their respective players. For the result that we want to use in this paper, Young introduces the following properties. 
\begin{property}[Efficiency]
    We say that $\phi(\nu)$ is \textit{efficient} if \begin{equation}
    \sum_{i=1}^n\phi_i(\nu) = \nu([n]).\label{eq:efficiency}    
    \end{equation}
    \label{prop:efficiency}
\end{property}
This property can be seen as a way to normalize the values of $\psi(\nu)$ to ensure that they stay bounded and sum to $\nu([n])$.

In a cooperative game, we want to treat all players the same, regardless of their position. 
\begin{property}[Symmetry]
    We say that $\phi$ is \textit{symmetric} if for all permutations $\sigma:[n]\to[n]$ we have \begin{equation}\phi_{\sigma (i)}(\sigma(\nu)) = \phi_i(v).\end{equation}
    \label{prop:symmetry2}
\end{property}
The cooperative game $\sigma(\nu)$ is defined as $\sigma(\nu)(S) = \nu(\sigma(S))$.
This property implies that, if we swap the position of our players, then their contributions stay the same, meaning that our allocation procedure treats all players the same. 

\begin{property}[Strong monotonicity]
    We say that $\phi(\nu)$ is strongly monotonic if for any two cooperative games $\nu, \mu:\mathcal P([n])\to\bb R$ and $i\in[n]$, if
    \begin{equation*}
    \nu^i(S)\geq\mu^i(S)\quad\text{for all $S\subseteq[n]$,} \label{eq:monotonicity}
    \end{equation*}
    then $\phi_i(\nu)\geq\phi_i(\mu)$.
    \label{prop:strong_monotonicity}
\end{property}
Strong monotonicity gives us that if some player $i\in[n]$ has a bigger marginal contribution for all possible player-sets $S\subseteq[n]$ in a game $\nu$ than in a game $\mu$, then player $i$ must have a bigger overall contribution to the game $\nu$ than the game $\mu$.

All of the properties listed above can be used in the following theorem:
\begin{theorem}
    The \textit{Shapley value} is the unique allocation procedure that is symmetric, strongly monotonic and efficient.
    \label{thm:shapley}
\end{theorem}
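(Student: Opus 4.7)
The plan is two-fold: first verify that the Shapley value satisfies efficiency, symmetry, and strong monotonicity, and then show uniqueness by expressing every cooperative game in a canonical basis and arguing by induction on the size of its support.

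For the easy direction, I would rewrite the Shapley formula in its permutation form, $\shap_i(\nu) = \frac{1}{n!}\sum_\pi[\nu(P^\pi_i \cup \{i\}) - \nu(P^\pi_i)]$, where $P^\pi_i$ denotes the set of players preceding $i$ in the permutation $\pi$. Efficiency then follows because for each fixed $\pi$ the inner sum over $i$ telescopes to $\nu([n]) - \nu(\emptyset) = \nu([n])$. Symmetry is transparent from the formula since permuting the labels of players just reindexes the sum. Strong monotonicity is immediate, since $\shap_i(\nu)$ is a nonnegative linear combination of marginal contributions of $i$.

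For uniqueness, let $\phi$ be any allocation procedure satisfying the three axioms. The key observation is that the unanimity games $u_T$, defined for $\emptyset \neq T \subseteq [n]$ by $u_T(S) = 1$ if $T \subseteq S$ and $0$ otherwise, form a basis of the vector space of cooperative games on $[n]$. Thus every game admits a unique representation $\nu = \sum_{T \in \mathcal T} c_T\, u_T$ with all $c_T \neq 0$, and I would induct on $k = |\mathcal T|$. The base case $k = 0$ is the zero game: symmetry forces $\phi_i(0) = \phi_j(0)$ for all $i, j$ and efficiency then gives $\phi_i(0) = 0 = \shap_i(0)$. For the inductive step, let $D = \bigcap_{T \in \mathcal T} T$. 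If $i \notin D$, pick $T_0 \in \mathcal T$ with $i \notin T_0$ and set $\nu' = \nu - c_{T_0} u_{T_0}$; since $i \notin T_0$ forces $(u_{T_0})^i(S) = 0$ for every $S$, the marginal contributions of $i$ agree for $\nu$ and $\nu'$. Applying strong monotonicity in both directions gives $\phi_i(\nu) = \phi_i(\nu')$, the inductive hypothesis gives $\phi_i(\nu') = \shap_i(\nu')$, and the same argument applied to $\shap$ yields $\shap_i(\nu) = \shap_i(\nu')$, so $\phi_i(\nu) = \shap_i(\nu)$. For $i \in D$, every $T \in \mathcal T$ is fixed (as a set) by any permutation of $D$, hence $\nu$ is invariant under such permutations; the symmetry axiom then forces $\phi_i(\nu)$ to be constant on $D$, and combining efficiency with the values already pinned down for $i \notin D$ determines this constant to equal $\shap_i(\nu)$.

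The main obstacle I expect is the second half of the inductive step, for $i \in D$: translating the set-invariance of $\nu$ under permutations of $D$ into the symmetry condition $\phi_{\sigma(i)}(\sigma(\nu)) = \phi_i(\nu)$ requires being precise about conventions, and one must separately check the edge cases $D = \emptyset$ (covered entirely by the first part) and $D = [n]$ (which collapses $\mathcal T$ to $\{[n]\}$ and is settled by a single symmetry-plus-efficiency calculation). Verifying that the unanimity games actually form a basis, via an inclusion–exclusion inversion formula $c_T = \sum_{S \subseteq T}(-1)^{|T|-|S|}\nu(S)$, is a preliminary lemma I would state explicitly before invoking the induction.
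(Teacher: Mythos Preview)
Your proposal is correct and is precisely Young's original argument, which is all the paper invokes: the paper does not give its own proof of this theorem but simply cites Young (1985). Your write-up therefore goes beyond the paper by spelling out the unanimity-game induction in full, and the sketch is sound, including the edge cases $D=\emptyset$ and $D=[n]$ that you flag.
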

\begin{proof}
    The proof of this theorem is given in Young (1985) \cite{young1985monotonic}.
\end{proof}

\chapter{Discussion on Lundberg and Lee}
\label{chapter:discussion}
In this section, we will discuss the part of the paper by Lundberg and Lee that discusses \autoref{claim:ll_shap}. We will first show that the symmetry property is not redundant by giving a counterexample to the claim, made by Lundberg and Lee, that consistency implies symmetry.
We will also give a discussion about the formulation in the definition of the symmetry and consistency properties for explanations. After this, we will discuss the formulation of \autoref{claim:ll_shap} made by Lundberg and Lee. Finally, we end this section by giving a reformulation of the previously mentioned definitions and the claim.

\section{Counterexample: Consistency does not imply symmetry}
\label{chap:counterexample}
In \cite{NIPS2017_8a20a862}, Lundberg and Lee state that the symmetry property is implied by the consistency property. They give a small proof of this statement, which is false. In this section, we will prove that this statement is false, by giving an explanation that satisfies Local Accuracy, Missingness and Consistency, but does not satisfy symmetry.

As in the previous sections, we will use $\mathcal A(x')$ as the set of active indices of $x'$. Since $\mathcal A(x')$ is finite, we can give an enumeration of this set: $\mathcal A(x') = \set{p_1, p_2, \dots, p_k}$, with $k = |\mathcal A(x')|$. We can now define a sequence of sets $S_{i} := \set{p_1, \dots, p_i}$. We will now define an explanation $\psi$ as follows. Let $j\in[d]$, let $f:\X\to\bb R$ be a model and let $x\in\X$. We define \begin{equation*}
    \psi_{j}(f, x) = \begin{cases}
        f_x(\mathcal A(x')\setminus S_{{i-1}}) - f_x(\mathcal A(x')\setminus S_{i})&\quad \text{if }j = p_i\text{ for some $i\in\set{2, \dots, k}$}\\
        f_x(\mathcal A(x')) - f_x(\mathcal A(x')\setminus S_{1})&\quad \text{if }j = p_1\\
        0&\quad \text{if }j\notin \mathcal A(x')
    \end{cases}
\end{equation*}
In the following lemmas, we will let $\psi$ be the explanation defined above.
\begin{lemma}
    $\psi$ satisfies local accuracy.
\end{lemma}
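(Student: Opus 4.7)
The plan is to directly compute $\phi_0(f,x) + \sum_{j=1}^d \psi_j(f,x)$ and show it equals $f(x)$ by a telescoping argument.

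First I would partition the indices $j \in [d]$ into those lying outside $\mathcal{A}(x')$ (which contribute $0$ by the third case of the definition), and those lying in $\mathcal{A}(x') = \{p_1, \dots, p_k\}$. For the latter, I would reindex the sum over $j$ as a sum over $i \in \{1,\dots,k\}$ via $j = p_i$. This rewrites
\[
\sum_{j=1}^d \psi_j(f,x) = \bigl[f_x(\mathcal{A}(x')) - f_x(\mathcal{A}(x') \setminus S_1)\bigr] + \sum_{i=2}^k \bigl[f_x(\mathcal{A}(x') \setminus S_{i-1}) - f_x(\mathcal{A}(x') \setminus S_i)\bigr].
\]

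Next I would observe that this is a telescoping sum: each $f_x(\mathcal{A}(x') \setminus S_i)$ term for $i \in \{1, \dots, k-1\}$ appears once positively and once negatively and cancels. What remains is $f_x(\mathcal{A}(x')) - f_x(\mathcal{A}(x') \setminus S_k)$. Since $S_k = \{p_1, \dots, p_k\} = \mathcal{A}(x')$, the second term is $f_x(\emptyset)$, so the total sum equals $f_x(\mathcal{A}(x')) - f_x(\emptyset)$.

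Finally, I would add $\phi_0(f,x) = f_x(\emptyset)$ to obtain $f_x(\mathcal{A}(x'))$ and identify this with $f(x)$. For the last identification, note $1_{\mathcal{A}(x')} = x'$ by definition of the active indices, so $f_x(\mathcal{A}(x')) = f(h_x(1_{\mathcal{A}(x')})) = f(h_x(x')) = f(x)$, where the final equality uses condition (b) of the simplification function. I do not expect any real obstacle here; the argument is purely a bookkeeping check, and the only subtle point is remembering that $S_k$ exhausts $\mathcal{A}(x')$ so that the remaining negative term becomes $f_x(\emptyset)$ and cancels with $\phi_0$.
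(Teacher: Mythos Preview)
Your proposal is correct and follows essentially the same telescoping argument as the paper: split off the indices outside $\mathcal A(x')$, reindex the remaining terms by $i\in\{1,\dots,k\}$, telescope using $S_k=\mathcal A(x')$, and then add back $\phi_0(f,x)=f_x(\emptyset)$. If anything, you are slightly more explicit than the paper in justifying $f_x(\mathcal A(x'))=f(x)$ via the defining property $h_x(x')=x$.
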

\begin{proof}
    Let $f:\X\to\bb R$ be a model and let $x\in\X$. We will use the property that $S_{i}\cup \set{{i+1}} = S_{{i+1}}$ for $i\in\set{1, \dots, k-1}$.
    We have that \begin{align*}
        \sum_{i=1}^d\psi_i(f, x) &= f_x(\mathcal A(x')) - f_x(\mathcal A(x')\setminus S_{1}) + \sum_{i=2}^k\brac{f_x(\mathcal A(x')\setminus S_{{i-1}}) - f_x(\mathcal A(x')\setminus S_{i})}\\
        &= f_x(\mathcal A(x')) - f_x(\mathcal A(x')\setminus S_{1}) + f_x(\mathcal A(x')\setminus S_{1}) -  f_x(\mathcal A(x')\setminus S_{k})\\
        &= f_x(\mathcal A(x')) - f_x(\emptyset),
    \end{align*}
    where we make use of the fact that $S_{k}=\mathcal A(x')$ and therefore $\mathcal A(x')\setminus S_{k} = \emptyset$.
    From this, we get that $$\psi_0(f, x) +\sum_{i=1}^d\psi_i(f, x) = f(x).$$
    We conclude that $\psi$ satisfies local accuracy.
\end{proof}
\begin{lemma}
    $\psi$ satisfies missingness.
\end{lemma}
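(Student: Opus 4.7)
The plan is to unpack the definition of missingness and show that it follows immediately from the third case in the piecewise definition of $\psi$. By definition of the active indices, $\mathcal{A}(x') = \{i \in [d] : x'_i = 1\}$, so the condition $x'_j = 0$ in the missingness property is exactly equivalent to $j \notin \mathcal{A}(x')$.

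I would therefore fix arbitrary $f:\X\to\bb R$, $x\in\X$ and $j\in[d]$ with $x'_j=0$. Then $j\notin \mathcal{A}(x')$, so neither of the first two cases in the definition of $\psi_j(f,x)$ applies (those cases require $j = p_i$ for some $i$, which forces $j\in\mathcal{A}(x')$). Only the third case remains, which directly yields $\psi_j(f, x) = 0$. Since $f$, $x$ and $j$ were arbitrary, this establishes the missingness property.

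There is no real obstacle here: the definition of $\psi$ was explicitly set up so that the third branch handles exactly the missing indices, and the proof is essentially a one-line verification. The only subtlety to mention is the equivalence between ``$x'_j=0$'' and ``$j\notin\mathcal{A}(x')$'', which follows immediately from the definition of $\mathcal{A}(x')$ given earlier in \autoref{chapter:2}.
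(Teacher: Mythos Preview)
Your proposal is correct and matches the paper's approach exactly: the paper's entire proof is the single sentence ``This follows directly from the definition of $\psi$,'' and your argument is precisely the unpacking of that sentence. There is nothing to add or change.
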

\begin{proof}
    This follows directly from the definition of $\psi$.
\end{proof}
\begin{lemma}
    $\psi$ satisfies strong consistency.
\end{lemma}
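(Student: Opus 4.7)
The plan is to fix an arbitrary index $j \in [d]$ and split into cases according to the three-branch definition of $\psi_j(f,x)$. In every non-trivial case the key observation is that $\psi_j(f,x)$ is in fact a single marginal contribution of the form $f_x(S) - f_x(S \setminus \{j\})$ for one specific subset $S \subseteq [d]$; once $S$ is identified, Consistency (Property \ref{prop:consistency}) applied to that $S$ immediately gives $\psi_j(f',x) \geq \psi_j(f,x)$.

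The three branches are handled as follows. If $j \notin \mathcal{A}(x')$, then both $\psi_j(f,x)$ and $\psi_j(f',x)$ equal $0$ by definition, so the inequality is trivial. If $j = p_1$, then since $S_1 = \{p_1\}$ we have $\psi_{p_1}(f,x) = f_x(\mathcal{A}(x')) - f_x(\mathcal{A}(x') \setminus \{p_1\})$, which is the marginal contribution of $p_1$ at $S = \mathcal{A}(x')$, so a single application of Consistency at this $S$ suffices. If $j = p_i$ for some $i \in \{2,\dots,k\}$, the crucial set-theoretic identity I will record is
\[
\mathcal{A}(x') \setminus S_i \;=\; \bigl(\mathcal{A}(x') \setminus S_{i-1}\bigr) \setminus \{p_i\},
\]
which holds because $p_i \in S_i \setminus S_{i-1}$. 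Setting $T := \mathcal{A}(x') \setminus S_{i-1}$, this rewrites $\psi_{p_i}(f,x)$ as $f_x(T) - f_x(T \setminus \{p_i\})$, and Consistency at $S = T$ closes the case.

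The main obstacle is not difficulty but bookkeeping: I must be careful that the enumeration $\mathcal{A}(x') = \{p_1,\dots,p_k\}$ and the nested sets $S_i = \{p_1,\dots,p_i\}$ interact correctly with set-difference, so that each piece of $\psi_j$ really does collapse to one marginal contribution rather than a sum or telescoping difference. Note also that I only ever need the Consistency hypothesis at a single, well-chosen $S$ per branch — I never use the full strength of the assumption over all $S \subseteq [d]$ — which makes the argument short but also explains why the explanation $\psi$ can get away with being so asymmetric while still satisfying Consistency.
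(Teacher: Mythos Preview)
Your proposal is correct and follows essentially the same approach as the paper: the same three-way case split on $j$, the same key identity $\mathcal{A}(x')\setminus S_i = (\mathcal{A}(x')\setminus S_{i-1})\setminus\{p_i\}$, and the same idea of invoking Consistency at a single well-chosen $S$ in each branch. Your remark that only one instance of the Consistency hypothesis is needed per branch is a nice observation that the paper does not make explicit.
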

\begin{proof}
    Let $f, f':\X\to\bb R$ be models, let $x\in \X$ and let $j\in [d]$. If $j\notin \mathcal A(x')$, then missingness implies that $\psi_j(f', x)\geq\psi_j(f, x)$, since both $\psi_j(f', x)$ and $\psi_j(f, x)$ are $0$. 
    
    Now suppose that $j\in \mathcal A(x')$. There exists $i\in[k]$ such that $j = p_i$. Now suppose that $$f'_x(S) - f'_x(S\setminus\{p_i\})\geq f_x(S) - f_x(S\setminus\{p_i\}),\quad\text{for all } S\subseteq[d].$$ Now suppose that $i\neq 1$. We have that $(\mathcal A(x')\setminus S_{{i-1}})\setminus\set{p_i} = \mathcal A(x')\setminus S_{i}$. Choosing $S=\mathcal A(x')\setminus S_{{i-1}}$ in our assumption now gives us that \begin{align*}
        f'_x(\mathcal A(x')\setminus S_{{i-1}}) - f'_x(\mathcal A(x')\setminus S_{{i}})&\geq f_x(\mathcal A(x')\setminus S_{{i-1}}) - f_x(\mathcal A(x')\setminus S_{{i}}).
    \end{align*} Using the definition of $\psi$, we see that this is equivalent to saying that $\psi_{i}(f', x)\geq \psi_{i}(f, x)$.

    Finally, if $i=1$, we have that $S_{1} = \set{p_1}$. Choosing $S=\mathcal A(x')$ in the assumption now gives us that $$f'_x(\mathcal A(x')) - f'_x(\mathcal A(x')\setminus S_{{1}})\geq f_x(\mathcal A(x')) - f_x(\mathcal A(x')\setminus S_{{1}}),$$ which again is equal to saying that $\psi_{p_1}(f', x)\geq \psi_{p_1}(f, x)$.

    We conclude that $\psi_j(f', x)\geq \psi_j(f, x)$ for all $j\in[d]$, so $\psi$ satisfies strong consistency.
\end{proof}

We have now found an explanation that satisfies local accuracy, missingness and consistency. We will now show that this explanation does not satisfy symmetry. 

\begin{lemma}
    $\psi$ does not satisfy symmetry.
\end{lemma}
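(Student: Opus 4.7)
The plan is to exhibit a concrete counterexample by exploiting the asymmetric roles that the enumeration $p_1,p_2,\ldots,p_k$ of $\mathcal A(x')$ plays in the definition of $\psi$. Note that for any $x$ with $|\mathcal A(x')|\geq 2$,
\[
\psi_{p_1}(f,x)=f_x(\mathcal A(x'))-f_x(\mathcal A(x')\setminus\{p_1\}),\qquad \psi_{p_2}(f,x)=f_x(\mathcal A(x')\setminus\{p_1\})-f_x(\mathcal A(x')\setminus\{p_1,p_2\}).
\]
These two expressions involve different subsets of $\mathcal A(x')$, so even when the contributions of $p_1$ and $p_2$ are identical in the sense required by symmetry, the two values need not match.

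I would work in the simplest nontrivial case $d=2$, taking $\X=\{0,1\}^2\subseteq\bb R^2$, $x=(1,1)$, and $h_x$ equal to the identity on $\{0,1\}^2$, so $x'=(1,1)$ and $\mathcal A(x')=\{1,2\}$ with $p_1=1$, $p_2=2$. The symmetry hypothesis for the pair $(1,2)$ reduces to the single equality $f_x(\{1\})=f_x(\{2\})$, obtained by taking $S=\emptyset\subseteq[d]\setminus\{1,2\}$. I would then define $f$ on $\{0,1\}^2$ by the values $f(0,0)=0$, $f(1,0)=f(0,1)=1$, $f(1,1)=3$, which is a well-defined model $f:\X\to\bb R$.

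Then I would compute: on the one hand $f_x(\{1\})=f(1,0)=1=f(0,1)=f_x(\{2\})$, so the hypothesis of \Cref{prop:symmetry_ll} for the indices $1$ and $2$ is satisfied. On the other hand, directly from the definition of $\psi$,
\[
\psi_1(f,x)=f_x(\{1,2\})-f_x(\{2\})=3-1=2,\qquad \psi_2(f,x)=f_x(\{1\})-f_x(\emptyset)=1-0=1,
\]
so $\psi_1(f,x)\neq \psi_2(f,x)$, violating symmetry. Combined with the preceding three lemmas, this proves that local accuracy, missingness and consistency do not imply symmetry.

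There is no real obstacle here; the only subtlety is choosing the numerical values of $f$ so that the ``linearity'' relation $f_x(\{1,2\})-f_x(\emptyset)=f_x(\{1\})+f_x(\{2\})$, which is precisely the condition under which $\psi_{p_1}$ and $\psi_{p_2}$ would coincide in the $d=2$ case, is broken. The assignment above clearly fails this identity, which is why it works.
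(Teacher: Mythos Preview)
Your proof is correct and follows essentially the same approach as the paper: the same $d=2$ setup with $\mathcal A(x')=\{1,2\}$, the same values $f_x(\{1,2\})=3$, $f_x(\{1\})=f_x(\{2\})=1$, $f_x(\emptyset)=0$, and the same computation showing $\psi_1=2\neq 1=\psi_2$. One minor slip: in the displayed computation of $\psi_2$ you wrote $f_x(\{1\})$ where your own general formula gives $f_x(\mathcal A(x')\setminus\{p_1\})=f_x(\{2\})$, but since $f_x(\{1\})=f_x(\{2\})=1$ by construction this does not affect the conclusion.
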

\begin{proof}
    Let $d=2$ and let $x\in \X$. Let $h_x:\set{0, 1}^2\to\bb R$ be a simplification function such that $x'=\begin{pmatrix}
        1&1
    \end{pmatrix}^T$. We now have that $\mathcal A(x')=\set{1, 2}$. We can now define $p_1=1$ and $p_2=2$. This means that $S_{1} = \set{1}$ and $S_2=\set{1, 2}$. Now define a model $f:\X\to\bb R$ such that \begin{align*}
        f_x(\set{1, 2}) &= 3\\
        f_x(\set 1) = f_x(\set 2) &= 1\\
        f_x(\emptyset) &= 0.
    \end{align*}
    We can do this, because $h_x$ is injective.
    Now let $i=1, j=2$. Then for all $S\subseteq\set{1, 2}\setminus\set{1, 2}$ (meaning $S=\emptyset)$), we have that $$f_x(S\cup\set1) =f_x(S\cup\set 2).$$ We also find that\begin{align*}
        \psi_1(f, x) = f_x(\mathcal A(x')) - f_x(\mathcal A(x')\setminus S_1) = f_x(\set{1, 2}) - f_x(\set 2) &= 2,\\
        \psi_2(f, x) = f_x(\mathcal A(x')\setminus S_{1}) - f_x(\mathcal A(x')\setminus S_{2}) = f_x(\set2) - f_x(\emptyset) &= 1.
    \end{align*}
    Since $\psi_1(f, x)\neq \psi_2(f, x)$, we conclude that $\psi$ does not satisfy symmetry.
\end{proof}

The mistake that Lundberg and Lee made is right before Equation 9 in their proof \cite{NIPS2017_8a20a862}. They claim that swapping $i$ and $j$ will give the required result, but one can verify that doing so actually requires the symmetry axiom to reach the result in Equation 9.

\label{chap:discussion_ll}
\section{Symmetry}
In this section, we will discuss the formulation of the symmetry property by Lundberg and Lee. This formulation is given in their supplementary material. This definition is not very complete, so in this thesis, the details are assumed from their formulation and from the formulation of the consistency property, which Lundberg and Lee do define with more detail.

The problem with this definition is illustrated by the following lemma:
\begin{lemma}
    There is no explanation that satisfies local accuracy, symmetry and missingness.
\end{lemma}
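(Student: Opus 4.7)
The plan is to exhibit a single model and point for which the three axioms force a contradiction on some coordinate of $\phi(f,x)$. The cleanest route is to use a setting in which missingness kills one coordinate, symmetry equates it to another coordinate, and then local accuracy forces that other coordinate to be nonzero. The smallest instance where all three interact nontrivially is $d=2$.

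Concretely, I would fix $d=2$, pick any $x\in\X$ and a simplification function $h_x:\set{0,1}^2\to\X$ with simplified input $x'=\begin{pmatrix}1&0\end{pmatrix}^T$, so $h_x(1,0)=x$. Injectivity is guaranteed by choosing the three remaining images $h_x(0,0),h_x(0,1),h_x(1,1)$ to be distinct points of $\X$ different from $x$; since $\X\subseteq\bb R^n$ has plenty of room, this costs nothing. Then I would pick $f:\X\to\bb R$ that realizes $f_x(\emptyset)=0$, $f_x(\set 1)=f_x(\set 2)=1$, and $f_x(\set{1,2})$ arbitrary (say $0$). This is possible precisely because $h_x$ is injective, so the four values $f_x(S)$ can be prescribed independently.

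Now I would read off what each axiom says for this pair $(f,x)$. Missingness applied to $i=2$ (since $x'_2=0$) gives $\phi_2(f,x)=0$. Symmetry applied to $i=1, j=2$ is triggered because the only admissible $S\subseteq[2]\setminus\set{1,2}$ is $S=\emptyset$ and we have arranged $f_x(\set 1)=f_x(\set 2)$; it therefore yields $\phi_1(f,x)=\phi_2(f,x)=0$. Local accuracy gives
\[
\phi_0(f,x)+\phi_1(f,x)+\phi_2(f,x)=f(x),
\]
and since $\phi_0(f,x)=f_x(\emptyset)=0$ and $f(x)=f(h_x(x'))=f_x(\set 1)=1$, this reduces to $\phi_1(f,x)+\phi_2(f,x)=1$. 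Combining the three gives $0=\phi_1(f,x)=1-\phi_2(f,x)=1$, a contradiction. Hence no explanation can satisfy all three axioms.

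The one subtle point, which is really the only thing to check carefully, is that the simplification function and the model $f$ can be chosen independently: both are inputs to the definition of an explanation, so the argument just needs to produce one consistent pair $(f,x,h_x)$. Injectivity of $h_x$ together with the freedom to pick $f$ pointwise on the image of $h_x$ handles this, so there is no genuine obstacle. The takeaway is that the symmetry property as formulated by Lundberg and Lee, unlike the consistency property used in \autoref{chap:counterexample}, is not compatible with missingness on points whose simplified input has inactive coordinates, which motivates the reformulation in the next section.
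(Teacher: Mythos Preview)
Your proof is correct and follows essentially the same route as the paper: both take $d=2$ with $x'=(1,0)^T$, set $f_x(\emptyset)=0$ and $f_x(\set 1)=f_x(\set 2)=1$, then use missingness to get $\phi_2=0$, symmetry (triggered since the only admissible $S$ is $\emptyset$) to get $\phi_1=\phi_2$, and local accuracy to force $\phi_1+\phi_2=1$, yielding the contradiction. The only cosmetic difference is the irrelevant value of $f_x(\set{1,2})$ (you take $0$, the paper takes $2$), and you are slightly more explicit about why $h_x$ and $f$ can be chosen independently.
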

\begin{proof}
    Suppose that $\phi$ is an explanation that satisfies local accuracy, symmetry and missingness. Now let $x\in\X$ and let $h_x$ be a simplification function such that $x'=\begin{pmatrix}
        1&0
    \end{pmatrix}$. We know that $h_x$ is injective, since it is a simplification function. Now define a model $f:\mathcal X\to\bb R$ such that \begin{align*}
        f_x(\set{1, 2}) &= 2\\
        f_x(\set 1) = f_x(\set 2) &= 1\\
        f_x(\emptyset) &= 0.
    \end{align*}
    Since $x'_2=0$, we must have that $\phi_2(f, x)=0$. Symmetry now gives us that $\phi_1(f, x)=0$. We now have that $$\phi_0(f, x) + \phi_1(f, x) + \phi_2(f, x) = 0,$$ which is a contradiction to the assumption that $\phi$ satisfies local accuracy. 
\end{proof}

As we will see in a later section, we do want to have a symmetry definition, but we need to provide a different formulation than the one implied by Lundberg and Lee. The main issue here is that we look at all $S\subseteq[d]$ and $i, j\in[d]$. In \autoref{chapter:reformulation}, we will give a reformulation of this definition that eliminates this problem.

\section{Consistency}
The formulation of the consistency property by Lundberg and Lee can also be discussed. A problem with this property is caused by the condition. Lundberg and Lee give a condition about all $S\subseteq [d]$. This condition is in contradiction with the philosophy of Lundberg and Lee that all indices $i\in[d]\setminus\mathcal A(x')$ should not be important. It is therefore illogical to set a condition on the behaviour of a model $f$ on points that are not important.

\section{Values of the Shapley values}
We will now look at the Shapley values that Lundberg and Lee give in \autoref{claim:ll_shap}. Suppose that $d=3$ and $x'=\begin{pmatrix}
    1&1&0
\end{pmatrix}^T$. Now take any injective $h_x$ and define a model $f$ such that $$f_x(S) = \begin{cases}
    0,&\quad S = \emptyset\\
    1,&\quad \text{$S\neq \mathcal A(x')$ and $S\neq\emptyset$}\\
    2,&\quad S = \mathcal A(x').
\end{cases}$$The Shapley values, as formulated by Lundberg and Lee, are now given by \begin{align*}
    \phi_1(f, x)& = \sum_{S\subseteq\set{1, 2}}\frac{|S|!(2 - |S|)!}{6}[f_x(S) - f_x(S\setminus\set 1)] = \frac{1}{2},\\
    \phi_2(f, x) &= \sum_{S\subseteq\set{1, 2}}\frac{|S|!(2 - |S|)!}{6}[f_x(S) - f_x(S\setminus\set 2)] = \frac{1}{2},\\
    \phi_3(f, x) &= 0.
\end{align*}
We now have that $\phi_0(f, x) + \phi_1(f, x) + \phi_2(f, x) + \phi_3(f, x) = 1\neq 2$. We can conclude that the Shapley values as defined by Lundberg and Lee do not satisfy local accuracy, and therefore that \autoref{claim:ll_shap} is false.

\section{Reformulation}
\label{chapter:reformulation}
Because of the arguments given in the previous above, we will discuss a reformulation of certain the properties defined by Lundberg and Lee. In further sections, we will also see that the Shapley values are the unique explanation to satisfy a combination of these reformulated properties and properties defined by Lundberg and Lee.

The first new property is a reformulation of the symmetry property. This is defined as follows:
\begin{property}[Restricted Symmetry]
    We say that an explanation $\phi$ satisfies \textit{restricted symmetry} if the following implication holds for all $f:\mathcal X\to\bb R$ and all $x\in\mathcal X$. For $i, j\in \mathcal A(x')$, if $$f_x(S\cup\set i)=f_x(S\cup\set j)\quad\text{for all $S\subseteq \mathcal A(x')\setminus\set{i, j}$,}$$ then $\phi_i(f, x) = \phi_j(f, x)$.
    \label{prop:res_symmetry}
\end{property}
The main difference between symmetry and restricted symmetry for explanations is that $S$, $i$ and $j$ are only related to $\mathcal A(x')$ instead of $[d]$.

The second property that we will define is a reformulation of the consistency property. This is defined as follows:
\begin{property}[Restricted Consistency]
    We say that an explanation $\phi$ is \textit{consistent} if for all models $f, f':\mathcal X\to\bb R$ and all $x\in\mathcal X$, if $$f'_x(S) - f'_x(S\setminus\{i\})\geq f_x(S) - f_x(S\setminus\{i\}),\quad\text{for all } S\subseteq \mathcal A(x'),$$ then $\phi_i(f', x)\geq\phi_i(f, x)$. 
    \label{prop:res_consistency}
\end{property}
The difference between consistency and restricted consistency is that the condition of restricted consistency only needs to hold for all $S\subseteq \mathcal A(x')$ instead of all $S\subseteq[d]$.

The term `restricted' in these definitions refers to the fact that we restrict $S, i$ and $j$ to the set $\mathcal A(x')\subseteq[d]$.

These two reformulations, we can look at a theorem that is very similar to \autoref{claim:ll_shap}.
\begin{restatable}{theorem}{trueshap}
\label{thm:true_shap}
Let $f:\mathcal X\to\bb R$ be a model, let $x\in\mathcal X$ and let $h_x:\set{0, 1}^d\to\X$ be the simplification function corresponding to $x$.  There is a unique explanation $\shap(f_x) = \brac{\shap(f_x)_i:i\in[d]}$ of $f$ that satisfies \textit{\hyperref[prop:local_accuracy]{local accuracy}, \hyperref[prop:missingness]{missingness}, \hyperref[prop:res_symmetry]{restricted symmetry} and \hyperref[prop:res_consistency]{restricted consistency}}.
    For $i\in\{1, \dots, d\}$, this explanation is given by \begin{equation}\shap(f_x)_i =
        \sum_{\overset{S\subseteq \mathcal A(x')}{i\in S}}\frac{(|S| - 1)!(|\mathcal A(x')| - |S|)!}{|\mathcal A(x')|!}\bric{f_x(S) - f_x(S\setminus\{i\})}.\label{eq:shap_explanation}
        \end{equation}
\end{restatable}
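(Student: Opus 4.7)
The proof splits into existence and uniqueness. For existence, I would verify that the explicit formula for $\shap(f_x)$ in \eqref{eq:shap_explanation} satisfies all four properties. Missingness is immediate, since the indexing set $\set{S\subseteq\mathcal A(x'):i\in S}$ is empty when $i\notin\mathcal A(x')$. Local accuracy follows by a telescoping-sum argument equivalent to the classical efficiency proof for Shapley values, applied to the cooperative game with player set $\mathcal A(x')$ derived from $f_x$. Restricted symmetry and restricted consistency are read off directly: both the hypothesis and the formula are expressed purely in terms of the values of $f_x$ on subsets of $\mathcal A(x')$, so the corresponding properties of the game-theoretic Shapley value transfer verbatim.

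For uniqueness, the key idea is to transport an arbitrary explanation $\phi$ satisfying the four properties to an allocation procedure on cooperative games with player set $\mathcal A(x')$, and then invoke Young's theorem (\autoref{thm:shapley}). Fix $x\in\mathcal X$ and write $k=|\mathcal A(x')|$. Given any cooperative game $\nu:\mathcal P(\mathcal A(x'))\to\bb R$, the injectivity of $h_x$ makes the points $\set{h_x(1_S):S\subseteq\mathcal A(x')}$ distinct, so there exists a model $f_\nu:\mathcal X\to\bb R$ with $(f_\nu)_x(S)=\nu(S)$ for every $S\subseteq\mathcal A(x')$. Define $\phi^*_i(\nu):=\phi_i(f_\nu,x)$ for $i\in\mathcal A(x')$. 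This is well-defined: if $f$ and $f'$ are two such models, restricted consistency applied with the inequality in both directions forces $\phi_i(f,x)=\phi_i(f',x)$.

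I would then check that $\phi^*$ satisfies Young's three axioms. Efficiency follows from local accuracy, missingness (which kills all indices outside $\mathcal A(x')$), and the identity $\phi_0(f_\nu,x)=(f_\nu)_x(\emptyset)=\nu(\emptyset)=0$. Strong monotonicity for $\phi^*$ translates directly from restricted consistency for $\phi$, since the conditioning sets $S\subseteq\mathcal A(x')$ match exactly. By Young's theorem, $\phi^*$ must coincide with the Shapley value of \eqref{eq:shap}, which after unfolding the construction gives precisely the right-hand side of \eqref{eq:shap_explanation} for every $i\in\mathcal A(x')$. Missingness handles all remaining indices, so $\phi(f,x)=\shap(f_x)$ on every coordinate.

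The main obstacle is reconciling \emph{restricted symmetry} with Young's symmetry axiom (\autoref{prop:symmetry2}). Restricted symmetry only supplies ``equal treatment of equals'' on $\mathcal A(x')$, whereas \autoref{prop:symmetry2} demands invariance under arbitrary permutations of the player set. I expect to resolve this either by (i) noting that Young's argument in fact only needs equal treatment of equals, so a marginally weaker version of \autoref{thm:shapley} still applies, or (ii) deriving permutation symmetry of $\phi^*$ from equal treatment of equals together with efficiency and strong monotonicity. This is the step where the fixed labelling induced by $h_x$ interacts nontrivially with the symmetry needed to apply Young's theorem, and it will require the most care.
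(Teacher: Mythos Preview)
Your proposal is correct and follows essentially the same route as the paper: verify the four properties directly for existence, then for uniqueness pass to an allocation procedure on games with player set $\mathcal A(x')$ (the paper formalises your well-definedness step as a ``constant on inducing'' lemma derived from restricted consistency) and invoke \autoref{thm:shapley}. The symmetry obstacle you flag is exactly the delicate point, and the paper resolves it via your option~(ii): it proves in a separate lemma that, in the presence of strong monotonicity, equal treatment of equals (``new symmetry'') is equivalent to Young's permutation symmetry, so $\phi^*$ inherits full symmetry and Young's theorem applies as stated.
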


We use the convention that if $i\notin S$, then $S\setminus\set i=S$.
This theorem is very similar to \autoref{claim:ll_shap}. The differences are the following. The conditions include restricted symmetry and restricted consistency instead of consistency. The Shapley value is also changed. The new value was determined by looking at other literature about the Shapley values in machine learning \cite{teneggi2022shap, aas2021explaining}. This theorem says that the Shapley values are the only explanation that satisfies local accuracy, missingness, restricted symmetry and restricted consistency.

\chapter{Game-theoretic characterisation}
\label{chapter:proof_theorem_1}
In this chapter, we will give a proof of \autoref{thm:true_shap}.
We will prove this theorem by creating a cooperative game from a machine learning model and using \autoref{thm:shapley} from Young \cite{young1985monotonic}. In this chapter, we will assume that $\mathcal X\subseteq\bb R^n$ is a fixed subset and that the mapping $x\mapsto(h_x, x')$ for $x\in\mathcal X$ is fixed.

\section{Induced models and cooperative games}
Let $f:\X\to\bb R$ be a model and let $x\in\X$. We can use this model to define a cooperative game. The players of this game will be $\mathcal A(x')$ and the game will be defined as $\nu_{f_x}:\mathcal P(\mathcal A(x'))\to\bb R$ with \begin{equation}
    \nu_{f_x}(S) = f_x(S) - f_x(\emptyset).
\end{equation} We can see that this is indeed a cooperative game, because $\nu_{f_x}(\emptyset) = 0$ per definition. We will call $\nu_{f_x}$ the cooperative game induced from $f_x$. With this definition, we have defined a cooperative game from a model.

We have now induced a cooperative game from a model, but we also want to do the inverse. To do this, let $x\in\X$ and let $\nu:\mathcal P(\mathcal A(x'))\to\bb R$ be a cooperative game. We can now define a model $f^\nu$ as follows:
$$f^\nu(y) = \begin{cases}
    \nu(S),\quad &\text{$y =h_x(1_S)$ for some $S\subseteq\mathcal A(x')$}\\
    0, \quad &\text{otherwise},
\end{cases}$$
for $y\in\X$.
Since $h_x$ is injective, we see that this construction is well defined. 
We will call the model $f^\nu$ the model induced from $\nu$.
We can see that, since $\nu(\emptyset) = 0$, that $\nu_{f_x^\nu} = \nu$. Furthermore, we have that for all $S\subseteq\mathcal A(x')$ that $$f_x(S) = f^{\nu_{f_x}}_x(S).$$

We finally want to create a correspondence between explanations and allocation procedures. Let $\psi$ be an allocation procedure. We can use this allocation procedure to define an explanation $\phi$ that satisfies missingness as follows: $$\phi_i(f, x) = \begin{cases}
    \psi(\nu_{f_x})&\quad\text{if $i\in\mathcal A(x')$}\\
    0&\quad\text{otherwise}.
\end{cases}$$

Now suppose that $\phi$ is an explanation. We cannot immediately define an allocation procedure from $\phi$. To do this, we need to introduce a new definition. For this definition, we also need to define an equivalence relation. Let $f, f':\mathcal X\to\bb R$ be models and let $x\in\mathcal X$. We will say $f_x\sim f'_x$ if $\nu_{f_x}(S) = \nu_{f'_x}(S)$ for all $S\subseteq \mathcal A(x')$.
\begin{property}[Constant on inducing]
    Let $\phi$ be an explanation. We call $\phi$ \textit{constant on inducing} if the following implication holds for all models $f, f':\mathcal X\to\bb R$ and all $x\in\mathcal X$. If $f_x\sim f'_x$, then $\phi(f, x) = \phi(f', x)$.
    \label{prop:constant_on_inducing}
\end{property}
This property means that for an explanation $\phi$ and a model $f$, that $\phi(f, x)$ is solely determined by the cooperative game $\nu_{f_x}$. 

Now suppose that $\phi$ is constant on inducing. Using this property, we can define an allocation procedure $\psi^x$ on cooperative games with players $\mathcal A(x')$ for some $x\in\X$, with $x'$ the simplified input of $x$, as follows: $$\psi_i^x(\nu) = \phi_i(f^\nu, x)\quad\text{for $i\in\mathcal A(x')$}.$$ Because we assumed that $\phi$ is constant on inducing, we get that for all models $f:\X\to\bb R$ and all $x\in\X$ that $\psi^x_i(\nu_{f_x})=\phi_i(f, x)$ for $i\in\mathcal A(x')$.

Note that we write a superscript $x$ in $\psi^x$. This is, because $\phi(f, x)$ might still have a dependency on $x$.

\section{Correspondence of properties}
In the following section, we will prove equivalence of the properties of allocation procedures and the properties of explanations. In this section, we will assume that $\X\subseteq\bb R^n$. We will also assume that the map $x\mapsto(h_x, x')$ is fixed. Finally, given an explanation $\phi$ that is constant on inducing, we will denote $\psi^x$ to be the allocation procedure obtained from $\phi$ as defined in the previous section ($\psi^x_i(\nu)=\phi_i(f^\nu, x)$.

We will give some lemmas that give a correspondence of the properties of cooperative games and allocation procedures, and models and explanations.
\begin{lemma}
    Let $\phi$ be an explanation that satisfies restricted consistency. Then $\phi$ is constant on inducing.
    \label{lemma:res_consistency->constant_on_inducing}
\end{lemma}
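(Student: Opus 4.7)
The plan is to unpack the hypothesis $f_x \sim f'_x$, which by definition means $\nu_{f_x}(S) = \nu_{f'_x}(S)$ for every $S \subseteq \mathcal A(x')$, and translate this into an equality of marginal contributions that can be fed into restricted consistency in both directions.

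First I would observe that, by the definition of the induced cooperative game, $\nu_{f_x}(S) = f_x(S) - f_x(\emptyset)$, so $f_x \sim f'_x$ is equivalent to the statement that $f_x(S) - f'_x(S) = f_x(\emptyset) - f'_x(\emptyset)$ for all $S \subseteq \mathcal A(x')$, i.e.\ the models $f_x$ and $f'_x$ differ by a constant on $\mathcal P(\mathcal A(x'))$. From this, for any index $i \in [d]$ and any $S \subseteq \mathcal A(x')$, the subset $S \setminus \{i\}$ is again contained in $\mathcal A(x')$, and the common additive constant cancels in the difference, giving
\begin{equation*}
    f_x(S) - f_x(S \setminus \{i\}) = f'_x(S) - f'_x(S \setminus \{i\}).
\end{equation*}

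Now I would apply restricted consistency twice. The equality above implies both $f'_x(S) - f'_x(S\setminus\{i\}) \geq f_x(S) - f_x(S\setminus\{i\})$ and the reverse inequality, for every $S \subseteq \mathcal A(x')$. Restricted consistency (\autoref{prop:res_consistency}) then yields $\phi_i(f', x) \geq \phi_i(f, x)$ and $\phi_i(f, x) \geq \phi_i(f', x)$, so $\phi_i(f, x) = \phi_i(f', x)$. Since this holds for every $i \in [d]$, we conclude $\phi(f, x) = \phi(f', x)$, which is exactly the property of being constant on inducing.

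There is not really a hard obstacle here; the proof is essentially a bookkeeping argument. The only subtlety worth flagging is making sure that restricted consistency, whose hypothesis quantifies only over $S \subseteq \mathcal A(x')$ rather than $S \subseteq [d]$, still applies to every index $i \in [d]$ (including those outside $\mathcal A(x')$). This is fine, because the property is phrased per-index, and the marginal-contribution equality we derive is valid for every $i \in [d]$ regardless of whether $i$ is active.
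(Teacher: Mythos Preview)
Your argument is correct and follows essentially the same route as the paper: unpack $f_x\sim f'_x$ to see that $f_x$ and $f'_x$ differ by a constant on $\mathcal P(\mathcal A(x'))$, deduce equality of all marginal contributions, and then apply restricted consistency in both directions to get $\phi_i(f,x)=\phi_i(f',x)$. The paper packages the ``apply the inequality twice'' step as a preliminary observation attributed to Young, but the substance is identical.
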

\begin{proof}
    For this proof, we will make use of an observation made by Young \cite[70]{young1985monotonic}. This observation is that if $\phi$ satisfies restricted consistency, then the following implication holds. Let $f, f':\mathcal X\to\bb R$ be two models and let $x\in\mathcal X$. We now have that if $$\text{$f'_x(S) - f'_x(S\setminus\set i)= f_x(S) - f_x(S\setminus\set i)$ for all $S\subseteq \mathcal A(x')$},$$ then $\phi_i(f', x) = \phi_i(f, x).$ This statement follows directly from the definition of $\phi$ satisfying restricted consistency.
    
    Now assume that $f_x\sim f'_x$. This means that $\nu_{f_x}(S) = \nu_{f'_x}(S)$ for all $S\subseteq \mathcal A(x')$. This means that $f_x(S) - f_x(\emptyset) = f'_x(S) - f_x'(\emptyset)$ for all $S\subseteq \mathcal A(x')$. We now have that \begin{align*}
        f_x(S) - f_x(S\setminus\set i) &= f_x(S) - f_x(\emptyset) - f_x(S\setminus\set i) + f_x(\emptyset)\\
        &= f'_x(S) - f'_x(\emptyset) - f'_x(S\setminus\set i) + f'_x(\emptyset)\\
        &= f'_x(S) - f'_x(S\setminus\set i),
    \end{align*} for all $S\subseteq \mathcal A(x')$. From this, we can conclude that $\phi_i(f, x) = \phi_i(f', x)$. We have now proven that $\phi$ is constant on inducing.
\end{proof}

\begin{lemma}
    Let $\phi$ be an explanation that is constant on inducing and satisfies missingness. The following equivalence holds: $$\text{$\phi$ satisfies local accuracy}\quad\iff\quad\text{$\psi^x$ is efficient for all $x\in\X$.}$$
    \label{lemm:loc<->eff}
\end{lemma}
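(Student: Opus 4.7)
The plan is to unfold both sides of the equivalence using the definitions and the correspondence $\psi^x_i(\nu_{f_x}) = \phi_i(f, x)$ for $i\in\mathcal A(x')$, using missingness in both directions to extend (or restrict) the summation between $\mathcal A(x')$ and $[d]$. The key numerical identity I will use throughout is that $x = h_x(x') = h_x(1_{\mathcal A(x')})$, so that $f_x(\mathcal A(x')) = f(x)$ for any model $f$.

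For the ($\Rightarrow$) direction, assume $\phi$ satisfies local accuracy. Fix $x\in\X$ and an arbitrary cooperative game $\nu$ on players $\mathcal A(x')$; I need to show $\sum_{i\in\mathcal A(x')}\psi^x_i(\nu) = \nu(\mathcal A(x'))$. My strategy is to apply local accuracy to the induced model $f^\nu$ at the point $x$. Three observations carry the argument: (i) $\phi_0(f^\nu, x) = f^\nu_x(\emptyset) = \nu(\emptyset) = 0$; (ii) $f^\nu(x) = f^\nu(h_x(1_{\mathcal A(x')})) = \nu(\mathcal A(x'))$; and (iii) missingness collapses $\sum_{i=1}^d \phi_i(f^\nu,x)$ to $\sum_{i\in\mathcal A(x')}\phi_i(f^\nu,x)$, which by the very definition of $\psi^x$ equals $\sum_{i\in\mathcal A(x')}\psi^x_i(\nu)$. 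Stringing these together yields exactly efficiency.

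For the ($\Leftarrow$) direction, assume $\psi^x$ is efficient for every $x\in\X$. Fix an arbitrary model $f$ and point $x\in\X$, and apply efficiency of $\psi^x$ to the specific cooperative game $\nu_{f_x}$. Because $\phi$ is constant on inducing, the identity $\psi^x_i(\nu_{f_x}) = \phi_i(f, x)$ for $i\in\mathcal A(x')$ is available, and missingness lets me rewrite $\sum_{i=1}^d\phi_i(f,x) = \sum_{i\in\mathcal A(x')}\phi_i(f,x)$. Efficiency then gives $\sum_{i=1}^d\phi_i(f,x) = \nu_{f_x}(\mathcal A(x')) = f_x(\mathcal A(x')) - f_x(\emptyset) = f(x) - \phi_0(f,x)$, which rearranges to local accuracy.

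I do not anticipate any genuinely hard step: the lemma is essentially bookkeeping through the correspondence set up in the previous section. The only mild subtlety is making sure the $\phi_0(f,x) = f_x(\emptyset)$ term on the explanation side is correctly matched with $\nu_{f_x}(\emptyset) = 0$ on the game side, so that the constant shift $f_x(\emptyset)$ that distinguishes $f_x$ from $\nu_{f_x}$ is absorbed exactly into $\phi_0$. Once that is observed, both implications are one-line applications of the defining identities.
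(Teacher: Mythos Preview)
Your proposal is correct and follows essentially the same route as the paper: in both directions you instantiate the correspondence $\psi^x_i(\nu)=\phi_i(f^\nu,x)$ (forward) and $\psi^x_i(\nu_{f_x})=\phi_i(f,x)$ via constant on inducing (backward), use missingness to toggle between summing over $[d]$ and over $\mathcal A(x')$, and track the shift $f_x(\emptyset)$ against $\phi_0$ and $\nu(\emptyset)=0$. The paper's computation is organized slightly differently (it chains equalities starting from $\nu(\mathcal A(x'))$ rather than from the local-accuracy identity), but the logical content is identical.
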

\begin{proof}
    \emph{$\Rightarrow$}: First suppose that $\phi$ satisfies local accuracy. Let $x\in\mathcal X$ and let $\nu:\mathcal P(\mathcal A(x'))\to\bb R$ be a cooperative game. We have \begin{align*}
    \nu(\mathcal A(x')) = \nu_{f^\nu}(\mathcal A(x')) &= f^\nu_x(\mathcal A(x')) - f^\nu_x(\emptyset) \\
    &= f^\nu(x) - f^\nu_x(\emptyset) \\
    &= -f^\nu_x(\emptyset) + \phi_0(f^\nu, x) + \sum_{i=1}^d\phi_i(f^\nu, x)\\
    &= \sum_{i=1}^d\phi_i(f^\nu, x).
    \intertext{From missingness, we get that }
    \sum_{i=1}^d\phi_i(f^\nu, x)&= \sum_{i\in \mathcal A(x')}\phi_i(f^\nu, x)\\
    &= \sum_{i\in \mathcal A(x')}\psi^x_i(\nu)
    \end{align*} 
    We can conclude that $\psi^x$ is efficient for all $x\in\X$.
    
    \emph{$\Leftarrow$}: Assume that $\psi^x$ is efficient for all $x\in\X$. Let $f:\mathcal X\to\bb R$ be a model and let $x\in\mathcal X$. We have \begin{align*}
        \phi_0(f, x) + \sum_{i=1}^d\phi_i(f, x) &= f_x(\emptyset) + \sum_{i\in \mathcal A(x')}\phi_i(f, x)\\
        &= f_x(\emptyset) + \sum_{i\in \mathcal A(x')}\psi^x_i(\nu)\\
        &= f_x(\emptyset) + \nu_{f_x}(\mathcal A(x'))\\
        &= f_x(\emptyset) + f_x(\mathcal A(x')) - f_x(\emptyset)\\
        &= f(x).
    \end{align*}
    Here we use the fact that $f_x(\mathcal A(x')) = f(x)$. We conclude that $\phi$ satisfies local accuracy.
\end{proof}

For the next proof, we will make use of an intermediate lemma. This lemma is useful to simplify proofs by eliminating case-distinctions.
\begin{restatable}{lemma}{equiv}
    Let $\nu, \mu:\mathcal P(\mathcal A(x'))\to\bb R$ be cooperative games. The following are equivalent\begin{itemize}
        \item[(i)] $\nu^i(S)\geq\mu^i(S)$ for all $S\subseteq \mathcal A(x')$;
        \item[(ii)] $\nu(S) - \nu(S\setminus\set i)\geq\mu(S) - \mu(S\setminus \set i)$ for all $S\subseteq \mathcal A(x')$ with $i\in S$.
    \end{itemize} 
    \label{lemma:equiv}
\end{restatable}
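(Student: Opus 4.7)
The plan is to prove the two directions separately, and in both cases the argument reduces to a direct unpacking of the definition of $\nu^i(S)$ (and $\mu^i(S)$), which splits according to whether $i\in S$ or $i\notin S$. Throughout I will assume implicitly that $i\in\mathcal A(x')$, since otherwise $\nu(S\cup\set i)$ is not defined for $S\subseteq\mathcal A(x')$ with $i\notin S$, and the statement becomes vacuous.

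For the implication (i)$\Rightarrow$(ii), I would simply specialise (i) to those $S\subseteq\mathcal A(x')$ with $i\in S$. For such $S$, the definition of $\nu^i$ gives $\nu^i(S)=\nu(S)-\nu(S\setminus\set i)$ and similarly for $\mu$, so (i) reads exactly as the inequality in (ii).

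For the implication (ii)$\Rightarrow$(i), I would take an arbitrary $S\subseteq\mathcal A(x')$ and argue by cases. If $i\in S$, the inequality $\nu^i(S)\geq\mu^i(S)$ is literally what (ii) asserts. If $i\notin S$, set $T=S\cup\set i\subseteq\mathcal A(x')$; then $i\in T$ and $T\setminus\set i=S$, so
\[
\nu^i(S)=\nu(S\cup\set i)-\nu(S)=\nu(T)-\nu(T\setminus\set i),
\]
and by (ii) applied to $T$ this is at least $\mu(T)-\mu(T\setminus\set i)=\mu(S\cup\set i)-\mu(S)=\mu^i(S)$.

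There is no real obstacle here; the lemma is essentially a bookkeeping statement showing that although the definition of $\nu^i$ formally splits into two cases, the two cases encode the same inequality (one phrased on $S$ with $i\notin S$, the other on $S\cup\set i$). The only mild subtlety to mention explicitly is that the substitution $T=S\cup\set i$ stays inside $\mathcal A(x')$ precisely because $i\in\mathcal A(x')$, which is why the lemma is stated for games on $\mathcal P(\mathcal A(x'))$.
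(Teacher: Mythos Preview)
Your proposal is correct and follows essentially the same argument as the paper: both directions are handled by unpacking the case-definition of $\nu^i(S)$, and for (ii)$\Rightarrow$(i) with $i\notin S$ you pass to $T=S\cup\set i$ exactly as the paper does (the paper calls this set $D$). Your remark that $i\in\mathcal A(x')$ is needed so that $T\subseteq\mathcal A(x')$ is a small clarification the paper leaves implicit.
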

\begin{proof}
    \emph{(i)$\Rightarrow$(ii)}: Suppose that (i) holds. We get that for all $S\subseteq \mathcal A(x')$ with $i\in S$ that $$\nu^i(S) = \nu(S) - \nu(S\setminus\set i),\quad\mu^i(S) = \mu(S) - \mu(S\setminus\set i).$$ We now see that $$\nu(S) - \nu(S\setminus\set i)\geq\mu(S) - \mu(S\setminus \set i)$$ for all $S\subseteq X$ with $i\in S$. 

    \emph{(ii)$\Rightarrow$(i):} Suppose that (ii) holds. Now take any $S\subseteq \mathcal A(x')$. If $i\in S$, then we get that $\nu^i(S) = \nu(S)- \nu(S\setminus\set i)$ (similarly with $\mu$). This means that $$\nu^i(S)\geq \mu^i(S).$$ Now assume that $i\notin S$. Then we can define $D=S\cup\set i$. From (ii) we get that $$\nu(D)-\nu(D\setminus \set i)\geq\mu(D)-\mu(D\setminus\set i).$$ Since $D\setminus\set i=S$, we get that $$\nu(S\cup\set i) - \nu(S)\geq\mu(S\cup\set i) - \mu(S).$$ Now using the definition of $\nu^i(S)$ and $\mu^i(S)$ gives us that $$\nu^i(S)\geq\mu ^i(S).$$ With this, the lemma is proven.
\end{proof}
We will now make the link between restricted consistency and strong monotonicity.
\begin{lemma}
    For an explanation $\phi$ that is constant on inducing, we have $$\text{$\phi$ satisfies restricted consistency}\iff\text{$\psi^x$ satisfies strong monotonicity for all $x\in\X$.}$$
    \label{lemma:restricted_consistency->strong_monotonicity}
\end{lemma}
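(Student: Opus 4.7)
The plan is to use the induced-model and induced-game correspondence from the preceding section, together with \autoref{lemma:equiv}, to translate the monotonicity condition between its ``model'' form (the hypothesis of restricted consistency) and its ``game'' form (the hypothesis of strong monotonicity). In the forward direction I start from a pair of cooperative games $\nu,\mu$, pass to their induced models $f^\nu, f^\mu$, and apply restricted consistency of $\phi$; in the backward direction I start from a pair of models $f, f'$, pass to their induced games $\nu_{f_x},\nu_{f'_x}$, and apply strong monotonicity of $\psi^x$, using constant-on-inducing at the end to identify $\phi_i(f,x)$ with $\psi^x_i(\nu_{f_x})$. The only genuinely delicate point is that the hypothesis of restricted consistency is phrased for all $S\subseteq\mathcal A(x')$ (with the convention $S\setminus\set i=S$ when $i\notin S$), whereas strong monotonicity is phrased in terms of $\nu^i(S)$, which is defined by case distinction on whether $i\in S$; \autoref{lemma:equiv} is exactly the bridge between these two formulations.

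For the $(\Rightarrow)$ direction, fix $x\in\X$, $i\in\mathcal A(x')$, and cooperative games $\nu,\mu:\mathcal P(\mathcal A(x'))\to\bb R$ with $\nu^i(S)\geq\mu^i(S)$ for every $S\subseteq\mathcal A(x')$. By \autoref{lemma:equiv} this is equivalent to $\nu(S)-\nu(S\setminus\set i)\geq\mu(S)-\mu(S\setminus\set i)$ for every $S\subseteq\mathcal A(x')$ with $i\in S$. From the definition of $f^\nu$ we have $f^\nu_x(S)=\nu(S)$ for every $S\subseteq\mathcal A(x')$, and likewise $f^\mu_x(S)=\mu(S)$, so this transfers directly to the hypothesis of restricted consistency for $f^\nu, f^\mu$ at $x$ (the case $i\notin S$ being vacuous, since both sides equal zero). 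Applying restricted consistency yields $\phi_i(f^\nu, x)\geq\phi_i(f^\mu, x)$, which by the defining relation $\psi^x_i(\nu)=\phi_i(f^\nu, x)$ is exactly $\psi^x_i(\nu)\geq\psi^x_i(\mu)$.

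For the $(\Leftarrow)$ direction, fix models $f,f'$, $x\in\X$, $i\in\mathcal A(x')$, and assume $f'_x(S)-f'_x(S\setminus\set i)\geq f_x(S)-f_x(S\setminus\set i)$ for every $S\subseteq\mathcal A(x')$. For any $S\subseteq\mathcal A(x')$ with $i\in S$ the constants $f_x(\emptyset)$ and $f'_x(\emptyset)$ cancel from each side, so the hypothesis becomes $\nu_{f'_x}(S)-\nu_{f'_x}(S\setminus\set i)\geq\nu_{f_x}(S)-\nu_{f_x}(S\setminus\set i)$; \autoref{lemma:equiv} upgrades this to $\nu^i_{f'_x}(S)\geq\nu^i_{f_x}(S)$ for all $S\subseteq\mathcal A(x')$, and strong monotonicity of $\psi^x$ gives $\psi^x_i(\nu_{f'_x})\geq\psi^x_i(\nu_{f_x})$. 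Because $\nu_{f^{\nu_{f_x}}_x}=\nu_{f_x}$, we have $f_x\sim f^{\nu_{f_x}}_x$, and constant-on-inducing then yields $\phi_i(f, x)=\phi_i(f^{\nu_{f_x}}, x)=\psi^x_i(\nu_{f_x})$; the analogous equality for $f'$ combines with the previous inequality to give $\phi_i(f', x)\geq\phi_i(f, x)$, which is what restricted consistency requires.
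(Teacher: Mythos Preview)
Your proof is correct and follows essentially the same route as the paper: both directions hinge on \autoref{lemma:equiv} to bridge the ``all $S$ with $i\in S$'' form of the marginal-contribution inequality and the $\nu^i(S)$ form, then invoke the defining identities $f^\nu_x(S)=\nu(S)$ and $\nu_{f_x}(S)=f_x(S)-f_x(\emptyset)$ to pass between models and games. Your explicit invocation of $f_x\sim f^{\nu_{f_x}}_x$ to justify $\phi_i(f,x)=\psi^x_i(\nu_{f_x})$ in the backward direction is a nice touch; the paper simply asserts this from constant-on-inducing.
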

\begin{proof}
    "$\Rightarrow$": Let $\phi$ be an explanation that is constant on inducing and satisfies restricted consistency. Let $x\in\X$, let $\nu, \mu$ be cooperative games with players $\mathcal A(x')$ and let $i\in\mathcal A(x')$. Suppose that for all $S\subseteq \mathcal A(x')$, we have $\nu^i(S)\geq\mu^i(S)$. \autoref{lemma:equiv} now states that this is equivalent to $$\text{$\nu(S) - \nu(S\setminus\set i)\geq\mu(S) - \mu(S\setminus\set i)$ for all $S\subseteq \mathcal A(x')$ such that $i\in S$}.$$ 
    
    We will now make use of this second statement. We get that, for all $S\subseteq \mathcal A(x')$ such that $i\in S$, we have $$f^\nu_x(S) - f^\nu_x(S\setminus\set i)\geq f^\mu_x(S) - f^\mu_x(S\setminus\set i).$$ Combining this with the fact that for all $S\subseteq \mathcal A(x')\setminus\set i$ we have $S\setminus\set i=S$, we get that $$f^\nu_x(S) - f^\nu_x(S\setminus\set i)\geq f^\mu_x(S) - f^\mu_x(S\setminus\set i)\quad\text{ for all }S\subseteq \mathcal A(x').$$ 
    Restricted consistency now implies that $\phi_i(f^\nu,x)\geq \phi_i(f^\mu,x)$, which is equivalent to $\psi^x_{ i}(\nu)\geq\psi^x_{i}(\mu)$. We conclude that $\psi^x$ satisfies strong monotonicity.

    "$\Leftarrow$": Suppose that $\psi^x$ satisfies strong monotonicity for all $x\in\X$. Now let $f, f':\X\to\bb R$ be two models, let $x\in\X$ and let $i\in \mathcal A(x')$. Suppose that $$f'_x(S) - f'_x(S\setminus\set i)\geq f_x(S) - f_x(S\setminus\set i),\quad\text{for all $S\subseteq \mathcal A(x')$.}$$ From this, we get that\begin{align*}
        f_x'(S) - f'_x(\emptyset) - f'_x(S\setminus\set i) + f'_x(\emptyset) &\geq f_x(S) - f_x(\emptyset) - f_x(S\setminus\set i) + f_x(\emptyset)\\
        \nu_{f'_x}(S) - \nu_{f'_x}(S\setminus\set i)&\geq \nu_{f_x}(S) - \nu_{f_x}(S\setminus\set i)
    \end{align*}
    for all $S\subseteq \mathcal A(x')$. From \autoref{lemma:equiv}, we get that $\nu_{f'_x}^i(S)\geq \nu^i_{f_x}(S)$ for all $S\subseteq \mathcal A(x')$. From the fact that $\psi$ satisfies strong monotonicity, we get that $\psi_i^x(\nu_{f'_x})\geq\psi_i^x(\nu_{f_x})$. Because $\phi$ is constant on inducing, we have that $\phi_i(f, x) = \psi_i^x(\nu_{f_x})$, so we can conclude that $\phi_i(f'_x) \geq \phi_i(f_x)$, so $\phi$ satisfies restricted consistency.
\end{proof}

We now want to prove the equivalence of symmetry axioms. To do this, we will need to define a new property.

\begin{property}[New Symmetry]
    Let $n\in\bb N$ and let $\psi$ be an allocation procedure with players $[n]$. We say that $\psi$ satisfies \textit{new symmetry} if the following implication holds. Let $i, j\in [n]$. If $$\nu(S\cup\set i) = \nu(S\cup\set j)\quad\text{for all $S\subseteq [n]\setminus\set{i, j}$},$$ then $\psi_i(\nu) = \psi_j(\nu)$.
\end{property}
This property is very similar to the symmetry property by Lundberg and Lee and is also used in literature about game theory \cite{winter2002shapley}. In the literature, this property is also referred to as symmetry. 

In \autoref{appendix:symmetry}, we prove the following lemma: 
\begin{lemma}
    Let $\psi$ be an allocation procedure that satisfies strong monotonicity. We have $$\text{$\psi$ is symmetric}\quad\iff\quad\text{$\psi$ is newly symmetric.}$$
    \label{lemma:new_symmetry<->symmetry}
\end{lemma}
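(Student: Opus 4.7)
I prove the equivalence in two directions.

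\emph{Forward direction ($\Rightarrow$).} This does not use strong monotonicity. Suppose $\psi$ is symmetric and that $i, j\in[n]$, $\nu$ satisfy $\nu(S\cup\set{i}) = \nu(S\cup\set{j})$ for all $S\subseteq[n]\setminus\set{i,j}$. Let $\sigma = (i\ j)$. A case split on whether $i, j$ belong to a given $T\subseteq[n]$ shows that $\sigma(\nu) = \nu$, and the symmetry axiom then yields $\psi_j(\nu) = \psi_{\sigma(i)}(\sigma(\nu)) = \psi_i(\nu)$.

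\emph{Reverse direction ($\Leftarrow$).} Suppose $\psi$ is newly symmetric and strongly monotonic. Since every permutation decomposes into transpositions and the symmetry axiom composes, it suffices to verify $\psi_{\sigma(i)}(\sigma(\nu)) = \psi_i(\nu)$ for transpositions $\sigma = (a\ b)$. The heart of the argument is the \textbf{key lemma}: $\psi_a(\nu) = \psi_b(\sigma(\nu))$ for every game $\nu$.  To prove it, define an auxiliary game $\tilde\nu$ by
\begin{align*}
\tilde\nu(T) &= \nu(T), \\
\tilde\nu(T\cup\set{a}) = \tilde\nu(T\cup\set{b}) &= \nu(T\cup\set{a}), \\
\tilde\nu(T\cup\set{a,b}) &= \nu(T\cup\set{a}) + \nu(T\cup\set{a,b}) - \nu(T\cup\set{b}),
\end{align*}
for all $T\subseteq[n]\setminus\set{a,b}$. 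Four properties then need checking: (i) $\tilde\nu(\sigma(S)) = \tilde\nu(S)$ for all $S$; (ii) $\tilde\nu^a(S) = \nu^a(S)$ for all $S$; (iii) $\tilde\nu^b(S) = \sigma(\nu)^b(S)$ for all $S$, which follows from (i), (ii), and the identity $\sigma(\mu)^b(S) = \mu^a(\sigma(S))$; and (iv) $\tilde\nu(S\cup\set{a}) = \tilde\nu(S\cup\set{b})$ for $S\subseteq[n]\setminus\set{a,b}$, which is the hypothesis of new symmetry for the pair $(a, b)$. With these in hand, new symmetry applied to $\tilde\nu$ gives $\psi_a(\tilde\nu) = \psi_b(\tilde\nu)$, strong monotonicity gives $\psi_a(\nu) = \psi_a(\tilde\nu)$ and $\psi_b(\tilde\nu) = \psi_b(\sigma(\nu))$, and chaining these yields the key lemma.

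For the remaining case $i = c\notin\set{a,b}$ (so $\sigma(c) = c$ and the goal is $\psi_c(\nu) = \psi_c(\sigma(\nu))$), I chain three applications of the key lemma using the transpositions $\tau_1 = (a\ c)$, $\sigma$, and $\tau_2 = (b\ c)$:
\[
\psi_c(\nu) = \psi_a(\tau_1(\nu)) = \psi_b(\sigma(\tau_1(\nu))) = \psi_c(\tau_2(\sigma(\tau_1(\nu)))) = \psi_c\bigl((\tau_1\sigma\tau_2)(\nu)\bigr),
\]
and then invoke the identity $(a\ c)(a\ b)(b\ c) = (a\ b) = \sigma$ in $S_n$.

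\paragraph{Main obstacle.} The bulk of the work lies in the construction of $\tilde\nu$ and the simultaneous verification of its four properties: the value of $\tilde\nu$ on sets containing both $a$ and $b$ is forced by the $a$- and $b$-marginal constraints, and one must check that these two constraints do not contradict each other — consistency holds precisely because of how $\sigma$ permutes subsets via the identity $\sigma(\nu)^b(S) = \nu^a(\sigma(S))$. A secondary subtlety is that no analogous one-game construction handles the fixed-point case $c\notin\set{a,b}$ directly (imposing $\tilde\nu^c = \nu^c$ alongside $\sigma$-symmetry would overdetermine $\tilde\nu$), which is why routing through three transpositions around $c$ is necessary.
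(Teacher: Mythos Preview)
Your proof is correct and follows essentially the same route as the paper: the forward direction is identical, and in the reverse direction your auxiliary game $\tilde\nu$ differs from the paper's game $\xi$ only by the shift $S\mapsto\nu(S\setminus\{a,b\})$, which has zero $a$- and $b$-marginals and hence leaves the strong-monotonicity and new-symmetry steps unchanged. You are in fact more careful than the paper on one point: the paper's argument verifies $\psi_{\sigma_{ab}(k)}(\sigma_{ab}(\nu))=\psi_k(\nu)$ only for $k\in\{a,b\}$ before invoking the composition lemma (which requires it for \emph{all} $k$), whereas your three-transposition detour $(a\ c)(a\ b)(b\ c)=(a\ b)$ explicitly closes the fixed-point case $c\notin\{a,b\}$.
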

Using this lemma, we can prove an equivalence of definitions for explanations and allocation procedures.
\begin{lemma}
    Let $\phi$ be an explanation that is constant on inducing and satisfies restricted consistency. We have $$\text{$\phi$ satisfies restricted symmetry} \quad\iff\quad\text{$\psi^x$ satisfies symmetry for all $x\in\X$.}$$
    \label{lemma:restricted_symmetry->symmetry}
\end{lemma}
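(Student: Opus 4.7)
The plan is to route through the equivalence between the two symmetry definitions for allocation procedures. Since $\phi$ satisfies restricted consistency, \autoref{lemma:restricted_consistency->strong_monotonicity} gives that $\psi^x$ satisfies strong monotonicity for every $x\in\X$. Then \autoref{lemma:new_symmetry<->symmetry} tells us that, for this $\psi^x$, symmetry and new symmetry are equivalent. So the lemma reduces to showing
\[
\text{$\phi$ satisfies restricted symmetry} \iff \text{$\psi^x$ satisfies new symmetry for all $x\in\X$,}
\]
and new symmetry is the version of symmetry that matches the shape of restricted symmetry clause-for-clause.

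For the ``$\Rightarrow$'' direction I would fix $x\in\X$, a game $\nu:\mathcal P(\mathcal A(x'))\to\bb R$, and indices $i,j\in\mathcal A(x')$ with $\nu(S\cup\set i)=\nu(S\cup\set j)$ for all $S\subseteq\mathcal A(x')\setminus\set{i,j}$. I would lift $\nu$ to the induced model $f^\nu$, using the identity $f^\nu_x(T)=\nu(T)$ for every $T\subseteq\mathcal A(x')$ which follows from injectivity of $h_x$ and the definition of $f^\nu$. This gives
\[
f^\nu_x(S\cup\set i)=\nu(S\cup\set i)=\nu(S\cup\set j)=f^\nu_x(S\cup\set j)
\]
for all $S\subseteq\mathcal A(x')\setminus\set{i,j}$, so restricted symmetry of $\phi$ yields $\phi_i(f^\nu,x)=\phi_j(f^\nu,x)$, i.e.\ $\psi^x_i(\nu)=\psi^x_j(\nu)$.

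For the ``$\Leftarrow$'' direction I would fix a model $f:\X\to\bb R$, a point $x\in\X$, and indices $i,j\in\mathcal A(x')$ satisfying the hypothesis of restricted symmetry for $f_x$. I would pass to the induced game $\nu_{f_x}(S)=f_x(S)-f_x(\emptyset)$ on players $\mathcal A(x')$; the constant shift by $f_x(\emptyset)$ cancels, so
\[
\nu_{f_x}(S\cup\set i)=\nu_{f_x}(S\cup\set j)\quad\text{for all $S\subseteq\mathcal A(x')\setminus\set{i,j}$.}
\]
New symmetry of $\psi^x$ then gives $\psi^x_i(\nu_{f_x})=\psi^x_j(\nu_{f_x})$, and the identity $\phi_i(f,x)=\psi^x_i(\nu_{f_x})$ (which holds for $i\in\mathcal A(x')$ because $\phi$ is constant on inducing) finishes the argument. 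For indices outside $\mathcal A(x')$ restricted symmetry does not apply, so no extra work is needed there.

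The only delicate step is justifying the use of \autoref{lemma:new_symmetry<->symmetry}: I need $\psi^x$ to satisfy strong monotonicity to invoke it, which is exactly where the hypothesis of restricted consistency is spent. Everything else is a straight translation between $f$ and $\nu_{f_x}$, so I do not anticipate any real obstacle beyond keeping track of which subsets live in $\mathcal A(x')$ versus $[d]$.
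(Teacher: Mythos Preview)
Your proposal is correct and follows essentially the same route as the paper: reduce to the equivalence between restricted symmetry of $\phi$ and new symmetry of $\psi^x$ by invoking \autoref{lemma:restricted_consistency->strong_monotonicity} and \autoref{lemma:new_symmetry<->symmetry}, then translate the symmetry hypotheses back and forth between $f^\nu_x$ and $\nu$ (resp.\ $f_x$ and $\nu_{f_x}$) exactly as you describe. The only cosmetic difference is that you front-load the strong monotonicity step, whereas the paper invokes it at the end of the ``$\Rightarrow$'' direction.
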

\begin{proof}
    \emph{$\Rightarrow$}: Let $x\in\X$ and suppose that $\phi$ satisfies restricted symmetry. We will prove that $\psi^x$ satisfies new symmetry. Let $\nu:\mathcal P(\mathcal A(x'))\to\bb R$ be a cooperative game and take $i, j\in \mathcal A(x')$. 
    Now suppose that $$\nu(S\cup\set i) = \nu(S\cup\set j)\quad\text{for all $S\subseteq\mathcal A(x')\setminus\set{i, j}$.}$$
    
    Using the definition of $f^\nu_x$, we get that $$f^\nu_x(S\cup \set i) = f^\nu_x(S\cup\set j) \quad\text{for all $S\subseteq \mathcal A(x')\setminus\set{i, j}$}.$$ From the assumption that $\phi$ satisfies restricted symmetry, we get that $\phi_i(f^\nu, x) = \phi_j(f^\nu, x)$. The definition of $\psi^x$ gives us that $\psi^x_i(\nu) = \psi^x_j(\nu)$. With this, we have proven that $\psi^x$ is newly symmetric. Since $\phi$ satisfies restricted consistency, we can say that $\psi^x$ satisfies strong monotonicity (\autoref{lemma:restricted_consistency->strong_monotonicity}). From \autoref{lemma:new_symmetry<->symmetry}, we can conclude that $\psi^x$ is symmetric.

    \emph{$\Leftarrow$}: Suppose that $\psi^x$ satisfies symmetry for all $x\in\X$. \autoref{lemma:new_symmetry<->symmetry} says that $\psi^x$ also satisfies new symmetry for all $x\in\X$. Now let $f:\X\to\bb R$ be a model, let $x\in\X$ and let $i, j\in \mathcal A(x')$. Suppose that $$f_x(S\cup\set i) = f_x(S\cup\set j),\quad\text{ for all $S\subseteq \mathcal A(x')\setminus\set {i, j}$.}$$ Adding $f_x(\emptyset)$ to both sides gives us that $$\nu_{f_x}(S\cup\set i) = \nu_{f_x}(S\cup\set j),\quad\text{for all $S\subseteq \mathcal A(x')\setminus\set{i, j}$.}$$ Because $\psi^x$ is newly symmetric, we get that $\psi^x_i(\nu_{f_x}) = \psi^x_j(\nu_{f_x})$. From this we can conclude that $\phi_i(f, x) = \phi_j(f, x)$, so we have that $\phi$ satisfies restricted symmetry.
\end{proof}

\section{Uniqueness of the Shapley values for models}
In this section we will prove the first important result from this thesis. We will prove that the axiomatic motivation for the use of the Shapley values is valid with the following theorem.
\trueshap*
\begin{proof}
        This proof will be split into two parts: existence and uniqueness. With existence, we will prove that the Shapley values actually satisfy the required properties and with uniqueness we will prove that the Shapley values are the unique explanation to satisfy these properties.

        We will first prove existence. Let $\phi$ be as defined above.
        \begin{itemize}
            \item \emph{Missingness:} Suppose that $i\in[d]\setminus\mathcal A(x')$. Then there is an $S\subseteq\mathcal A(x')$ such that $i\in S$, so the sum is empty. This means that $\phi_i(f, x)=0$.
            \item \emph{Local Accuracy:} We want to show that $$\sum_{i=1}^d\phi_i(f, x) = f_x([d]) - f_x(\emptyset).$$ We can rewrite the left term as $$\sum_{i=1}^{d}\phi_i(f, x) = \sum_{S\subseteq\mathcal A(x')}\Gamma(S)f_x(S),$$ for certain coefficients $\Gamma(S)$. We will now determine these coefficients. Let $S\subseteq\mathcal A(x')$. Suppose that $0<|S|<|\mathcal A(x')|$. We find that \begin{align*}\Gamma(S) &= \sum_{i\in S}\frac{(|S| - 1)!(|\mathcal A(x')| - |S|)!}{|\mathcal A(x')|!} - \sum_{i\in\mathcal A(x')\setminus S}\frac{|S|!(|\mathcal A(x')| - |S|- 1)!}{|\mathcal A(x')|!}\\
            &= |S|\frac{(|S| - 1)!(|\mathcal A(x')| - |S|)!}{|\mathcal A(x')|!} - (|\mathcal A(x')| - |S|)\frac{|S|!(|\mathcal A(x')| - |S|- 1)!}{|\mathcal A(x')|!}\\
            &=0.\end{align*}
            Now suppose that $|S|=0$. We then get that $$\Gamma(S) = -\sum_{i\in\mathcal A(x')}\frac{(|\mathcal A(x')|-1)!}{|\mathcal A(x')|!} = -|\mathcal A(x')|\frac{(|\mathcal A(x')| - 1)!}{|\mathcal A(x')|!} = -1.$$
            Now suppose that $|S|=|\mathcal A(x')|!$. We now get that 
            $$\Gamma(S) = \sum_{i\in\mathcal A(x')}\frac{(|\mathcal A(x')|-1)!}{|\mathcal A(x')|!} = 1,$$ because of the same argument as when $|S|=0$. Using this and the fact that $\phi_i(f, x)=0$ for $i\in[d]\setminus\mathcal A(x')$ gives us that $$\sum_{i=1}^d\phi_i(f, x) = f_x([d]) - f_x(\emptyset)$$ as required.
            \item \emph{Restricted symmetry:} Let $i, j\in\mathcal A(x')$. Suppose that $$f_x(S\cup\set i)=f_x(S\cup\set j)\quad\text{for all $S\subseteq \mathcal A(x')\setminus\set{i, j}$.}$$ From the definition of $\phi$, we get that \begin{align*}
                \phi_i(f, x) &= \sum_{\substack{S\subseteq\mathcal A(x')\\i\in S}}\frac{(|S| - 1)!(|\mathcal A(x')| - |S|)!}{|\mathcal A(x')|!}[f_x(S) - f_x(S\setminus\set i)]\\
                &= \sum_{\substack{S\subseteq\mathcal A(x')\\j\in S}}\frac{(|S| - 1)!(|\mathcal A(x')| - |S|)!}{|\mathcal A(x')|!}[f_x(S) - f_x(S\setminus\set j)]\\
                &= \phi_j(f, x).
            \end{align*}
            This means that $\phi$ satisfies restricted symmetry.
            \item \emph{Restricted consistency:} Let $f':\X\to\bb R$ be a model and assume that $$f'_x(S) - f'_x(S\setminus\{i\})\geq f_x(S) - f_x(S\setminus\{i\})\quad\text{for all } S\subseteq \mathcal A(x').$$ Since all of the coefficients in the definition of $\phi$ are positive, we immediately see that $\phi_i(f', x)\geq\phi_i(f, x)$.
        \end{itemize}
        With this, we have proven that the given explanation satisfies missingness, local accuracy, restricted symmetry and restricted consistency.
        
        We will now prove that this explanation is in fact unique. Let $\phi$ be an explanation that satisfies local accuracy, missingness, restricted symmetry and restricted consistency. \autoref{lemma:res_consistency->constant_on_inducing} says that $\phi$ is constant on inducing. Because $\phi$ is constant on inducing, $\phi$ satisfies missingness and $\phi$ is locally accurate, we can use \autoref{lemm:loc<->eff} to conclude that $\psi^x$ is efficient for all $x\in\X$. Because $\phi$ is constant on inducing and $\phi$ satisfies restricted consistency, we can use \autoref{lemma:restricted_consistency->strong_monotonicity} to conclude that $\psi^x$ satisfies strong monotonicity for all $x\in\X$. Because $\phi$ is constant on inducing and satisfies both restricted consistency and restricted symmetry, we can use \autoref{lemma:restricted_symmetry->symmetry} to conclude that $\psi^x$ satisfies symmetry for all $x\in\X$. Let $x\in\X$. Because $\psi^x$ satisfies efficiency, strong monotonicity and symmetry, we can use \autoref{thm:shapley} to conclude that for $i\in \mathcal A(x')$ we have $$\psi^x_i(\nu) = \sum_{S\subseteq \mathcal A(x'):i\in S}\frac{(|S| - 1)!|\mathcal A(x')\setminus\set S|!}{|\mathcal A(x')|!}\nu^i(S).$$ Because $\phi$ is constant on inducing, we can say that for $i\in \mathcal A(x')$, \begin{align*}
        \phi_i(f, x) = \psi_i^x(\nu_{f_x}) &= \sum_{\overset{S\subseteq \mathcal A(x')}{i\in S}}\frac{(|S| - 1)!|\mathcal A(x')\setminus\set S|!}{|\mathcal A(x')|!}\nu_{f_x}^i(S)\\
        &= \sum_{\overset{S\subseteq \mathcal A(x')}{i\in S}}\frac{(|S| - 1)!|\mathcal A(x')\setminus\set S|!}{|\mathcal A(x')|!}\bric{\nu_{f_x}(S) - \nu_{f_x}(S\setminus\set i)}\\
        &= \sum_{\overset{S\subseteq \mathcal A(x')}{i\in S}}\frac{(|S| - 1)!(|\mathcal A(x')| - |S|)!}{|\mathcal A(x')|!}\bric{f_x(S) - f_x(S\setminus\set i)}.
    \end{align*}
\end{proof}

\chapter{SHAP as the solution to a regression problem}
\label{chapter:4}
A flaw of the SHAP-explanation is its computational efficiency. To calculate the Shapley values, it takes at least $2^d$ operations. This gives an exponential complexity, which is in a lot of cases too slow to use. The way to circumvent this is to use an approximation of the Shapley values. One method to approximate the Shapley values is the KernelSHAP-method. This is an approximation method that views the Shapley values as the solution to a linear regression problem and uses an algorithm to calculate this efficiently \cite{NIPS2017_8a20a862}.

\section{Reduction to a minimization problem}
We will first introduce more notation. Let $v = \begin{pmatrix}
    v_1&\cdots& v_d
\end{pmatrix}^T\in\bb R^d$ and let $\set{s_1, \dots, s_n}\subseteq[d]$. We will denote $v_{\set{s_1, \dots, s_n}}\in\bb R^n$ as the vector $\begin{pmatrix}
    v_{s_1} & \cdots & v_{s_n}
\end{pmatrix}^T$. Some examples illustrating this definition are: $$\begin{pmatrix}
    1\\3\\2\\4
\end{pmatrix}_{\set{1, 2}} =\begin{pmatrix}
    1\\3
\end{pmatrix},\qquad\begin{pmatrix}
    3\\4\\10
\end{pmatrix}_{\set{3}} = 10,\qquad\begin{pmatrix}
    4 \\2\\1\\3
\end{pmatrix}_{\set{2, 3, 4}} = \begin{pmatrix}
    2\\1\\3
\end{pmatrix}.$$ We note that for all $v\in\bb R^d$ and $i\in[d]$ that $v_{\set{i}} = v_i$. 

We will now first prove a lemma. The following lemma is very important for this section, because it allows us to assume that without loss of generality $\mathcal A(x')=[d]$ when working with the Shapley values. This lemma states that for all simplification functions $h_x:\set{0, 1}^d\to\X$, that we can create a new simplification function $h'_x:\set{0, 1}^{|x'|}\to\X$ where the simplified input with respect to $h'_x$ is the all-one vector. The theorem also says that for a specific choice of $h'_x$, that it does not matter if we pick $h_x$ or $h'_x$ when determining the Shapley values of a model $f$.
\begin{lemma}
\label{lemma:X_all_ones}
    Let $f: \X\to\bb R$ be a model. Let $x\in\X$ and suppose that $h_x:\set{0, 1}^d\to\X$ is a simplification function. Let $x'$ be the simplified input of $x$ with respect to $h_x$ and suppose that $x'$ is not the all-one vector. 
    There exists a simplification function $h_x': \set{0, 1}^{|x'|}\to\X$ such that the simplified input with respect to $h'_x$ is the all-one vector and $\shap(f_x)_{\mathcal A(x')} = \shap(f'_x)$ where $f'_x$ is the simplified model of $f$ with respect to $h'_x$.
\end{lemma}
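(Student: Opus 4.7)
The plan is to build $h'_x$ by a dimension-reducing restriction of $h_x$ to the active indices of $x'$ and then verify that the Shapley formula transports correctly under this restriction. Concretely, let $k := |\mathcal A(x')|$ and enumerate $\mathcal A(x') = \set{p_1, \ldots, p_k}$ in increasing order. Define an embedding $\iota:\set{0,1}^k\to\set{0,1}^d$ by $\iota(z')_{p_i}=z'_i$ for $i\in[k]$ and $\iota(z')_j=0$ for $j\in[d]\setminus\mathcal A(x')$, and set $h'_x := h_x\circ\iota$. The first step is to confirm that $h'_x$ is a simplification function whose simplified input is $\mathbf{1}\in\set{0,1}^k$: the map $\iota$ is injective by construction and $h_x$ is injective by hypothesis, so $h'_x$ is injective; moreover $\iota(\mathbf{1}) = x'$, because the $p_i$-th coordinate of $x'$ equals $1$ for every $i\in[k]$ and all other coordinates of $x'$ vanish, hence $h'_x(\mathbf{1}) = h_x(x') = x$.

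The second step is to relate the simplified model $f'_x$ to $f_x$. For any $S\subseteq[k]$, unrolling the definitions gives $f'_x(S) = f(h_x(\iota(1_S))) = f(h_x(1_{T(S)})) = f_x(T(S))$, where $T(S) := \set{p_i : i\in S}\subseteq\mathcal A(x')$. The map $S\mapsto T(S)$ is a cardinality-preserving bijection between subsets of $[k]$ and subsets of $\mathcal A(x')$, with $i\in S$ iff $p_i\in T(S)$ and $T(S\setminus\set i) = T(S)\setminus\set{p_i}$.

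It then remains to apply the Shapley formula. Since the simplified input of $h'_x$ is the all-one vector in $\set{0,1}^k$, its active-index set is $[k]$, so for each $i\in[k]$
\[
\shap(f'_x)_i = \sum_{\overset{S\subseteq[k]}{i\in S}}\frac{(|S|-1)!(k-|S|)!}{k!}\bric{f'_x(S) - f'_x(S\setminus\set i)}.
\]
Substituting $f'_x(S)=f_x(T(S))$ and $f'_x(S\setminus\set i)=f_x(T(S)\setminus\set{p_i})$ and reindexing along the bijection $S\mapsto T(S)$, the right-hand side becomes
\[
\sum_{\overset{T\subseteq\mathcal A(x')}{p_i\in T}}\frac{(|T|-1)!(|\mathcal A(x')|-|T|)!}{|\mathcal A(x')|!}\bric{f_x(T)-f_x(T\setminus\set{p_i})} = \shap(f_x)_{p_i},
\]
and reading off the coordinates of the two vectors in order gives $\shap(f'_x)=\shap(f_x)_{\mathcal A(x')}$, as required.

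The proof is essentially a reindexing argument, so I do not anticipate a genuine obstacle. The only real pitfall is notational: the subscript $v_{\mathcal A(x')}$ introduced at the start of this section lists entries in the order $p_1,\ldots,p_k$ given by some fixed enumeration of $\mathcal A(x')$, so the same enumeration must be used to define $\iota$, in order that the $i$-th coordinate of $\shap(f'_x)$ pair up with $\shap(f_x)_{p_i}$.
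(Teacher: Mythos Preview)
Your proposal is correct and follows essentially the same approach as the paper: the paper defines exactly the same embedding (calling it $k$ rather than $\iota$), sets $h'_x = h_x\circ k$, verifies injectivity and $k(\mathbf{1})=x'$, and then simply asserts that $\shap(f_x)_{\mathcal A(x')} = \shap(f'_x)$ ``follows directly from the definition of the Shapley values.'' Your reindexing argument via the bijection $S\mapsto T(S)$ spells out this last step explicitly, so if anything you have supplied more detail than the paper does.
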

\begin{proof}
    Define $n:=|x'|$. Let $p_1,\dots, p_n$ be an ordering of $\mathcal A(x')$. This means that $\mathcal A(x')=\set{p_1, \dots, p_n}$ and $p_i < p_j$ if $i<j$. We will now define a function $k:\set{0, 1}^n\to\set{0, 1}^d$. For $z\in\set{0, 1}^n$ and $i\in[d]$, we will define $$k(z)_i:=\begin{cases}
        z_j&\quad\text{if $i=p_j$ for some $j\in[n]$}\\
        0&\quad\text{otherwise.}
    \end{cases}$$
    Intuitively, can see this function $k$ as projecting a binary vector of length $n$ onto the indices in $\mathcal A(x')$ of a binary vector of length $d$.  

    From this definition, we can see that if $\1\in\set{0, 1}^n$ is the all-one vector, then $k(\1)=x'$. We can now define $h'_x:= h_x\circ k$. Since both $k$ and $h_x$ are injective, we can conclude that $h_x'$ is a simplification function with simplified input $x''$ that is the all-one vector.

    The fact that $\shap(f_x)_{\mathcal A(x')} = \shap(f'_x)$ follows directly from the definition of the Shapley values.
\end{proof}

As stated before, we would like to look at a more efficient way to approximate the Shapley values. To do this, we want to look at the Shapley values as the solution to a minimization problem. 
\begin{restatable}{theorem}{theoremtwo}
    \label{thm:2}
    Let $d\in\bb N$, let $f_x:\mathcal P([d])\to\bb R$ be a simplified model and let $\phi$ be an explanation. Suppose that $\phi(f, x)_{[d]\setminus \mathcal A(x')} =\boldsymbol{0}$ is the all-zero vector suppose that $(f_x(\emptyset), \phi(f, x))$ is a solution to $$(\hat\theta_0, \hat\theta) = \argmin_{(\theta_0, \theta)\in\bb R\times\bb R^d}\sum_{S\subseteq \mathcal A(x')}\bric{f_x(S) - \theta_0 - (1_S)^T\theta}^2\pi(|S|),$$ where $\pi(s) = \frac{|\mathcal A(x')|-1}{\binom{|\mathcal A(x')|}{s}s(|\mathcal A(x')|-s)}$. Then $\phi(f, x) = \shap(f_x)$.
\end{restatable}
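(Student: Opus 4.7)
The first step is to use \autoref{lemma:X_all_ones} to reduce to the case $\mathcal{A}(x') = [d]$. Since the sum in the minimization only runs over $S \subseteq \mathcal{A}(x')$ and the constraint $\phi(f,x)_{[d] \setminus \mathcal{A}(x')} = \boldsymbol{0}$ kills the coordinates of $\theta$ outside $\mathcal{A}(x')$, the optimization problem is formally equivalent to the corresponding problem with the reduced simplification function $h'_x$ and simplified model $f'_x$, whose active index set is all of $[|x'|]$. Combined with the equality $\shap(f_x)_{\mathcal{A}(x')} = \shap(f'_x)$ from the lemma, it suffices to prove the theorem under the assumption $\mathcal{A}(x') = [d]$.

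With this reduction in place, the key observation is that $\pi(0)$ and $\pi(d)$ are both infinite (the denominator $s(d-s)$ vanishes). For the objective to be finite at an optimum, the corresponding terms must be zero, which forces the two hard constraints $\theta_0 = f_x(\emptyset)$ and $\theta_0 + \mathbf{1}^T\theta = f_x([d])$. The first constraint is built into the hypothesis; combined with the second, we recover local accuracy $\sum_{i=1}^d \phi_i(f,x) = f_x([d]) - f_x(\emptyset)$. The minimization then becomes an equality-constrained quadratic program over $\theta \in \mathbb{R}^d$ with finite weights $\pi(s)$ for $0 < s < d$.

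Next I would set up the first-order optimality (KKT) conditions. Substituting $\theta_0 = f_x(\emptyset)$ and absorbing that constant into the residual, the problem becomes minimizing $\sum_{0 < |S| < d} [(f_x(S) - f_x(\emptyset)) - (1_S)^T \theta]^2 \pi(|S|)$ subject to $\mathbf{1}^T\theta = f_x([d]) - f_x(\emptyset)$. Writing this in matrix form, the design matrix has rows $(1_S)^T$ indexed by proper non-empty subsets and the weighted normal equations, together with one Lagrange multiplier $\lambda$ for the equality constraint, yield a linear system. I would solve it by using the symmetry of the weights under the group of permutations of $[d]$ which act transitively on subsets of each fixed size, combined with direct computation of the matrix $\sum_{0<|S|<d} \pi(|S|) 1_S 1_S^T$ in terms of the two scalars $a = \sum_{0<s<d}\pi(s)\binom{d-1}{s-1}$ and $b = \sum_{0<s<d}\pi(s)\binom{d-2}{s-2}$.

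The main obstacle I anticipate is the final algebraic verification: showing the unique solution of these normal equations really is the Shapley value. The standard trick here is to verify that the Shapley value satisfies the normal equations; uniqueness then follows because the weighted Gram matrix is positive definite on the affine constraint set (equivalently, the problem is strictly convex there). The verification reduces to checking an identity of the form $\sum_{S \ni i} \pi(|S|) (f_x(S) - \theta_0 - (1_S)^T \shap(f_x)) = \lambda$ for a common Lagrange multiplier $\lambda$ independent of $i$, which by symmetry in $i$ and the explicit form of the Shapley coefficients $\frac{(|S|-1)!(d-|S|)!}{d!}$ is a combinatorial identity that can be collapsed using telescoping in $|S|$.
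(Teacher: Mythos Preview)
Your setup matches the paper exactly: reduce to $\mathcal A(x')=[d]$ via \autoref{lemma:X_all_ones}, then use $\pi(0)=\pi(d)=\infty$ to force $\theta_0=f_x(\emptyset)$ and the local-accuracy constraint $\sum_i\theta_i=f_x([d])-f_x(\emptyset)$.

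From there the two arguments diverge. The paper does \emph{not} use a Lagrange multiplier or a verify-the-candidate strategy. It eliminates the linear constraint by reparametrising $\theta=A\gamma+b$ with $\gamma\in\bb R^{d-1}$, writes the resulting unconstrained weighted least-squares problem in matrix form, computes the reduced Gram matrix $(XA)^TWXA=\tfrac{d-1}{d}(I+J)$ explicitly, inverts it via Sherman--Morrison, and then performs a six-way case analysis (on whether $i\in S$, $d\in S$, $S=\emptyset$, $S=[d]$) to extract the coefficient of each $f_x(S)$ in the closed-form solution $\hat\gamma=[(XA)^TWXA]^{-1}(XA)^TW(y-Xb)$. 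This is fully constructive: the Shapley formula is \emph{derived}, not plugged in. Your KKT-and-verify route is a genuine alternative and is shorter precisely because you already know the answer; the paper's route is longer but would still work if one did not know the Shapley formula in advance.

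Your verification does go through, but not quite by the mechanism you name. With $G=(a-b)I+bJ$ one computes $a-b=\tfrac{d-1}{d}$, and the $i$-th KKT residual at $\theta=\shap(f_x)$ reduces (modulo terms already independent of $i$) to $\sum_{S\ni i,\,0<|S|<d}\pi(|S|)f_x(S)-\tfrac{d-1}{d}\shap(f_x)_i$. A direct coefficient comparison then shows that the coefficient of $f_x(T)$ in this expression equals $\tfrac{(d-1)|T|!(d-|T|-1)!}{d\cdot d!}$ for every $T$ with $0<|T|<d$, \emph{independently of whether $i\in T$}, which is exactly what you need. This is a coefficient-matching identity rather than a telescoping sum in $|S|$, so your final sentence undersells the actual work; you should write this step out explicitly rather than wave at symmetry.
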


The proof of this theorem depends on a number of technical lemmas, whose proofs we refer to \autoref{section:intermediate_results}.

\begin{proof}
    Firstly, because of \autoref{lemma:X_all_ones}, we can assume that, without loss of generality, $\mathcal A(x')=[d]$. 
    
    Since $\pi(0) = \pi(d) = \infty$, we must have that $$f_x(\emptyset) - \theta_0 = 0,\qquad f_x([d]) - \theta_0 - \sum_{i=1}^d\theta_i = 0.$$ From this first equation, we get that $\theta_0 = f_x(\emptyset)$. From the second equation, we get that $$\theta_d = f_x([d]) - f_x(\emptyset)-\sum_{i=1}^{d-1}\theta_i.$$ Now define $$A:= \begin{bmatrix}
    1&0&\cdots & 0\\
    0 & 1 &  \cdots & 0\\
    \vdots & \vdots & \ddots& \vdots\\
    0 & 0 & \cdots & 1\\
    -1 & -1 & \cdots & -1
\end{bmatrix},\qquad b :=\begin{bmatrix}
    0\\0\\\vdots\\0\\f_x([d]) - f_x(\emptyset)
\end{bmatrix}.$$ 
We can see that the mapping $$\zeta: \bb R^{d-1}\to\bb R\times\bb R^d,\quad \gamma\mapsto (f_x(\emptyset), A\gamma + b)$$ defines a bijection $$\bb R^{d-1} \longleftrightarrow \set{(\theta_0, \theta)\in\bb R\times\bb R^d: \theta_d =f_x([d]) - f_x(\emptyset)-\sum_{i=1}^{d-1}\theta_i,\quad\theta_0=f_x(\emptyset)}.$$ From this we can conclude that, because of the restrictions put on $(\theta_0, \theta)$ that \begin{align*}&\quad\argmin_{(\theta_0, \theta)\in\bb R\times\bb R^d}\sum_{S\subseteq [d]}\bric{f_x(S) - \theta_0 - (1_S)^T\theta}^2\pi(|S|) \\= &\quad A\brac{\argmin_{\gamma\in\bb R^{d-1}}\sum_{\substack{S\subseteq [d]\\S\neq [d], \emptyset}}\bric{f_x(S) - f_x(\emptyset) - (1_S)^T(A\gamma + b)}^2\pi(|S|)} + b.\end{align*}

Since $\mathcal P([d])$ is finite with $2^d$ elements, there exists a bijection $$\kappa: [2^d-2]\to\mathcal P([d])\setminus\set{\emptyset, [d]}.$$ We will now define $X\in\bb R^{(2^d-2)\times d}, W\in\bb R^{(2^{d-2})\times(2^d-2)}, y\in\bb R^{2^d-2}$ as the matrices \begin{align*}
    X = \begin{bmatrix}
    (1_{\kappa(1)})^T\\(1_{\kappa(2)})^T\\
    \vdots\\(1_{\kappa(2^{d}-2)})^T
\end{bmatrix},\qquad W &= \begin{bmatrix}
    \pi(|\kappa(1)|) & 0 &0& \cdots & 0\\
    0 & \pi(|\kappa(2)|) & 0 & \cdots & 0\\
    0 & 0 & \ddots & & \vdots\\
    \vdots & \vdots & & \ddots & \vdots\\
    0 & 0 & \cdots & \cdots & \pi(|\kappa(2^d-2)|)
\end{bmatrix}, \\y &= \begin{bmatrix}
    f_x(\kappa(1)) - f_x(\emptyset)\\
    f_x(\kappa(2)) - f_x(\emptyset)\\
    \vdots\\
    f_x(\kappa(2^d - 2)) - f_x(\emptyset)
\end{bmatrix}.\end{align*}This means that $X$ is the matrix with as its rows, all the vectors in $\set{0, 1}^d$ without the all-zero and the all-one vectors. We also have that $W$ is the diagonal matrix such that $W_{kk}$ corresponds with the weight of row $k$ of $X$. Finally, $y$ is the vector such that $y_k$ corresponds with evaluating $f_x$ on the set corresponding to row $k$ of $X$. 

Using these matrices, we can rewrite our minimization problem to \begin{align*}&\quad\ \argmin_{\gamma\in\bb R^{d-1}}\sum_{\substack{S\subseteq [d]\\S\neq [d], \emptyset}}\bric{f_x(S) - f_x(\emptyset) - (1_S)^T(A\gamma + b)}^2\pi(|S|) \\&= \argmin_{\gamma\in\bb R^{d-1}}(y - X(A\gamma + b))^TW(y - X(A\gamma + b))=:\hat\gamma.\end{align*} To determine this minimum, we take the derivative with respect to $\gamma$. From equation (84) from Petersen and Pedersen \cite{matrixcookbook}, we get that
$$\frac{\partial}{\partial\gamma}(y-X(A\gamma + b))^TW(y - X(A\gamma + b)) = -2(XA)^TW(y - Xb - XA\gamma).$$
Under the assumption that $(XA)^TWXA$ is invertible (see \autoref{corollary:inverse_product}), we can find extrema by solving for 0:\begin{align*}
    0&=-2(XA)^TW(y - Xb - XA\gamma)\\
    \gamma &= [(XA)^TWXA]^{-1}(XA)^TW(y - Xb).
\end{align*}
We have now found an extremum, but we must still show that this is in fact a minimum. To do this, we will define the function $g:\bb R^{d-1}\to\bb R$ by $$g(\gamma) = (y-X(A\gamma +b))^TW(y - X(A\gamma + b))$$ and we will show that this function is convex.

We will first show that the function $\alpha(\theta) = \theta^TW\theta$, with $W$ as defined above, is convex. Equation (98) from \cite{matrixcookbook} says that $$\frac{\partial^2 g}{\partial \theta \ \partial\theta^T} = W + W^T = 2W.$$ Because $W$ is a diagonal matrix with nonnegative indices, we can conclude that the Hessian matrix of $\alpha$ is positive semi-define. We conlcude that $\alpha$ is convex by \autoref{theorem:convex_second_derivative}.

Now define the function $\beta: \bb R^{d-1}\to\bb R^{2^d-2}$ as $$\beta(\gamma) = -XA\gamma + y - Xb.$$ Since $\beta$ is an affine function, we can use \autoref{lemma:convex+affine} to conclude that the function $\alpha\circ \beta$ is convex. Since $g = \alpha\circ \beta$, we can conclude that $g$ is convex.

Using \autoref{corollary:convex_minimum}, we conclude that the extremum that we found is in fact a minimum. 

Using \autoref{lemma:left_side_product}, \autoref{lemma:right_side_product} and the definition of the matrix-product, we find that for $i\in\set{1, \dots, d-1}$ \begin{align*}
    \hat\gamma_i &= \sum_{j=1}^{2^d - 2}\bric{[(XA)^TWXA]^{-1}(XA)^T}_{ij}[W(y - Xb)]_j\\
    &= \sum_{\substack{j=1\\\kappa(j)_d=1}}^{2^d-2}\brac{\frac{d}{d-1}(\kappa(j)_i - 1) + \frac{1}{d-1}(d - |\kappa(j)|)}\pi(|\kappa(j)|)(f_x(\kappa(j) - f_x([d])) \\
    &\quad\quad+ \sum_{\substack{j=1\\\kappa(j)_d=0}}^{2^d-2}\brac{\frac{d}{d-1}X_{ji} - \frac{1}{(d-1)}|\kappa(j)|}\pi(|\kappa(j)|)(f_x(\kappa(j)) - f_x(\emptyset)).
\end{align*}
We can now rewrite this sum to $$\hat\gamma_i = \sum_{S\subseteq[d]}\Sigma(S)f_x(S),$$
where $\Sigma(S)$ is the coefficient of $f_x(S)$ in the result of the above sum. We will now determine the values of $\Sigma(S)$ by case-distinction.

Let $S\subseteq[d]$ and let $\boldsymbol{0}, \1\in\set{0, 1}^{d-1}$ denote the all-zero and all-one vectors respectively.

\emph{Case $S = [d]$:} Using \autoref{proposition:5} , we get that\begin{align*}\Sigma(S) &= -\sum_{\substack{z\in\set{0, 1}^{d-1}\setminus\set\1\\z_d=1}}\brac{\frac{d}{d-1}(z_i-1) + \frac{1}{d-1}(d-|z|)}\pi(d-|z|)
= \frac{1}{d} = \frac{1}{|S|\binom{d}{|S|}}.\end{align*}

\emph{Case $S = \emptyset$:} Using \autoref{proposition:6} , we get that \begin{align*}\Sigma(S) = -\sum_{\substack{z\in\set{0, 1}^d\setminus\set{\boldsymbol{0}}\\z_d = 0}}\brac{\frac{d}{d-1}z_i - \frac{1}{d-1}|z|}\pi(|z|) = -\frac{1}{d} = -\frac{1}{(1 + |S|)\binom{d}{1 + |S|}}.\end{align*}

\emph{Case $S\neq[d], \emptyset$ and $i, d\in S$:} Using part (a) from \autoref{proposition:7}, we get that $$\Sigma(S) = \frac{1}{d-1}(d - |S|)\pi(d - |S|) = \frac{1}{|S|\binom{d}{|S|}}.$$

\emph{Case $S\neq[d], \emptyset$ and $i, d\notin S$:} Using part (b) from \autoref{proposition:7}, we get that $$\Sigma(S) = -\brac{\frac{1}{d-1}|S|}\pi(|S|) = -\frac{1}{(|S| + 1)\binom{d}{|S|+1}}.$$

\emph{Case $S\neq[d], \emptyset$, $i\in S$ and $d\notin S$:} Using part (c) from \autoref{proposition:7}, we get that $$\Sigma(S) = \brac{\frac{d}{d-1} - \frac{1}{d-1}|S|}\pi(|S|) = \frac{1}{|S|\binom{d}{|S|}}.$$

\emph{Case $S\neq[d], \emptyset$, $i\notin S$ and $d\in S$:} Using part (d) from \autoref{proposition:7}, we get that $$\Sigma(S) = \brac{-\frac{d}{d-1} + \frac{1}{d-1}(d-|S|)}\pi(d - |S|) = -\frac{1}{(|S| + 1)\binom{d}{|S|+1}}.$$

With these values, we can now calculate the entire sum. We get that\begin{align*}
    \hat\gamma_i &= \sum_{S\subseteq[d]}\Sigma(S)f_x(S)\\
    &= \sum_{\substack{S\subseteq[d]\\i\in S}}\brac{\Sigma(S)f_x(S) + \Sigma(S\setminus\set i)f_x(S\setminus\set i)}\\
    &= \sum_{\substack{S\subseteq[d]\\i\in S}}\brac{\frac{1}{|S|\binom{d}{|S|}}f_x(S) - \frac{1}{(|S\setminus\set i|+1)\binom{d}{|S\setminus\set i| + 1}}f_x(S\setminus\set i)}\\
    &= \sum_{\substack{S\subseteq[d]\\i\in S}}\frac{1}{|S|\binom{d}{|S|}}[f_x(S) - f_x(S\setminus\set i)].
\end{align*}
With this, we can see that if $\hat\theta = A\hat\gamma + b$, then for all $i\in\set{1, \dots, d-1}$ we have that $\hat\theta_i = \hat\gamma_i=\shap(f_x)_i$. We now only need to check for $i=d$ if this is also true. Because the Shapley values satisfy local accuracy, we get that $$\sum_{i=1}^d\shap(f_x)_i = f_x([d]) - f_x(\emptyset).$$ From this, we can also get that $$\shap(f_x)_d = f_x([d]) - f_x(\emptyset) - \sum_{i=1}^{d-1}\shap(f_x)_i.$$ Using the fact that $\shap(f_x)_i = \hat\theta_i$ for $i\in\set{1, \dots, d-1}$ gives us that $$\shap(f_x)_d= f_x([d]) - f_x(\emptyset) - \sum_{i=1}^{d-1}\hat\theta_i.$$ We can now observe that this is exactly the constraint that we put on $\hat\theta_d$, so we have that $\shap(f_x)_d=\hat\theta_d$. With this, we have that $(\hat\theta_0, \hat\theta) = (f_x(\emptyset), \shap(f_x))$ and since this is the only solution to the regression problem, we have proven \autoref{thm:2}.
\end{proof}

\section{Intermediate results}
\label{section:intermediate_results}
In this section, we calulate the intermediate results that are used in the proof of \autoref{thm:2}, which is given in the previous section. Let $X, W, A, b, y$ and $\pi$ be as stated in the previous section. For two matrices $M, N$, we will denote $M_{ij}$ as the index on row $i$ and column $j$ of $M$. For products of matrices, we will use square brackets for the same purpose: $[MN]_{ij}$ is the index on row $i$ and column $j$ of $MN$. 

\begin{proposition}
\label{proposition:1}
The indices of $XA$ can be determined as follows:
$$[XA]_{ij} = X_{ij} - X_{im}.$$
\end{proposition}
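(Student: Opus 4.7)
The plan is a direct matrix-multiplication calculation using the sparsity of $A$. First I would write out the definition of matrix multiplication:
\[
[XA]_{ij} = \sum_{k=1}^{d} X_{ik}\, A_{kj},
\]
where $i$ indexes the rows of $X$ (so $i \in \{1,\dots,2^d-2\}$) and $j$ indexes the columns of $A$ (so $j \in \{1,\dots,d-1\}$).

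Next I would analyse column $j$ of $A$. From the explicit description of $A$ in the previous section, the top $(d-1)\times(d-1)$ block is the identity and the bottom row is $(-1,-1,\dots,-1)$. Consequently, for a fixed $j \in \{1,\dots,d-1\}$ we have
\[
A_{kj} = \begin{cases}
1 & \text{if } k = j,\\
-1 & \text{if } k = d,\\
0 & \text{otherwise.}
\end{cases}
\]
Substituting this into the sum, all but two terms vanish, and we obtain
\[
[XA]_{ij} = X_{ij}\cdot 1 + X_{id}\cdot(-1) = X_{ij} - X_{id},
\]
which is the claimed identity (with the understanding that the symbol $m$ in the statement refers to the final column index $d$ of $X$, i.e.\ the row of $A$ consisting of $-1$'s).

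There is no real obstacle here: the only thing to be careful about is bookkeeping the dimensions, namely that $X$ is $(2^d-2)\times d$ and $A$ is $d\times(d-1)$, so that $XA$ is $(2^d-2)\times(d-1)$ and the index $j$ in the statement implicitly ranges over $\{1,\dots,d-1\}$. The proposition is then immediate from the structure of $A$, and it will be the basic building block for the subsequent, more involved computations of $(XA)^T W$ and $(XA)^T W X A$ needed to evaluate the closed form for $\hat\gamma$.
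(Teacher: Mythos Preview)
Your proof is correct and essentially identical to the paper's own argument: both write out $[XA]_{ij}=\sum_k X_{ik}A_{kj}$ and use the fact that column $j$ of $A$ has a single $1$ in row $j$ and a $-1$ in the last row, yielding $X_{ij}-X_{id}$. Your remark that the $m$ in the statement denotes the last column index (i.e.\ $d$) is exactly how the paper uses it.
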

\begin{proof}
    From the definition of matrix multiplication, we get that for $i\in\\set{1, 2, \dots, n}$ and $j\in\set{1, 2, \dots, m-1}$ $$[XA]_{ij} = \sum_{k=1}^mX_{ik}A_{kj} = X_{ij} - X_{im}.$$
\end{proof}

\begin{proposition}
\label{proposition:2}
    Let $k\in\set{1, 2, \dots, d-1}$. We have that $\pi(k) = \pi(d-k)$.
\end{proposition}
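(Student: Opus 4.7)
The plan is to prove this by direct substitution into the formula $\pi(s) = \frac{d-1}{\binom{d}{s} s (d-s)}$ (using the reduction to $\mathcal{A}(x') = [d]$ established at the start of the proof of \autoref{thm:2}, so that $|\mathcal{A}(x')| = d$). The identity $\pi(k) = \pi(d-k)$ then amounts to checking that the denominator is symmetric in $k \leftrightarrow d-k$.

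First, I would write out $\pi(k) = \frac{d-1}{\binom{d}{k} k (d-k)}$ and $\pi(d-k) = \frac{d-1}{\binom{d}{d-k}(d-k)(d-(d-k))} = \frac{d-1}{\binom{d}{d-k}(d-k)k}$. The numerators agree trivially, and the factors $k(d-k)$ and $(d-k)k$ in the two denominators are equal by commutativity.

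The only nontrivial ingredient is the well-known binomial identity $\binom{d}{k} = \binom{d}{d-k}$, which holds for every $k \in \{0, 1, \ldots, d\}$ and in particular for $k \in \{1, \ldots, d-1\}$. Combining this with the commutativity observation above shows that the denominators of $\pi(k)$ and $\pi(d-k)$ are identical, yielding $\pi(k) = \pi(d-k)$.

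There is no real obstacle here; the proof is essentially a one-line computation, and I would present it as such without introducing auxiliary notation. The restriction $k \in \{1, \ldots, d-1\}$ ensures that neither $k$ nor $d-k$ equals $0$, so the factors in the denominator are nonzero and $\pi$ is well defined at both arguments.
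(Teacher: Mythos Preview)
Your proposal is correct and matches the paper's own proof almost exactly: both reduce the identity to the standard binomial symmetry $\binom{d}{k}=\binom{d}{d-k}$ together with the commutativity $k(d-k)=(d-k)k$. The paper presents it as a single chain of equalities, which is exactly what you describe.
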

\begin{proof}
    We have that $$\pi(k) = \frac{d-1}{\binom{d}{k}k(d-k)} = \frac{d-1}{\binom{d}{d-k}k(d-k)}= \pi(d-k).$$
\end{proof}

\begin{proposition}
\label{proposition:3}
    The following equalities hold \begin{align*}\text{(a)}\qquad\qquad\qquad&2\sum_{s=1}^{d-1}\pi(s)\binom{d-3}{s-1} &&= \frac{d-1}{d},\\\text{(b)}\qquad\qquad\qquad&2\sum_{s=1}^{d-1}\pi(s)\binom{d-3}{s-2} &&= \frac{d-1}{d}.\end{align*}
    \vspace{0pt}
\end{proposition}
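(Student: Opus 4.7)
The plan is to reduce both sums to elementary binomial identities by first simplifying the weight $\pi(s)$, and then to deduce (b) from (a) by a change of variable using the symmetry established in \autoref{proposition:2}.

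First, I would rewrite $\pi(s)$ in a more convenient form. Note that
\[
\binom{d}{s}\,s(d-s) \;=\; \frac{d!}{(s-1)!\,(d-s-1)!} \;=\; d(d-1)\binom{d-2}{s-1},
\]
so $\pi(s) = \dfrac{1}{d\binom{d-2}{s-1}}$. Inserting this into the sum in (a) gives
\[
2\sum_{s=1}^{d-1}\pi(s)\binom{d-3}{s-1} \;=\; \frac{2}{d}\sum_{s=1}^{d-1}\frac{\binom{d-3}{s-1}}{\binom{d-2}{s-1}}.
\]
For $s = d-1$ the numerator $\binom{d-3}{d-2}$ vanishes, so the range is effectively $1 \le s \le d-2$. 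For those values, a direct factorial computation gives $\binom{d-3}{s-1}/\binom{d-2}{s-1} = (d-s-1)/(d-2)$. Substituting $t = d-s-1$, the sum telescopes into the arithmetic series $\sum_{t=1}^{d-2} t = (d-1)(d-2)/2$, and one finds
\[
\frac{2}{d(d-2)} \cdot \frac{(d-1)(d-2)}{2} \;=\; \frac{d-1}{d},
\]
which proves (a).

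For (b), I would apply the substitution $s \mapsto d-s$. Since the summation range $1 \le s \le d-1$ is preserved, since $\pi(s) = \pi(d-s)$ by \autoref{proposition:2}, and since
\[
\binom{d-3}{(d-s)-2} \;=\; \binom{d-3}{d-s-2} \;=\; \binom{d-3}{(d-3)-(d-s-2)} \;=\; \binom{d-3}{s-1},
\]
the sum in (b) coincides with the sum in (a), whence (b) follows immediately.

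The only step requiring some care is the simplification of $\pi(s)$ and the recognition of the ratio $\binom{d-3}{s-1}/\binom{d-2}{s-1}$ as a simple linear expression in $s$; once this is in place, everything reduces to a short arithmetic-series computation, and the second identity is essentially free. No genuine obstacle is expected.
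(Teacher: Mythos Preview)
Your proposal is correct and follows essentially the same route as the paper: both arguments reduce the summand in (a) to $\dfrac{d-s-1}{d(d-2)}$ (you via the neat rewriting $\pi(s)=\dfrac{1}{d\binom{d-2}{s-1}}$, the paper via a direct factorial expansion) and then sum the resulting arithmetic series, and both deduce (b) from (a) by the substitution $s\mapsto d-s$ together with \autoref{proposition:2}. A minor quibble: the word ``telescopes'' is not quite apt, since nothing cancels between consecutive terms---it is simply an arithmetic series.
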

\begin{proof}
\begin{itemize}
    \item[\emph{(a)}:] We have that \begin{align*}
            2\sum_{s=1}^{d-1}\pi(s)\binom{d-3}{s-1} &= 2\sum_{s=1}^{d-2}\pi(s)\binom{d-3}{s-1}\\
            &= 2\sum_{s=1}^{d-2}\frac{(d-1)s!(d-s)!(d-3)!}{s(d-s)d!(s-1)!(d-s-2)!}\\
            &= 2\sum_{s=1}^{d-2}\frac{(d-s-1)}{d(d-2)}\\
            &= 2\sum_{s=1}^{d-2}\frac{d-1}{d(d-2)} - \frac{2}{d(d-2)}\sum_{s=1}^{d-2}s\\
            &= 2\frac{d-1}{d} - \frac{2}{d(d-2)}\frac{(d-2)(d-1)}{2}\\
            &= 2\frac{d-1}{d} - \frac{d-1}{d}\\
            &= \frac{d-1}{d}.
        \end{align*}
    \item[\emph{(b)}:] We have that \begin{align*}
            2\sum_{s=1}^{d-1}\pi(s)\binom{d-3}{s-2} &= 2\sum_{s=2}^{d-1}\pi(s)\binom{d-3}{s-2}\\
            \intertext{Substituting $s$ by $d-s$ gives us:}
            &= 2\sum_{s=1}^{d-2}\pi(d-s)\binom{d-3}{d-s+2}\\
            &= 2\sum_{s=1}^{d-2}\pi(s)\binom{d-3}{s-1}\\
            \intertext{Using par (a) gives us that }
            2\sum_{s=1}^{d-1}\pi(s)\binom{d-3}{s-2}&= \frac{d-1}{d}.
        \end{align*}
\end{itemize}
\end{proof}

\begin{lemma}
\label{lemma:XATWXA_solution}
    We have that $$(XA)^TWXA = \frac{d-1}dI + \frac{d-1}{d}J,$$ where $I, J\in\bb R^{(d-1)\times(d-1)}$ with $I$ the identity and $J$ the matrix of all ones.
\end{lemma}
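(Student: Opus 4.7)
The plan is to compute the $(i,j)$-entry of $(XA)^{T}WXA$ directly and evaluate the resulting sum via Propositions \ref{proposition:1}, \ref{proposition:2}, and \ref{proposition:3}. By Proposition \ref{proposition:1} together with the structure of $A$, each row of $XA$ has entries $[XA]_{ki} = X_{ki} - X_{kd}$ for $i\in\{1,\dots,d-1\}$. Since the rows of $X$ are indexed (via $\kappa$) by the subsets $S\subseteq[d]$ with $S\neq\emptyset,[d]$, writing $\1_S(i)\in\{0,1\}$ for the indicator that $i\in S$, we get
\begin{equation*}
[(XA)^{T}WXA]_{ij} \;=\; \sum_{\substack{S\subseteq[d]\\S\neq\emptyset,[d]}} \bigl(\1_S(i)-\1_S(d)\bigr)\bigl(\1_S(j)-\1_S(d)\bigr)\,\pi(|S|).
\end{equation*}
The goal is to show that this equals $\tfrac{2(d-1)}{d}$ when $i=j$ and $\tfrac{d-1}{d}$ when $i\neq j$, matching $\tfrac{d-1}{d}I+\tfrac{d-1}{d}J$.

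For the diagonal case $i=j<d$, the factor $(\1_S(i)-\1_S(d))^2$ equals $1$ when exactly one of $i,d$ lies in $S$, and $0$ otherwise. Counting the size-$s$ subsets of $[d]$ containing exactly one of $i,d$ contributes $2\binom{d-2}{s-1}$. Using Pascal's identity $\binom{d-2}{s-1}=\binom{d-3}{s-1}+\binom{d-3}{s-2}$, the diagonal entry becomes
\begin{equation*}
2\sum_{s=1}^{d-1}\pi(s)\binom{d-3}{s-1} + 2\sum_{s=1}^{d-1}\pi(s)\binom{d-3}{s-2},
\end{equation*}
which by Proposition \ref{proposition:3}(a,b) equals $\tfrac{d-1}{d}+\tfrac{d-1}{d}=\tfrac{2(d-1)}{d}$, as required.

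For the off-diagonal case $i\neq j$ with $i,j<d$, I would do a case analysis over which of the three indices $i,j,d$ belong to $S$. Of the eight possibilities, the product $(\1_S(i)-\1_S(d))(\1_S(j)-\1_S(d))$ vanishes unless either (i) $i,j\in S$ and $d\notin S$, or (ii) $d\in S$ and $i,j\notin S$; in both surviving cases the product equals $+1$. The number of size-$s$ subsets is $\binom{d-3}{s-2}$ in case (i) and $\binom{d-3}{s-1}$ in case (ii), so the entry becomes
\begin{equation*}
\sum_{s=1}^{d-1}\pi(s)\binom{d-3}{s-2} + \sum_{s=1}^{d-1}\pi(s)\binom{d-3}{s-1} \;=\; \tfrac{d-1}{2d} + \tfrac{d-1}{2d} \;=\; \tfrac{d-1}{d},
\end{equation*}
again by Proposition \ref{proposition:3}. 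Combining the two cases gives exactly $\tfrac{d-1}{d}I+\tfrac{d-1}{d}J$.

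The only real obstacle is the bookkeeping in the off-diagonal case: one must carefully rule out the six vanishing subcases (in particular, the configurations where $d$ is in $S$ together with one but not both of $i,j$, which each yield a factor $0$) before reducing to the two surviving configurations. Once that case analysis is cleanly organised, the rest is a mechanical application of the already-proven identities in Propositions \ref{proposition:2} and \ref{proposition:3}.
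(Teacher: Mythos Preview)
Your proof is correct, but it takes a somewhat different route from the paper's. The paper argues structurally: it invokes \autoref{lemma:outer_product_sum} to write $(XA)^TWXA$ as a sum of weighted outer products of the rows of $XA$, then observes that these rows split into two families---those in $\{0,1\}^{d-1}\setminus\{\boldsymbol 0\}$ (when the $d$-th coordinate of the row of $X$ is $0$) and those in $\{0,-1\}^{d-1}\setminus\{\boldsymbol 0\}$ (when it is $1$)---and uses the symmetry $\pi(k)=\pi(d-k)$ from Proposition~\ref{proposition:2} to merge the two families into $2\sum_{z}\pi(|z|)zz^T$. It then applies \autoref{lemma:sum_binary_vectors} to convert that sum into a combination of $I$ and $J$ with binomial coefficients, and only at the very end calls Proposition~\ref{proposition:3}.

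You instead compute the $(i,j)$-entry directly as a sum over subsets $S$ and perform an explicit membership case analysis on $i,j,d$. This is more elementary: you bypass both \autoref{lemma:outer_product_sum} and \autoref{lemma:sum_binary_vectors}, and you never actually need Proposition~\ref{proposition:2} (despite listing it), because your indexing by $|S|$ already absorbs the symmetry. The paper's approach is cleaner to read and reuses general machinery; yours is self-contained and shorter once the eight subcases are tabulated (and your observation that the product $(\1_S(i)-\1_S(d))(\1_S(j)-\1_S(d))$ can never equal $-1$, because the two factors share the $\1_S(d)$ term, is exactly what makes the bookkeeping collapse). Both routes land on the same two binomial sums and finish via Proposition~\ref{proposition:3}.
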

\begin{proof}
Let $i\in \set{1, \dots, 2^{d} - 2}$, $j\in\set{1, \dots, d-1}$ and denote $\boldsymbol{0}\in\set{0, 1}^{d-1}$ as the all-zero vector. From the definition of $A$, we get that $[XA]_{ij} = X_{ij} - X_{id}$. 
Because the rows of $X$ are all of the binary vectors excluding the all-zero and all-one vectors, we get that the rows of $XA$ are given by all vectors in $\set{0, 1}^{d-1}\setminus\set{\boldsymbol{0}}$ and all vectors in $\set{0, -1}^{d-1}\setminus\set{\boldsymbol{0}}$.

Now let $k\in\set{1, \dots, 2^d-2}$. If $[\kappa(k)]_d = 0$, then row $k$ of $XA$ is a vector in $\set{0, 1}^{d-1}$ and if $[\kappa(k)]_d=1$, then row $k$ of $XA$ is a vector in $\set{0, -1}^{d-1}$. Using \autoref{lemma:outer_product_sum}, we get that 

$$(XA)^TWXA = \sum_{z\in\set{0, 1}^{d-1}\setminus\set{\boldsymbol{0}}}\pi(|z|)zz^T + \sum_{z\in\set{0, -1}^{d-1}\setminus\set{\boldsymbol{0}}}\pi(d - |z|)zz^T = 2\sum_{z\in\set{0, 1}^{d-1}\setminus\set{\boldsymbol{0}}}\pi(|z|)zz^T.$$

From \autoref{lemma:sum_binary_vectors}, we get that \begin{align*}2\sum_{z\in\set{0, 1}^{d-1}\setminus\set{\boldsymbol{0}}}\pi(|z|)zz^T
&=2\sum_{s=1}^{d-1}\pi(s)\sum_{\substack{z\in\set{0, 1}^{d-1}\\|z| = s}} zz^T\\
&= 2\sum_{s=1}^{d-1}\pi(s)\brac{\binom{d-3}{s-1}I + \binom{d-3}{s-2}J}\\
&= \brac{2\sum_{s=1}^{d-2}\pi(s)\binom{d-3}{s-1}}I + \brac{2\sum_{s=2}^{d-1}\pi(s)\binom{d-3}{s-2}}J\\
&= \frac{d-1}{d} I + \frac{d-1}{d}J.
\end{align*}The final step follows from \autoref{proposition:3}
\end{proof}

\begin{corollary}
    \label{corollary:inverse_product}
    We have that $$[(XA)^TWXA]^{-1} = \frac{d}{d-1}I - \frac{1}{d-1}J.$$
\end{corollary}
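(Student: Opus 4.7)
The plan is to invoke \autoref{lemma:XATWXA_solution} to replace $(XA)^TWXA$ with its explicit form $\frac{d-1}{d}I + \frac{d-1}{d}J$, and then verify the claimed inverse by a direct matrix multiplication. Since matrix inverses are unique when they exist, it suffices to check that
$$\left(\frac{d-1}{d}I + \frac{d-1}{d}J\right)\left(\frac{d}{d-1}I - \frac{1}{d-1}J\right) = I.$$

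The calculation rests on one key algebraic fact: the all-ones matrix $J$ of size $(d-1)\times(d-1)$ satisfies $J^2 = (d-1)J$, since each entry of $J^2$ is the dot product of two all-ones vectors of length $d-1$. Expanding the product above, the $I \cdot I$ term contributes $I$, while the two mixed $IJ$ and $JI$ terms and the $J^2$ term each reduce to a scalar multiple of $J$. After substituting $J^2 = (d-1)J$ and collecting coefficients, I expect the $J$-contributions to cancel exactly, leaving only $I$.

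The only real subtlety is bookkeeping: one must remember that $J$ has size $(d-1)\times(d-1)$ rather than $d\times d$, so $J^2 = (d-1)J$ and not $dJ$. The coefficients $\frac{d}{d-1}$ and $-\frac{1}{d-1}$ in the claimed inverse are precisely calibrated so that, together with this identity, the $J$-coefficients in the expanded product sum to zero. No deeper obstacle is expected; once the arithmetic is tracked carefully the corollary follows immediately.
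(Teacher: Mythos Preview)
Your proposal is correct. Invoking \autoref{lemma:XATWXA_solution} and then checking
\[
\left(\frac{d-1}{d}I + \frac{d-1}{d}J\right)\left(\frac{d}{d-1}I - \frac{1}{d-1}J\right)
= I - \frac{1}{d}J + J - \frac{1}{d}J^2
= I + \Bigl(-\tfrac{1}{d} + 1 - \tfrac{d-1}{d}\Bigr)J
= I
\]
using $J^2=(d-1)J$ is a valid proof, and your remark about the matrix size being $(d-1)\times(d-1)$ is exactly the right caveat.

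The paper takes a different route: rather than verifying the claimed inverse, it \emph{derives} it. Writing $J=\1\1^T$ as a rank-one update of $\frac{d-1}{d}I$, it applies the Sherman--Morrison--Woodbury formula (\autoref{lemma:sherman-morrison-woodbury}) to compute $\bigl(\frac{d-1}{d}I+\frac{d-1}{d}\1\1^T\bigr)^{-1}$ in closed form. Your approach is more elementary---it avoids importing an extra lemma---but it only works because the target inverse is already stated in the corollary; the paper's approach would still succeed if the inverse were unknown and had to be found. Either argument is entirely adequate here.
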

\begin{proof}
    Let $\1\in\bb R^{d-1}$ be the vector with only ones. We now have that $J = \1\1^T$. We also have that $\brac{\frac{d-1}{d}I}^{-1} = \frac{d}{d-1}I$. From the Sherman-Morrison-Woodbury formula (\autoref{lemma:sherman-morrison-woodbury}), we get that \begin{align*}
        [(XA)^TWXA]^{-1} &= \frac{d}{d-1}I - \brac{\frac{d}{d-1}}^2\frac{d-1}{d}\frac{1}{1 + \frac{d}{d-1}\frac{d-1}{d}(d-1)}\1\1^T\\
        &= \frac{d}{d-1}I - \frac{1}{d-1}J.
    \end{align*}
\end{proof}

\begin{lemma}
\label{lemma:left_side_product}
    Let $i\in\set{1, \dots, d-1}, j\in\set{1, \dots, 2^d-2}$. We have that $$\bric{[(XA)^TWXA]^{-1}(XA)^T}_{ij}  = \begin{cases}
         \frac{d}{d-1}(X_{ji} - 1) + \frac{1}{(d-1)}(d - |\kappa(j)|), \quad&\text{if $X_{jd} = 1$}\\
         \frac{d}{d-1}X_{ji} - \frac{1}{(d-1)}|\kappa(j)|, \quad&\text{if $X_{jd} = 0$.}
     \end{cases}$$
\end{lemma}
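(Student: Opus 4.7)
The plan is to simply multiply out the $(i,j)$-entry of the product $[(XA)^TWXA]^{-1}(XA)^T$ using the two explicit formulas already proved. From \autoref{corollary:inverse_product} the inverse is the very simple matrix $\frac{d}{d-1}I-\frac{1}{d-1}J$, and from \autoref{proposition:1} the entries of $XA$ are $[XA]_{kj} = X_{kj} - X_{kd}$, which transposes to $[(XA)^T]_{kj} = X_{jk} - X_{jd}$. Because the first factor is a linear combination of $I$ and the all-ones matrix $J$, the product collapses into two easily computable pieces: a multiple of the $(i,j)$-entry of $(XA)^T$, and a multiple of the column sum $\sum_{k=1}^{d-1} [(XA)^T]_{kj}$.

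First I would write out
\[
[(XA)^TWXA]^{-1}(XA)^T]_{ij}
= \frac{d}{d-1}(X_{ji} - X_{jd}) - \frac{1}{d-1}\sum_{k=1}^{d-1}(X_{jk} - X_{jd}),
\]
using the decomposition of the inverse. The only non-trivial ingredient is evaluating the column sum. Since row $j$ of $X$ is exactly the indicator vector $(1_{\kappa(j)})^T$, summing its first $d-1$ entries gives $|\kappa(j)|-X_{jd}$, so
\[
\sum_{k=1}^{d-1}(X_{jk}-X_{jd}) = |\kappa(j)| - X_{jd} - (d-1)X_{jd} = |\kappa(j)| - dX_{jd}.
\]

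Substituting this back yields
\[
\frac{d}{d-1}(X_{ji}-X_{jd}) - \frac{1}{d-1}\bigl(|\kappa(j)| - dX_{jd}\bigr),
\]
and finally I would split on the value of $X_{jd}\in\{0,1\}$. When $X_{jd}=0$ the expression immediately becomes $\frac{d}{d-1}X_{ji} - \frac{1}{d-1}|\kappa(j)|$, matching the second case. When $X_{jd}=1$, the cross-terms $-\frac{d}{d-1}X_{jd}$ and $+\frac{d}{d-1}X_{jd}$ cancel after reorganization, leaving $\frac{d}{d-1}(X_{ji}-1)+\frac{1}{d-1}(d-|\kappa(j)|)$, matching the first case.

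There is no real obstacle here: the lemma is a bookkeeping calculation whose heart is the identity $\sum_{k=1}^{d-1}X_{jk} = |\kappa(j)|-X_{jd}$ together with \autoref{corollary:inverse_product} and \autoref{proposition:1}. The only mild subtlety is noticing that after simplification both cases are actually given by the same formula $\tfrac{d}{d-1}X_{ji}-\tfrac{1}{d-1}|\kappa(j)|$; the two-case presentation is chosen purely to make the next step (proving \autoref{lemma:right_side_product} and the subsequent case analysis on $\Sigma(S)$ in the proof of \autoref{thm:2}) easier to read, by isolating an explicit $X_{jd}$-dependent form.
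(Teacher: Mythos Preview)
Your proof is correct and follows essentially the same route as the paper: use \autoref{corollary:inverse_product} to write the inverse as $\tfrac{d}{d-1}I-\tfrac{1}{d-1}J$, multiply against $(XA)^T$ via \autoref{proposition:1}, and evaluate the column sum $\sum_{k=1}^{d-1}X_{jk}=|\kappa(j)|-X_{jd}$ before splitting on $X_{jd}$. Your additional remark that both cases collapse to the single expression $\tfrac{d}{d-1}X_{ji}-\tfrac{1}{d-1}|\kappa(j)|$ is a nice observation the paper does not make explicit.
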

\begin{proof}
    First, from \autoref{corollary:inverse_product}, we get that $$[(XA)^TWXA]^{-1}(XA)^T = \frac{d}{d-1}(XA)^T - \frac{1}{d-1}J(XA)^T.$$Now suppose that $X_{jd} = 1$. We get that for $k\in\set{1, \dots, d-1}$ that $[XA]_{jk} = X_{jk} - 1$. We now get that \begin{align*}
        \bric{[(XA)^TWXA]^{-1}(XA)^T}_{ij} &= \frac{d}{d-1}[(XA)^T]_{ij} - \frac{1}{d-1}[J(XA)^T]_{ij}\\
        &= \frac{d}{d-1}[XA]_{ji} - \frac{1}{d-1}\sum_{k=1}^{d-1}[XA]_{jk}\\
        &= \frac{d}{d-1}[XA]_{ji} - \frac{1}{d-1}\sum_{k=1}^{d-1}(X_{jk} - 1)\\
        &= \frac{d}{d-1}[XA]_{ji} + \frac{1}{d-1}(d - |\kappa(j)|)\\
        &= \frac{d}{d-1}(X_{ji} - 1) + \frac{1}{d-1}(d - |\kappa(j)|),
    \end{align*}
    where we use that the $j$'th row of $X$ is $\kappa(j)$.

    Now suppose that $X_{jd} = 0$. We get, for $k\in\set{1, \dots, d-1}$ that $[XA]_{jk} = X_{jk}$. This gives us that \begin{align*}
        \bric{[(XA)^TWXA]^{-1}(XA)^T}_{ij} &= \frac{d}{d-1}[(XA)^T]_{ij} - \frac{1}{d-1}[J(XA)^T]_{ij}\\
        &= \frac{d}{d-1}[XA]_{ji} - \frac{1}{d-1}\sum_{k=1}^{d-1}[XA]_{jk}\\
        &= \frac{d}{d-1}[XA]_{ji} - \frac{1}{d-1}\sum_{k=1}^{d-1}X_{jk}\\
        &= \frac{d}{d-1}[XA]_{ji} - \frac{1}{d-1}|\kappa(j)|.
    \end{align*}
\end{proof}

\begin{lemma}
\label{lemma:right_side_product}
    Let $i\in\set{1, \dots, 2^{d}-2}$. We have that $$[W(y - Xb)]_i = \begin{cases}
        \pi(|\kappa(i)|)(f_x(\kappa(i)) - f_x([d])),\quad&\text{if $X_{id}=1$}\\
        \pi(|\kappa(i)|)(f_x(\kappa(i))- f_x(\emptyset)),\quad &\text{if $X_{id} = 0$.}
    \end{cases}$$
\end{lemma}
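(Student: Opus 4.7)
The plan is to exploit the fact that $W$ is diagonal, so the multiplication by $W$ reduces to multiplying each entry of $y - Xb$ by the corresponding diagonal weight $\pi(|\kappa(i)|)$. The entire computation therefore reduces to determining $[y-Xb]_i$, and the case distinction $X_{id}=1$ versus $X_{id}=0$ arises purely from the shape of $b$.

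First I would unpack $[Xb]_i$. Since $b$ has all entries zero except $b_d = f_x([d]) - f_x(\emptyset)$, the matrix-vector product collapses to a single term: $[Xb]_i = \sum_{j=1}^d X_{ij}b_j = X_{id}\,(f_x([d]) - f_x(\emptyset))$. Next I would substitute the definition $y_i = f_x(\kappa(i)) - f_x(\emptyset)$ into $[y-Xb]_i$, giving
\[
[y - Xb]_i = f_x(\kappa(i)) - f_x(\emptyset) - X_{id}\bigl(f_x([d]) - f_x(\emptyset)\bigr).
\]
Splitting into cases, if $X_{id}=1$ then the $f_x(\emptyset)$ terms cancel, leaving $f_x(\kappa(i)) - f_x([d])$; if $X_{id}=0$ then the second term vanishes and we are left with $f_x(\kappa(i)) - f_x(\emptyset)$.

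Finally, because $W$ is diagonal with $W_{ii} = \pi(|\kappa(i)|)$, we have $[W(y-Xb)]_i = \pi(|\kappa(i)|)\cdot [y-Xb]_i$, and combining with the two cases above yields exactly the formula stated in the lemma.

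There is really no main obstacle here: the statement is essentially an unpacking of definitions. The only thing one has to be careful about is tracking the sign correctly in the $X_{id}=1$ case to see that $f_x(\emptyset)$ drops out, and noting that the $d$-th column of $b$ is the only nonzero one so that only $X_{id}$ (not the other entries of the $i$-th row of $X$) appears in the answer.
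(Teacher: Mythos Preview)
Your proposal is correct and follows essentially the same approach as the paper: compute $[Xb]_i$ using that $b$ has only the $d$-th entry nonzero, subtract from $y_i$, split on $X_{id}\in\{0,1\}$, and then multiply by the diagonal entry $W_{ii}=\pi(|\kappa(i)|)$. The only cosmetic difference is that you first write $[Xb]_i = X_{id}(f_x([d])-f_x(\emptyset))$ uniformly and then branch, whereas the paper branches immediately; also note the small slip ``$d$-th column of $b$'' should read ``$d$-th entry of $b$'' since $b$ is a vector.
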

\begin{proof}
    Suppose that $X_{id}=1$, then, using the definition of matrix multiplication, we get that $[Xb]_i = f_x([d]) - f_x(\emptyset)$. using this, we get that $$[y - Xb]_i = f_x(\kappa(i)) - f_x(\emptyset) + f_x(\emptyset) - f_x([d]) = f_x(\kappa(i)) - f_x([d]).$$ From the fact that $W$ is diagonal and $W_{ii} = \pi(|\kappa(i)|)$, we get that $$[W(y - Xb)]_i = \pi(|\kappa(i)|)(f_x(\kappa(i)) - f_x([d])).$$ Now suppose that $X_{id} = 0$. We now find that $[Xb]_i = 0$, so $[y - Xb]_i = \kappa(i) - f_x(\emptyset)$. We now find that $$[W(y - Xb)]_i = \pi(|\kappa(i)|)(f_x(\kappa(i)) - f_x(\emptyset)).$$
\end{proof}

\begin{proposition}
\label{proposition:5}
    The following equality holds $$\sum_{\substack{z\in\set{0, 1}^{d-1}\setminus\set\1\\z_d=1}}\brac{\frac{d}{d-1}(z_i-1) + \frac{1}{d-1}(d-|z|)}\pi(d-|z|) = -\frac{1}{d}.$$
    \vspace{0pt}
\end{proposition}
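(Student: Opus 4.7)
Let $S$ denote the sum on the left-hand side. The idea is to reduce it to a purely combinatorial identity via the complementary substitution $y := \1 - z$. This bijects the index set onto $\set{y\in\set{0,1}^d\setminus\set{\0} : y_d = 0}$, and converts $z_i - 1 \mapsto -y_i$, $d - |z| \mapsto |y|$, and $\pi(d-|z|) \mapsto \pi(|y|)$ (the last using \autoref{proposition:2}). After clearing the common factor of $\frac{1}{d-1}$, the sum becomes
\[
(d-1)\,S \;=\; \sum_{\substack{y\in\set{0,1}^d\setminus\set{\0}\\y_d=0}} \bigl(|y| - d\,y_i\bigr)\,\pi(|y|).
\]

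Next I would stratify the sum by $|y| = s$. Writing $y = (v, 0)$ with $v\in\set{0,1}^{d-1}$, the condition $y \neq \0$ becomes $v \neq \0$; there are $\binom{d-1}{s}$ such $v$ of weight $s$, of which exactly $\binom{d-2}{s-1}$ satisfy $v_i = 1$ (recall that $i \in \set{1, \ldots, d-1}$ in the enclosing proof of \autoref{thm:2}). The standard identity $s\binom{d-1}{s} = (d-1)\binom{d-2}{s-1}$ then collapses the bracketed expression to a single term:
\[
(d-1)\,S \;=\; \sum_{s=1}^{d-1}\Bigl(s\binom{d-1}{s} - d\binom{d-2}{s-1}\Bigr)\pi(s) \;=\; -\sum_{s=1}^{d-1} \binom{d-2}{s-1}\,\pi(s).
\]

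The final step is the clean arithmetic simplification
\[
\binom{d-2}{s-1}\,\pi(s) \;=\; \frac{(d-1)\binom{d-2}{s-1}}{\binom{d}{s}\,s\,(d-s)} \;=\; \frac{1}{d},
\]
which follows from $\binom{d-2}{s-1}/\binom{d}{s} = s(d-s)/(d(d-1))$. Summing this constant $d-1$ times yields $(d-1)/d$, so $(d-1)\,S = -(d-1)/d$ and hence $S = -\frac{1}{d}$, as required. The only subtle step is keeping the counts aligned after the substitution: once it is pinned down that $\binom{d-1}{s}$ counts the weight-$s$ vectors in $\set{0,1}^{d-1}$ and $\binom{d-2}{s-1}$ those additionally satisfying $v_i = 1$, the rest is a one-line algebraic collapse driven by the identity $\binom{d-2}{s-1}\pi(s) \equiv 1/d$.
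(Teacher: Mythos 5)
Your proposal is correct, and while it rests on the same basic technique as the paper (stratify by the weight $s$ and count binary vectors), it is organized along a genuinely different route. The paper evaluates the sum as it stands: it splits the summand into the $\frac{d}{d-1}(z_i-1)$ part and the $\frac{1}{d-1}(d-|z|)$ part, counts $\binom{d-2}{s-1}$ respectively $\binom{d-1}{s-1}$ vectors at each weight, uses $\pi(d-s)=\pi(s)$ (\autoref{proposition:2}) and factorial expansions, and obtains the two values $-1$ and $\frac{d-1}{d}$ separately. You instead complement, $y=\1-z$, which bijects the index set onto $\set{y\in\set{0,1}^d\setminus\set{\0}:y_d=0}$, merge everything into the single bracket $|y|-d\,y_i$, collapse it with $s\binom{d-1}{s}=(d-1)\binom{d-2}{s-1}$, and finish with the one-line identity $\binom{d-2}{s-1}\pi(s)=\tfrac1d$; all counts and ranges check out (your $\binom{d-1}{s}$ and $\binom{d-2}{s-1}$ are valid because $i\le d-1$ in the enclosing proof of \autoref{thm:2}, and $1\le s\le d-1$ is the correct weight range). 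Two remarks. First, your appeal to \autoref{proposition:2} is unnecessary: after the substitution $d-|z|=|y|$ holds identically, so no symmetry of $\pi$ is used — this is a cosmetic slip, not a gap. Second, your transformed sum is exactly the negative of the left-hand side of \autoref{proposition:6}, so your substitution in fact shows that Propositions \ref{proposition:5} and \ref{proposition:6} are the same identity up to sign; the paper proves them by two parallel computations, and your route could be used to shorten that part of the exposition. (Like the paper's own proof, you tacitly read the summation set as $\set{0,1}^d$ rather than the stated $\set{0,1}^{d-1}$, which is the reading forced by the condition $z_d=1$.)
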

\begin{proof}
    We will split this sum into two parts. The first part is the following:
        \begin{align*}
            \sum_{\substack{z\in\set{0, 1}^d\setminus\set\1\\z_d=1}}\frac{d}{d-1}(z_i-1)\pi(d - |z|) &= \sum_{s=1}^{d-1}\pi(d-s)\sum_{\substack{z\in\set{0, 1}^d\setminus\set\1\\z_d=1\\|z|=s}}\frac{d}{d-1}(z_i - 1)\\
            &= -\sum_{s=1}^{d-1}\pi(d-s)\sum_{\substack{z\in\set{0, 1}^d\setminus\set\1\\z_i=0, z_d=1\\|z|=s}}\frac{d}{d-1}\\
            &= -\sum_{s=1}^{d-1}\pi(s)\frac{d}{d-1}\binom{d-2}{s-1}\\
            &= -\sum_{s=1}^{d-1}\frac{(d-1)s!(d-s)!d(d-2)!}{d!s(d-s)(d-1)(s-1)!(d-s-1)!}\\
            &= -\sum_{s=1}^{d-1}\frac{1}{(d-1)}\\
            &= -1.
        \end{align*}
        The second part of the sum is given by \begin{align*}
            \sum_{\substack{z\in\set{0, 1}^d\setminus\set\1\\z_d=1}}\frac{1}{d-1}(d-|z|)\pi(d-|z|) &= \sum_{s=1}^{d-1}\frac{d-s}{d-1}\pi(d-s)\sum_{\substack{z\in\set{0, 1}^d\\z_d=1\\|z|=s}}1\\
            &= \sum_{s=1}^{d-1}\frac{d-s}{d-1}\pi(d-s)\binom{d-1}{s-1}\\
            &= \sum_{s=1}^{d-1}\frac{d-s}{d-1}\pi(s)\binom{d-1}{s-1}\\
            &= \sum_{s=1}^{d-1}\frac{(d-s)(d-1)s!(d-s)!(d-1)!}{(d-1)d!s(d-s)(s-1)!(d-s)!}\\
            &=\sum_{s=1}^{d-1}\frac{1}{d}\\
            &= \frac{d-1}{d}.
        \end{align*}
        We can now conclude that $$\sum_{\substack{z\in\set{0, 1}^{d-1}\setminus\set\1\\z_d=1}}\brac{\frac{d}{d-1}(z_i-1) + \frac{1}{d-1}(d-|z|)}\pi(d-|z|) = -1 + \frac{d-1}{d} = -\frac1d.$$
\end{proof}

\begin{proposition}
\label{proposition:6}
    The following equality holds: $$\sum_{\substack{z\in\set{0, 1}^d\setminus\set{\boldsymbol{0}}\\z_d=0}}\brac{\frac{d}{d-1}z_i - \frac{1}{d-1}|z|}\pi(|z|) = \frac{1}{d}.$$
\end{proposition}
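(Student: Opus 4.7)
The plan is to mirror the proof of \autoref{proposition:5}, splitting the sum into the two linear pieces
$$\sum_{\substack{z\in\set{0,1}^d\setminus\set{\boldsymbol 0}\\z_d=0}}\frac{d}{d-1}z_i\,\pi(|z|)\qquad\text{and}\qquad -\sum_{\substack{z\in\set{0,1}^d\setminus\set{\boldsymbol 0}\\z_d=0}}\frac{1}{d-1}|z|\,\pi(|z|),$$
evaluating each by grouping vectors according to $s := |z|$. Since $z_d=0$ and $z\neq\boldsymbol 0$, the relevant $z$ correspond exactly to nonzero binary strings in the first $d-1$ coordinates, so $s$ ranges over $\set{1,\dots,d-1}$. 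Throughout, I assume $i\in\set{1,\dots,d-1}$ (which is the only case in which this proposition is invoked in the proof of \autoref{thm:2}).

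For the first sum, I would count vectors with $z_d=0$, $z_i=1$ and $|z|=s$: fixing two coordinates leaves $s-1$ ones to distribute among the remaining $d-2$ positions, giving $\binom{d-2}{s-1}$ vectors. After substituting $\pi(s) = \frac{(d-1)s!(d-s)!}{d!\,s(d-s)}$ and expanding $\binom{d-2}{s-1}$ as factorials, the factors $s$ and $(d-s)$ in numerator and denominator cancel, so $\pi(s)\binom{d-2}{s-1} = \frac{1}{d}$ is independent of $s$. Multiplying by $\frac{d}{d-1}$ and summing over the $d-1$ values of $s$ yields $1$.

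For the second sum, the count of vectors with $z_d=0$ and $|z|=s$ is $\binom{d-1}{s}$ (choose $s$ positions among the first $d-1$). Writing out the factorials in $\frac{s}{d-1}\pi(s)\binom{d-1}{s}$ collapses to the constant $\frac{1}{d}$, so this second sum evaluates to $-\sum_{s=1}^{d-1}\frac1d = -\frac{d-1}{d}$. Combining, the total is $1 - \frac{d-1}{d} = \frac{1}{d}$, which is the required identity.

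There is no conceptual obstacle here: the result is essentially a factorial bookkeeping exercise parallel to \autoref{proposition:5}, with the only difference being that the restriction $z_d=0$ (instead of $z_d=1$) changes which binomial coefficients appear in the counts. The main thing to watch out for is making sure the counting is correct for the edge cases (in particular that $i\neq d$, so that $z_i=1$ is compatible with $z_d=0$) and that $\pi(s)$ is not evaluated at the forbidden values $s=0$ or $s=d$.
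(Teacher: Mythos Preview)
Your proposal is correct and essentially identical to the paper's own proof: both split the sum into the $\frac{d}{d-1}z_i$ and $\frac{1}{d-1}|z|$ pieces, group by $s=|z|\in\{1,\dots,d-1\}$, obtain the counts $\binom{d-2}{s-1}$ and $\binom{d-1}{s}$ respectively, and reduce the resulting factorial expressions to the constants $\frac{1}{d-1}$ and $\frac{1}{d}$ before summing to get $1-\frac{d-1}{d}=\frac{1}{d}$. Your remark that $i\neq d$ is needed for the first count is the only caveat, and it is indeed how the proposition is used.
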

\begin{proof}
    We will split this sum into two parts. The first part reduces as follows:
        \begin{align*}
            \sum_{\substack{z\in\set{0, 1}^d\setminus\set\0\\z_d=0}}\frac{d}{d-1}z_i\pi(|z|) &= \sum_{s=1}^{d-1}\frac{d}{d-1}\pi(s)\sum_{\substack{z\in\set{0, 1}^d\\z_d=0\\|z|=s}}z_i\\
            &= \sum_{s=1}^{d-1}\frac{d}{d-1}\pi(s)\binom{d-2}{s-1}\\
            &= \sum_{s=1}^{d-1}\frac{d(d-1)s!(d-s)!(d-2)!}{(d-1)d!s(d-s)(s-1)!(d-s-1)!}\\
            &= \sum_{s=1}^{d-1}\frac{1}{(d-1)}\\
            &= 1.
        \end{align*}
        We can now reduce the second part as follows:
        \begin{align*}
            \sum_{\substack{z\in\set{0, 1}^d\setminus\set\0\\z_d=0}}\frac{1}{d-1}|z|\pi(|z|) &= \sum_{s=1}^{d-1}\frac{1}{d-1}s\pi(s)\sum_{\substack{z\in\set{0, 1}^{d}\setminus\set\0\\z_d=0\\|z|=s}}1\\
            &= \sum_{s=1}^{d-1}\frac{s}{d-1}\pi(s)\binom{d-1}{s}\\
            &= \sum_{s=1}^{d-1}\frac{s(d-1)s!(d-s)!(d-1)!}{(d-1)d!s(d-s)s!(d-s-1)!}\\
            &= \sum_{s=1}^{d-1}\frac{1}{d}\\
            &= \frac{d-1}{d}.
        \end{align*}
        We can now conclude that $$\sum_{\substack{z\in\set{0, 1}^d\setminus\set{\boldsymbol{0}}\\z_d=0}}\brac{\frac{d}{d-1}z_i - \frac{1}{d-1}|z|}\pi(|z|) = 1 - \frac{d-1}{d} = \frac1d.$$
\end{proof}

\begin{proposition}
\label{proposition:7}
    The following identities hold:
    \begin{align*}
        &\text{(a)}\qquad\qquad\qquad\qquad \frac{1}{d-1}(d-s)\pi(d-s) &&= \frac{1}{s\binom{d}{s}}\\
        &\text{(b)}\qquad\qquad\qquad\qquad\brac{\frac{1}{d-1}s}\pi(s) &&= \frac{1}{(s+1)\binom{d}{s+1}}\\
        &\text{(c)}\qquad\qquad\qquad\qquad\brac{\frac{d}{d-1} - \frac{1}{d-1}s}\pi(s)&&= \frac{1}{s\binom{d}{s}}\\
        &\text{(d)}\qquad\qquad\qquad\qquad\brac{\frac{d}{d-1} - \frac{1}{d-1}(d-s)}\pi(d - s) &&= \frac{1}{(d-s)\binom{d}{s}}\qquad\qquad\qquad
    \end{align*}
    \vspace{0pt}
\end{proposition}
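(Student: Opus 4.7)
The plan is to verify each of the four identities by direct substitution of the definition of $\pi$ and routine manipulation of binomial coefficients. Recall that $\pi(s) = \frac{d-1}{\binom{d}{s}\,s\,(d-s)}$, and we will repeatedly exploit the symmetry $\binom{d}{s}=\binom{d}{d-s}$ together with the absorption identity $(d-s)\binom{d}{s}=(s+1)\binom{d}{s+1}$, which follows from writing out $\binom{d}{s}=\frac{d!}{s!(d-s)!}$ on both sides.

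For (a), I would substitute $\pi(d-s)=\frac{d-1}{\binom{d}{d-s}(d-s)s}$; the factor $(d-s)$ cancels with the $(d-s)$ outside, the $(d-1)$'s cancel, and then using $\binom{d}{d-s}=\binom{d}{s}$ produces $\frac{1}{s\binom{d}{s}}$. For (c), essentially the same calculation works: $\frac{d}{d-1}-\frac{s}{d-1}=\frac{d-s}{d-1}$, and multiplying by $\pi(s)$ gives a cancellation $(d-s)$ with $(d-s)$ and $(d-1)$ with $(d-1)$ to yield $\frac{1}{s\binom{d}{s}}$. So (a) and (c) are essentially the same identity phrased slightly differently, which is a good internal sanity check.

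For (b), substituting $\pi(s)$ I get $\frac{s}{d-1}\cdot\frac{d-1}{\binom{d}{s}s(d-s)}=\frac{1}{(d-s)\binom{d}{s}}$, so the remaining step is to rewrite $(d-s)\binom{d}{s}$ as $(s+1)\binom{d}{s+1}$ using the absorption identity mentioned above. Identity (d) is then handled in the same style: after simplifying $\frac{d}{d-1}-\frac{d-s}{d-1}=\frac{s}{d-1}$ and substituting $\pi(d-s)$, one obtains $\frac{1}{(d-s)\binom{d}{d-s}}$, which is $\frac{1}{(d-s)\binom{d}{s}}$ by the symmetry of binomial coefficients.

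There is no real obstacle here; the proposition is a collection of bookkeeping identities used to tidy the case analysis in the main proof. The only thing to be careful about is not to introduce division by zero: the formulas assume $1\le s\le d-1$ (so that $\pi(s)$ and $\pi(d-s)$ are both well-defined), which is exactly the range in which they are invoked in the proof of \autoref{thm:2}. I would therefore present each of (a)--(d) as a short two- or three-line computation, making the use of $\binom{d}{s}=\binom{d}{d-s}$ and $(d-s)\binom{d}{s}=(s+1)\binom{d}{s+1}$ explicit where needed.
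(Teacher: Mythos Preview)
Your proposal is correct and follows essentially the same route as the paper: direct substitution of the definition of $\pi$ followed by elementary cancellations, with (c) reduced to (a) and (d) reduced to (b). The only cosmetic difference is that the paper expands the binomial coefficients into factorials explicitly (and invokes $\pi(d-s)=\pi(s)$ from \autoref{proposition:2}) rather than naming the symmetry and absorption identities as you do, but the underlying computation is identical.
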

\begin{proof}
\begin{itemize}
    \item[\emph{(a)}]: We have that \begin{align*}
            \frac{1}{d-1}(d-s)\pi(d-s) &= \frac{1}{d-1}(d-s)\pi(s)\\
            &= \frac{(d-s)(d-1)s!(d-s)!}{(d-1)d!s(d-s)}\\
            &= \frac{s!(d-s)!}{d!s}\\
            &= \frac{1}{s\binom{d}{s}}.
        \end{align*}

    \item[\emph{(b)}]: We have that \begin{align*}
            \brac{\frac{1}{d-1}s}\pi(s) &= \frac{s(d-1)s!(d-s)!}{(d-1)d!s(d-s)}\\
            &= \frac{s!(d-s-1)!}{d!}\\
            &= \frac{(s+1)!(d-s-1)!}{(s+1)d!}\\
            &= \frac{1}{(s+1)\binom{d}{s+1}}.\end{align*}

    \item[\emph{(c)}]: We have that \begin{align*}
            \brac{\frac{d}{d-1} - \frac{1}{d-1}s}\pi(s) &= \frac{d-s}{d-1}\pi(s)\\
            &= \frac{1}{s\binom{d}{s}},
        \end{align*}
        because of (a).

    \item[\emph{(d)}]: We have that \begin{align*}
            \brac{\frac{d}{d-1} - \frac{1}{d-1}(d-s)}\pi(d - s) &= \frac{s}{d-1}\pi(s)\\
            &= \frac{1}{(s+1)\binom{d}{s+1}},\end{align*}
            because of (b).
\end{itemize}
\end{proof}

\section{Discussion on Lundberg and Lee}
Lundberg and Lee also give a proof for \autoref{thm:2}. Their proof misses one detail. The strategy that Lundberg and Lee use to prove \autoref{thm:2} is very similar to the one in this thesis. The major difference is that they parametrize the weight function $\pi$ with a parameter $c$. They do this by defining $$\pi_c(s) = \begin{cases}
    \frac{d-1}{\binom ds(d-s)s},&\quad\text{if $s\neq 0, d$}\\
    c,&\quad\text{if $s=0, d$}
\end{cases}$$ They then let $c\to\infty$ and show that this limit gives the Shapley values. The problem with this approach is that Lundberg and Lee make the assumption that \footnotesize{ $$\lim_{c\to\infty}\argmin_{(\theta_0, \theta)\in\bb R\times\bb R^d}\sum_{S\subseteq [d]}\bric{f_x(S) - \theta_0 - (1_S)^T\theta}^2\pi_c(|S|) = \argmin_{(\theta_0, \theta)\in\bb R\times\bb R^d}\lim_{c\to\infty}\sum_{S\subseteq [d]}\bric{f_x(S) - \theta_0 - (1_S)^T\theta}^2\pi_c(|S|).$$}
\normalsize In other words, Lundberg and Lee assume that the $\lim_{c\to\infty}$ and the  $\argmin_{(\theta_0, \theta)\in\bb R\times\bb R^d}$ operators commute. This is not always the case, so this needs to be proven separately. It is possible that this can be proven with the use of Berge's maximum theorem \cite[116]{Bonsall_1963}.

\clearpage

\chapter{Conclusion}
\section{Summary}
In this thesis, we proved that the Shapley values are the unique explanation that satisfies local accuracy, missingness, restricted symmetry and restricted consistency. We did this by making use of a theorem from Young \cite{young1985monotonic} and by setting up a one-to-one correspondence between machine learning models and cooperative games. 

We also proved another theorem that reduces the Shapley values to the solution of regression problem. This was done by eliminating degrees of freedom and then solving a matrix product.

Lundberg and Lee \cite{NIPS2012_c399862d} gave a proof for both of these theorems as well. In this thesis, we discussed their proofs and formulations and gave corrections where necessary.

\addcontentsline{toc}{chapter}{Bibliografie}
\printbibliography

@inproceedings{NIPS2017_8a20a862,
 author = {Lundberg, Scott M and Lee, Su-In},
 booktitle = {Advances in Neural Information Processing Systems},
 editor = {I. Guyon and U. Von Luxburg and S. Bengio and H. Wallach and R. Fergus and S. Vishwanathan and R. Garnett},
 pages = {},
 publisher = {Curran Associates, Inc.},
 title = {A Unified Approach to Interpreting Model Predictions},
 url = {https://proceedings.neurips.cc/paper_files/paper/2017/file/8a20a8621978632d76c43dfd28b67767-Paper.pdf},
 volume = {30},
 year = {2017}
}

@article{young1985monotonic,
  title={Monotonic solutions of cooperative games},
  author={Young, H Peyton},
  journal={International Journal of Game Theory},
  volume={14},
  number={2},
  pages={65--72},
  year={1985},
  publisher={Springer}
}

@article{teneggi2022shap,
  title={SHAP-XRT: The Shapley Value Meets Conditional Independence Testing},
  author={Teneggi, Jacopo and Bharti, Beepul and Romano, Yaniv and Sulam, Jeremias},
  journal={arXiv preprint arXiv:2207.07038},
  year={2022}
}

@article{conrad2013generating,
  title={Generating sets},
  author={Conrad, Keith},
  journal={Expository, unpublished paper on the author’s personal homepage},
  year={2013}
}

@article{aas2021explaining,
  title={Explaining individual predictions when features are dependent: More accurate approximations to Shapley values},
  author={Aas, Kjersti and Jullum, Martin and L{\o}land, Anders},
  journal={Artificial Intelligence},
  volume={298},
  pages={103502},
  year={2021},
  publisher={Elsevier}
}

@article{winter2002shapley,
  title={The shapley value},
  author={Winter, Eyal},
  journal={Handbook of game theory with economic applications},
  volume={3},
  pages={2025--2054},
  year={2002},
  publisher={Elsevier}
}

@article{pinto2023artificial,
  title={How artificial intelligence is shaping medical imaging technology: a survey of innovations and applications},
  author={Pinto-Coelho, Lu{\'\i}s},
  journal={Bioengineering},
  volume={10},
  number={12},
  pages={1435},
  year={2023},
  publisher={MDPI}
}

@article{matrixcookbook,
  title={The matrix cookbook},
  author={Petersen, Kaare Brandt and Pedersen, Michael Syskind and others},
  journal={Technical University of Denmark},
  volume={7},
  number={15},
  pages={510},
  year={2008}
}

@misc{wg1994numerical,
  title={Numerical Recipes in Fortran: The Art of Scientific Computing.},
  author={WG},
  year={1994},
  publisher={JSTOR}
}

@book{rockafellar1997convex,
  title={Convex analysis},
  author={Rockafellar, R Tyrrell},
  volume={28},
  year={1997},
  publisher={Princeton university press}
}

@misc{convex_lecture_notes,
  title={Lecture 7},
  author={A.A. Ahmadi},
  url={https://www.princeton.edu/~aaa/Public/Teaching/ORF523/S16/ORF523_S16_Lec7_gh.pdf},
  year={2016}
}

@inproceedings{NIPS2012_c399862d,
 author = {Krizhevsky, Alex and Sutskever, Ilya and Hinton, Geoffrey E},
 booktitle = {Advances in Neural Information Processing Systems},
 editor = {F. Pereira and C.J. Burges and L. Bottou and K.Q. Weinberger},
 pages = {},
 publisher = {Curran Associates, Inc.},
 title = {ImageNet Classification with Deep Convolutional Neural Networks},
 url = {https://proceedings.neurips.cc/paper_files/paper/2012/file/c399862d3b9d6b76c8436e924a68c45b-Paper.pdf},
 volume = {25},
 year = {2012}
}

@article{Bonsall_1963, title={C. Berge , Topological Spaces, translated by E. M. Patterson (Oliver and Boyd, 1963), xiii+270 pp., 50s.}, volume={13}, DOI={10.1017/S0013091500025657}, number={4}, journal={Proceedings of the Edinburgh Mathematical Society}, author={Bonsall, F. F.}, year={1963}, pages={339–339}}

@article{lehrer2003allocation,
  title={Allocation processes in cooperative games},
  author={Lehrer, Ehud},
  journal={International Journal of Game Theory},
  volume={31},
  pages={341--351},
  year={2003},
  publisher={Springer}
}

\clearpage

\chapter*{Popular Summary}
Imagine the following. A couple of years ago, you graduated from college and in the past few years, you worked really hard. In this period, you managed to save up a lot of money to (hopefully) be able to take on a mortgage and by a house for you and your partner. You go to the bank and after a short interview, the banker asks you for some of your data. The bank has just started using a new method to determine wether its clients are eligible for a loan. This new method makes use of a neural network, which is a form of artificial intelligence, which can say ``yes'' or ``no'' to a loan request. After hearing this, you fill out a form and hand it to the banker. He goes away and when he comes back, he has bad news: you didn't get the loan. Baffled, you look to him and ask him: ``why did I not get the loan?'', to which the banker replies: ``I don't know, this is just what our model determined.''

In this scenario, there is a clear flaw: we cannot interpret the decision made by a neural network. This is a flaw that is shared by most Machine Learning models. A solution to this problem is the use of an explanation. An explanation of a machine learning model is an algorithm that assigns every variable a number to indicate its importance. A higher value would mean that the variable was very important for the neural network to make its decision.

In the previous example, the model makes its decision as follows. Given a lot of data, it calculates a value $y$. If $y\geq 0$, the model says that the client is applicable for a loan. If $y<0$, then the request for a loan gets denied. Suppose that our client paid all of his previous loans and interest on time. This would make him more eligible for a loan, so an explanation would give this a positive score. The higher this score is, the more important the model thinks that it is to pay off your previous loans on time. Now suppose that the client still has some open loans. This would make him less eligible for a loan, so an explanation would give his current loans a negative value. 

As mentioned, an explanation gives all of the parameters in the input of a machine learning model a value based on how important they are. The way to choose these values depend on what method is used. This thesis focusses on the SHAP-explanation. This is an explanation that was introduced in 2017 by Lundberg and Lee. The value that this explanation gives to each variable is called the Shapley value.

We can see what these Shapley values look like with the use of an image. In \autoref{fig:popular_explanation}, we have calculated the SHAP-explanation. On the left, we see the original image. For us humans, this is clearly an acoustic guitar. Next to the original image, we have three images. Our model thinks that it is most likely that the image is an acoustic guitar, after this an electric guitar and after this a banjo. The highlighted pixels are the most important pixels in making this decision. We can see that the top part of the guitar and a bit of the neck of the guitar were very important in determining the fact that this is an acoustic guitar. We can also see that there is not a lot of reason to think that this is an electric guitar or a banjo, because a lot of the pixels are black. The actual model actually says that the chance of the image being an acoustic guitar is about 25 times as high as the chance that the image is a banjo or and electric guitar, so this corresponds with most of the pixels being black in the figure.

There is unfortunately one big problem with the SHAP-explanation. The calculations are very slow. These calculations are so slow that even with the use of the fastest programming language, it will take at least 300 years to calculate one Shapley value in \autoref{fig:popular_explanation}. This value is very theoretical and assumes very optimal code and a very fast model. In most practical cases, it will take a lot longer than 300 years. Since this takes so long, we want to be able to approximate the Shapley values. In this thesis, I proved a theorem that makes such an approximation possible. With the use of this approximation, it took approximately 10 minutes to calculate \autoref{fig:popular_explanation}, which is a lot better than a minimum of 300 years.

\begin{figure}
    \centering
    \includegraphics[width=\linewidth]{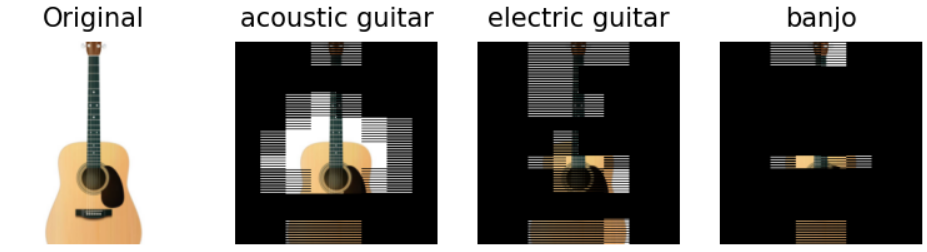}
    \caption{Visualisation of an explanation on an acoustic guitar. The model used to create this explanation is AlexNet \cite{NIPS2012_c399862d}.}
    \label{fig:popular_explanation}
\end{figure}

\addcontentsline{toc}{chapter}{Populaire samenvatting}

\appendix
\chapter{Symmetry in allocation procedures}
\label{appendix:symmetry}
Symmetry is a definition that is used in a lot of literature about game theory. Unfortunately, there are two definitions that are used in different literature that are both called symmetry. The first definition is the definition given by Young \cite{young1985monotonic} and the one defined in this thesis. In this chapter, if $\sigma:[n]\to[n]$ is a permutation and $\nu:\mathcal P([n])\to\bb R$, then we will denote $\sigma(\nu)$ as the cooperative game such that $\sigma(\nu)(S) = \nu(\sigma(S))$ with $\sigma(S) = \set{\sigma(s): s\in S}$ for all $S\subseteq[n]$.
\begin{property}[Symmetry]
    We say that $\phi$ is \textit{symmetric} if for all permutations $\sigma:[n]\to[n]$ we have \begin{equation}\phi_{\sigma (i)}(\sigma(\nu)) = \phi_i(v).\label{eq:symmetry}\end{equation}
    \label{prop:symmetry}
\end{property}

Another definition that is often used is a definition that is very similar to the definition of restricted symmetry for explanations (\autoref{prop:res_symmetry}).

\begin{property}[New Symmetry]
    Let $\psi$ be an allocation procedure with players $\mathcal A(x')$. We say that $\psi$ satisfies \textit{new symmetry} if the following implication holds. Let $i, j\in \mathcal A(x')$. If $$\nu(S\cup\set i) = \nu(S\cup\set j)\quad\text{for all $S\subseteq \mathcal A(x')\setminus\set{i, j}$},$$ then $\psi_i(\nu) = \psi_j(\nu)$.
\end{property}
New Symmetry is also a definition that is often used in literature \cite{winter2002shapley}. This chapter will prove that given certain conditions, these two definitions are equivalent.

Before we prove this equivalence, we need to look at a property of permutations.
\begin{lemma}
    \label{lemma:perm_prod}
    Let $D$ be a finite set. Every permutation $\sigma:D\to D$ can be written as the composition of functions $\sigma_{ab}:D\to D$ that are defined by $$\sigma_{ab}(x) = \begin{cases}
        a,\quad&x=b\\
        b, \quad& x=a\\
        x, \quad& x\neq a, b
    \end{cases}$$ for $a, b\in D$. 
\end{lemma}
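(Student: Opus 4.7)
The plan is to induct on the size of the ``moved set'' $M(\sigma) := \set{x \in D : \sigma(x) \neq x}$. For the base case $|M(\sigma)| = 0$, $\sigma$ is the identity; picking any $a \in D$ gives $\sigma = \sigma_{aa}$ by the definition of $\sigma_{ab}$ (and if $D = \emptyset$ the statement is vacuous).

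For the inductive step, suppose $|M(\sigma)| \geq 1$, pick $a \in M(\sigma)$, and set $b := \sigma(a) \neq a$. I would consider $\tau := \sigma_{ab} \circ \sigma$ and show $|M(\tau)| < |M(\sigma)|$. A direct computation yields $\tau(a) = \sigma_{ab}(b) = a$, so $a$ is a new fixed point of $\tau$. The key observation is that because $\sigma$ is a permutation with $\sigma(a) = b$, we must have $\sigma(b) \neq b$ (otherwise $a$ and $b$ would both map to $b$, violating injectivity); hence neither $a$ nor $b$ lies in the fixed-point set $F(\sigma) := D \setminus M(\sigma)$. For any $x \in F(\sigma)$ this forces $x \neq a, b$, so $\tau(x) = \sigma_{ab}(\sigma(x)) = \sigma_{ab}(x) = x$. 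Therefore $F(\sigma) \cup \set{a} \subseteq F(\tau)$, and the strict containment $|M(\tau)| \leq |M(\sigma)| - 1$ follows.

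With this strict decrease, the inductive hypothesis applied to $\tau$ produces a factorisation $\tau = \sigma_{a_1 b_1} \circ \cdots \circ \sigma_{a_k b_k}$. Since $\sigma_{ab}$ is its own inverse (immediate from the definition), I can left-multiply by $\sigma_{ab}$ to recover $\sigma = \sigma_{ab} \circ \sigma_{a_1 b_1} \circ \cdots \circ \sigma_{a_k b_k}$, closing the induction. I do not anticipate a real obstacle here: the only subtle point is recognising that $b$ cannot already be a fixed point of $\sigma$, and this is forced by injectivity. An equally viable alternative would be to invoke the disjoint-cycle decomposition of $\sigma$ and the identity $(c_1\, c_2\, \cdots\, c_\ell) = \sigma_{c_1 c_\ell} \circ \sigma_{c_1 c_{\ell-1}} \circ \cdots \circ \sigma_{c_1 c_2}$, but the induction above is more self-contained and avoids invoking cycle structure as a separate tool.
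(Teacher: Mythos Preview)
Your induction on the size of the moved set $M(\sigma)$ is correct. The base case (identity as $\sigma_{aa}$, or vacuous for $D=\emptyset$) is fine, and the inductive step is clean: the key point, that $b=\sigma(a)$ cannot be a fixed point of $\sigma$ by injectivity, is exactly what guarantees that pre-composing with $\sigma_{ab}$ strictly enlarges the fixed-point set. The involution property $\sigma_{ab}^{-1}=\sigma_{ab}$ then lets you peel off the transposition and close the induction.

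The paper takes a different route: it does not prove this lemma at all, but simply observes that it is a reformulation of a known result (Theorem~2.1 in Conrad's notes on generating sets for the symmetric group) and cites that reference. Your argument is therefore more self-contained than what the paper provides; it gives the reader a direct, elementary proof rather than sending them to an external source. The trade-off is that the paper's citation is shorter and signals that this is a standard fact, whereas your version makes the thesis independent of that reference at the cost of a paragraph of routine verification. Either choice is reasonable here.
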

\begin{proof}
    This is a reformulation of Theorem 2.1 by Conrad \cite{conrad2013generating}.
\end{proof}

\begin{lemma}
    \label{lemma:perm_to_perm}
    Let $n\in\bb N$, let $\sigma, \tau:[n]\to [n]$ be permutations and let $\psi$ be an allocation procedure. If $$\psi_{\tau(i)}(\tau(\nu)) = \psi_i(\nu)\quad\text{and}\quad\psi_{\sigma(i)}(\sigma(\nu))\quad\text{for all $i\in [n]$ and all cooperative games $\nu$},$$ then we also have $\psi_{\tau\circ\sigma(i)}(\tau\circ\sigma(\nu)) = \psi_i(\nu)$ for all $i\in [n]$ and all cooperative games $\nu$.
\end{lemma}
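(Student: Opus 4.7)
The plan is to derive the symmetry identity for $\tau\circ\sigma$ by chaining the two given hypotheses, applying one after the other with appropriate substitutions. The key observation is that the hypotheses are universally quantified over both the index and the cooperative game, so I can freely substitute any index and any game.

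First I would start from the hypothesis for $\sigma$, namely $\psi_{\sigma(i)}(\sigma(\nu)) = \psi_i(\nu)$, which holds for every $i\in[n]$ and every cooperative game $\nu$. Then I would apply the hypothesis for $\tau$ after substituting the index $i'=\sigma(i)$ and the game $\nu'=\sigma(\nu)$; this substitution is legitimate because $\sigma(\nu)$ is itself a cooperative game with players $[n]$ and $\sigma(i)\in[n]$. This gives $\psi_{\tau(\sigma(i))}(\tau(\sigma(\nu)))=\psi_{\sigma(i)}(\sigma(\nu))$. Composing the two equalities yields
\begin{equation*}
\psi_{\tau(\sigma(i))}(\tau(\sigma(\nu))) \;=\; \psi_{\sigma(i)}(\sigma(\nu)) \;=\; \psi_i(\nu).
\end{equation*}

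The only remaining step is the notational identification $(\tau\circ\sigma)(i) = \tau(\sigma(i))$ and $(\tau\circ\sigma)(\nu) = \tau(\sigma(\nu))$. The first is just the definition of function composition. For the second, one unfolds the definition given in the paper: since $\tau$ and $\sigma$ act on games via $\sigma(\mu)(S)=\mu(\sigma(S))$, applying $\sigma$ first and then $\tau$ to $\nu$ corresponds to the operator $\tau\circ\sigma$ acting on $\nu$, so I write $(\tau\circ\sigma)(\nu)$ for the game obtained by this two-step process. With these identifications the displayed equation above is exactly $\psi_{(\tau\circ\sigma)(i)}((\tau\circ\sigma)(\nu)) = \psi_i(\nu)$, which completes the proof.

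There is no serious obstacle in this argument; it is essentially a one-line chain of substitutions. The only pitfall to watch for is the notational issue mentioned above: one must be careful that the way $\tau\circ\sigma$ acts on a cooperative game is consistent with iterating the two individual actions, and state clearly which convention is being used so that the indices and games line up in the final line.
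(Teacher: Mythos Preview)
Your proposal is correct and essentially identical to the paper's proof, which is the one-line chain $\psi_{\tau\circ\sigma(i)}(\tau\circ\sigma(\nu)) = \psi_{\tau(\sigma(i))}(\tau(\sigma(\nu))) = \psi_{\sigma(i)}(\sigma(\nu)) = \psi_i(\nu)$. You are in fact more careful than the paper in flagging the notational identification $(\tau\circ\sigma)(\nu)=\tau(\sigma(\nu))$ as a point to check; the paper simply asserts it.
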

\begin{proof}
    Let $i\in D$ and let $\nu$ be a cooperative game with players $D$. We have that $$\psi_{\tau\circ\sigma(i)}(\tau\circ\sigma(\nu)) = \psi_{\tau(\sigma(i))}(\tau(\sigma(\nu)) = \psi_{\sigma(i)}(\sigma(\nu)) = \psi_i(\nu).$$
\end{proof}
With these lemmas, we can now prove the equivalence.
\begin{lemma}
    Let $\psi$ be an allocation procedure that satisfies strong monotonicity. We have $$\text{$\psi$ is symmetric}\quad\iff\quad\text{$\psi$ is newly symmetric.}$$
\end{lemma}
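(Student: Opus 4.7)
The plan is to prove both directions of the equivalence, noting that only the converse $(\Leftarrow)$ actually uses strong monotonicity.

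For the forward direction, assume $\psi$ is symmetric, and suppose $\nu(S\cup\set{i}) = \nu(S\cup\set{j})$ for every $S\subseteq [n]\setminus\set{i,j}$. A short case analysis on whether $S\cap\set{i,j}$ is $\emptyset$, $\set{i}$, $\set{j}$, or $\set{i,j}$ shows $\sigma_{ij}(\nu) = \nu$, so symmetry applied to the transposition $\sigma_{ij}$ immediately yields $\psi_j(\nu) = \psi_j(\sigma_{ij}(\nu)) = \psi_i(\nu)$, which is new symmetry.

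For the converse, assume $\psi$ is strongly monotonic and newly symmetric. By \autoref{lemma:perm_to_perm} and \autoref{lemma:perm_prod}, it suffices to prove $\psi_{\sigma(i)}(\sigma(\nu)) = \psi_i(\nu)$ for an arbitrary transposition $\sigma = \sigma_{ab}$ and every index $i$. In Case 1, where $i\in\set{a,b}$ (say $i=a$), the plan is to build an auxiliary game $\mu$ enjoying three properties simultaneously: (i) $\mu^a = \nu^a$, (ii) $\mu^b = \sigma_{ab}(\nu)^b$, and (iii) $\mu(T\cup\set{a}) = \mu(T\cup\set{b})$ for every $T\subseteq [n]\setminus\set{a,b}$. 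Concretely I set $\mu(T) = \nu(T)$, $\mu(T\cup\set{a}) = \nu(T\cup\set{a})$, $\mu(T\cup\set{b}) = \nu(T\cup\set{a})$, and $\mu(T\cup\set{a,b}) = \nu(T\cup\set{a}) + \nu(T\cup\set{a,b}) - \nu(T\cup\set{b})$ for $T\subseteq [n]\setminus\set{a,b}$; a routine check on each of the four coalition configurations verifies (i)--(iii). Strong monotonicity (applied as a two-sided inequality to upgrade to equality) then forces $\psi_a(\mu) = \psi_a(\nu)$ and $\psi_b(\mu) = \psi_b(\sigma_{ab}(\nu))$, while new symmetry forces $\psi_a(\mu) = \psi_b(\mu)$, so the chain gives $\psi_b(\sigma_{ab}(\nu)) = \psi_a(\nu)$ as required.

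For Case 2, where $i\notin\set{a,b}$, I reduce to Case 1 by a three-step chain. Rewriting the Case 1 conclusion as the identity $\psi_x(\rho) = \psi_y(\sigma_{xy}(\rho))$, valid for all distinct $x,y$ and all games $\rho$, I apply it first with $(x,y) = (i,a)$ and $\rho = \sigma_{ab}(\nu)$, then with $(x,y) = (a,b)$, and finally with $(x,y) = (b,i)$. A short computation of the composed permutations, in particular $\sigma_{ab}\sigma_{ia}\sigma_{ab} = \sigma_{ib}$, collapses the chain to $\psi_i(\sigma_{ab}(\nu)) = \psi_a(\sigma_{ia}\sigma_{ab}(\nu)) = \psi_b(\sigma_{ib}(\nu)) = \psi_i(\nu)$, completing the proof. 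The main obstacle throughout is the design of $\mu$ in Case 1: it must simultaneously realize the marginal contributions of $a$ coming from $\nu$, the marginal contributions of $b$ coming from the swapped game $\sigma_{ab}(\nu)$, and the $a\leftrightarrow b$ interchangeability that unlocks new symmetry. The formula above threads the needle, but verifying it requires careful bookkeeping over coalitions partitioned by their intersection with $\set{a,b}$; once $\mu$ is in hand, the rest of the argument is essentially a diagram chase.
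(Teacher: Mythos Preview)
Your proof is correct and follows essentially the same strategy as the paper: the forward direction is identical (case analysis showing $\sigma_{ij}(\nu)=\nu$), and for the converse you build an auxiliary game with the same marginal contributions for $a$ as $\nu$ and for $b$ as $\sigma_{ab}(\nu)$, then chain strong monotonicity with new symmetry. Your $\mu$ and the paper's $\xi$ are in fact related by $\mu(S)=\xi(S)+\nu(S\setminus\{a,b\})$, a shift that leaves the $a$- and $b$-marginals untouched, so the two constructions are equivalent.

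One point worth highlighting: your Case~2 (the index $i\notin\{a,b\}$) is a genuine addition. The paper's proof only establishes $\psi_{\sigma_{ij}(i)}(\sigma_{ij}(\nu))=\psi_i(\nu)$ for $i$ one of the two swapped indices, then immediately appeals to \autoref{lemma:perm_to_perm}; but that lemma requires the identity to hold for \emph{all} $i\in[n]$, including the fixed points of $\sigma_{ij}$. Your three-step conjugation argument using $\sigma_{ab}\sigma_{ia}\sigma_{ab}=\sigma_{ib}$ closes this gap cleanly, so in this respect your write-up is more complete than the paper's.
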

\begin{proof}
    "$\Rightarrow$": We first assume that $\psi$ is symmetric. Let $n\in\bb N$, let $\nu:\mathcal P([n])\to\bb R$ be a cooperative game and let $i, j\in [n]$. Suppose that
    $$\nu(S\cup\set i) = \nu(S\cup\set j)\quad\text{for all $S\subseteq [n]\setminus\set{i, j}$.}$$
    
    We will now prove that $\sigma_{ij}(\nu)(S) = \nu(S)$ for all $S\subseteq \mathcal A(x')$, with $\sigma_{ij}$ defined as in \autoref{lemma:perm_prod}. Let $S\subseteq[n]$. We will make a case distinction:

       \emph{Case $i, j\notin S$}: We have that $\sigma_{ij}(S) = S$ and therefore $$\sigma_{ij}(\nu)(S) = \nu(S).$$
\\\\
        \emph{Case $i\in S$ and $j\notin S$}: We now have that $$\sigma_{ij}(\nu)(S) = \nu(\sigma_{ij}((S\setminus\set i)\cup\set i)) = \nu((S\setminus \set i)\cup \set j) =  \nu((S\setminus\set i)\cup\set i) = \nu(S),$$ where in these second to last step, we use the assumption that $\nu(S\cup\set i) = \nu(S\cup\set j)$.
        \\\\
        \emph{Case $i\notin S$ and $j\in S$}: The proof of this case is analogous to the previous case.
       \\\\
        \emph{Case $i, j\in S$}: We get that $$\sigma_{ij}(\nu)(S)= \nu(\sigma_{ij}((S\setminus\set{i, j})\cup\set{i, j})) = \nu((S\setminus\set{i,j})\cup\set{j, i}) = \nu(S).$$

    We can now conclude that $\nu(S) = \nu(\sigma_{ij}(S))$ for all $S\subseteq \mathcal A(x')$. Since $\psi$ is symmetric, we get that $\psi_i(\nu) = \psi_{\sigma_{ij}(i)}(\sigma_{ij}(\nu)) = \psi_j(\sigma_{ij}(\nu))=\psi_j(\nu)$. We can now conclude that $\psi$ is newly symmetric.
    \\\\
    "$\Leftarrow$": Now suppose that $\psi$ is newly symmetric. 
    Let $\nu$ be a cooperative game and let $i, j\in [n]$. We can now make use of an observation from Young \cite[70]{young1985monotonic} and \autoref{lemma:equiv} to get that strong monotonicity gives us the following implication. 
    \begin{quote}
    Let $\omega, \mu:\mathcal P([n])\to\bb R$ be cooperative games. If $\omega(S) - \omega(S\setminus\set i) = \mu(S) - \mu(S\setminus\set i)$ for all $S\subseteq [n]$ with $i\in S$, then $\psi_i(\omega) = \psi_i(\mu)$.
    \end{quote} 
    We can now define the following cooperative game $\xi$. $$\xi(S) = \begin{cases}
        0&\quad \text{$i, j\notin S$}\\
        \nu(S) - \nu(S\setminus\set i)&\quad \text{$i\in S$ and $j\notin S$}\\
        \nu((S\setminus\set j)\cup\set i) - \nu(S\setminus\set j)&\quad \text{$i\notin S$ and $j\in S$}\\
        \nu(S) - \nu(S\setminus\set i) + \nu(S\setminus\set j) - \nu(S\setminus\set {i, j})&\quad i, j\in S.
    \end{cases}$$
    This is a cooperative game, because $i, j\notin \emptyset$, so $\xi(\emptyset) = 0$.
    
    We will first show that $\psi_i(\nu) = \psi_i(\xi)$ using the observation made by Young. After this, we will show that $\psi_i(\xi) = \psi_j(\xi)$  via symmetry and then we will show that $\psi_{\sigma_{ij}(i)}(\sigma_{ij}(\nu)) = \psi_i(\nu)$, again using the observation made by Young.
\\\\
 $\boldsymbol{\psi_i(\nu) = \psi_i(\xi)}$: Let $S\subseteq [n]$ with $i\in S$. If $j\notin S$, we have that $$\xi(S) - \xi(S\setminus\set i) = \nu(S) - \nu(S\setminus\set i) - 0=\nu(S) - \nu(S\setminus\set i).$$ If $j\in S$, we have that \begin{align*}\xi(S) - \xi(S\setminus\set i) &= \nu(S) - \nu(S\setminus\set i) + \nu(S\setminus\set j) - \nu(S\setminus\set{i, j}) - \nu(S\setminus\set j) + \nu(S\setminus\set{i, j}) \\ &= \nu(S) - \nu(S\setminus\set i).\end{align*}
    We can conclude that for all $S\subseteq [n]$ with $i\in S$ that $\nu(S) - \nu(S\setminus\set i) = \xi(S) - \xi(S\setminus\set i)$, so we can conclude that $\psi_i(\nu) = \psi_i(\xi)$. 
    \\\\ 
   $\boldsymbol{\psi_i(\xi) = \psi_j(\xi)}$: Now let $S\subseteq [n]\setminus\set{i, j}$. We have that $$\xi(S\cup\set i) = \nu(S\cup\set i) - \nu(S) = \xi(S\cup\set j).$$ Because $\psi$ is newly symmetric, we can conclude that $\psi_i(\xi) = \psi_j(\xi)$. 
\\\\
    $\boldsymbol{\psi_j(\xi) = \psi_j(\sigma_{ij}(\nu))}$: Now take any $S\subseteq [n]$ with $j\in S$. If $i\notin S$, we have $$\xi(S) - \xi(S\setminus\set j) = \nu((S\setminus\set j)\cup\set i) - \nu(S\setminus\set j) = \sigma_{ij}(\nu)(S) - \sigma_{ij}(\nu)(S\setminus\set j).$$ If $i\in S$, then we have \begin{align*}
        \xi(S) - \xi(S\setminus\set j) &= \nu(S) - \nu(S\setminus\set i) + \nu(S\setminus\set j) - \nu(S\setminus\set{i, j}) - \nu(S\setminus\set j) + \nu(S\setminus\set{i, j})\\
        &= \nu(S)-\nu(S\setminus\set i)\\
        &= \sigma_{ij}(\nu)(S) - \sigma_{ij}(\nu)(S\setminus\set j).
    \end{align*}
    We now have that for all $S\subseteq [n]$ with $j\in S$ that $$\xi(S) - \xi(S\setminus\set j) = \sigma_{ij}(\nu)(S) - \sigma_{ij}(\nu)(S\setminus\set j),$$ so we can conclude that $\psi_j(\xi) = \psi_j(\sigma_{ij}(\nu))$.

    From the above, we can conclude that $$\psi_i(\nu) = \psi_i(\xi) = \psi_j(\xi) = \psi_j(\sigma_{ij}(\nu)) = \psi_{\sigma_{ij}(i)}(\sigma_{ij}(\nu)).$$

    From \autoref{lemma:perm_prod}, we get that we can write every permutation $\sigma:[n]\to [n]$ as the composition of a finite number of functions of the form $\sigma_{ij}$ for $i, j\in [n]$. Because of this and \autoref{lemma:perm_to_perm}, we conclude that for all permutations $\sigma: [n]\to [n] $, $\psi_{\sigma(i)}(\sigma(\nu)) = \psi_i(\nu)$. We conclude that $\psi$ is symmetric. 
\end{proof}
\chapter{Convex functions}
A very important concept in the study of optimization problems is convexity. Let us first recall the definition of convexity. We will first recall the definition of a convex set.
\begin{definition}[Convex set]
    Let $n\in\bb N$ and let $S\subseteq\bb R^n$. We call $S$ convex if for all $x, y\in S$ and all $t\in[0, 1]$ $$tx + (1-t)y\in S.$$
\end{definition}
We will now also recall the definition of a convex function.
\begin{definition}[Convex function]
\label{definition:convex}
    Let $S\subseteq\bb R^n$ be a convex set. A function $f:S\to\bb R$ is called convex if for all $x, y\in\bb R^d$ and all $t\in[0, 1]$ $$f(tx + (1-t)y)\leq tf(x) + (1-t)f(y).$$
\end{definition}

While it is useful to know if a function is convex, it is sometimes hard to check directly from the definition. We will therefore look at a theorem that makes it easier to determine whether a function is convex.

\begin{theorem}
    \label{theorem:convex_second_derivative}
    Let $S\subseteq\bb R^n$ be nonempty, convex and open. Let $f:S\to\bb R$ be a function that is twice differentiable on $S$. Then $f$ is convex if and only if $\frac{\partial^2}{\partial x\partial x^T}f(x)$ is positive semi-definite.
\end{theorem}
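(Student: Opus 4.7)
The plan is to prove both directions by reducing to the one-dimensional case, since convexity of $f$ on $S$ is equivalent to convexity of its restrictions to line segments inside $S$, and the Hessian along a direction $v$ controls the second derivative of such a restriction. The key tool I will need is the classical one-variable fact: a twice differentiable function $g:I\to\bb R$ on an open interval is convex if and only if $g''\geq 0$. I will assume this one-variable result as a standard fact (it follows quickly from the mean value theorem applied to $g'$).

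For the forward direction, suppose $f$ is convex. Fix $x\in S$ and $v\in\bb R^n$. Since $S$ is open, the set $I=\set{t\in\bb R : x+tv\in S}$ is an open interval containing $0$. Define $g:I\to\bb R$ by $g(t)=f(x+tv)$. Since $S$ is convex, for any $t_1,t_2\in I$ and $\lambda\in[0,1]$, the point $x+(\lambda t_1+(1-\lambda)t_2)v$ lies in $S$ and equals $\lambda(x+t_1v)+(1-\lambda)(x+t_2v)$, so convexity of $f$ gives convexity of $g$. By the chain rule, $g$ is twice differentiable on $I$ with $g''(t)=v^T H(x+tv)v$, where $H(x)=\frac{\partial^2}{\partial x\partial x^T}f(x)$. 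The one-variable result then yields $g''(0)=v^T H(x)v\geq 0$. Since $v$ was arbitrary, $H(x)$ is positive semi-definite.

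For the reverse direction, suppose $H(y)$ is positive semi-definite for every $y\in S$. Fix $x,y\in S$ and consider $g:[0,1]\to\bb R$ defined by $g(t)=f((1-t)x+ty)$. Convexity of $S$ ensures $g$ is well defined, and by the chain rule $g''(t)=(y-x)^T H((1-t)x+ty)(y-x)\geq 0$ for all $t\in(0,1)$. Again, actually I need $g$ defined on an open interval to apply the one-variable result directly; since $S$ is open, I would first extend $g$ to a slightly larger open interval containing $[0,1]$ on which the segment stays in $S$ (by an openness argument). On that interval $g''\geq 0$, so $g$ is convex, hence $g(t)\leq (1-t)g(0)+tg(1)$ for $t\in[0,1]$, which is exactly $f((1-t)x+ty)\leq (1-t)f(x)+tf(y)$. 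Thus $f$ is convex by \autoref{definition:convex}.

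The main obstacle, and the only point requiring care, is the openness detail in the reverse direction: I need the interval on which $g$ is defined and twice differentiable to be open so that the standard one-variable criterion applies at the endpoints $0$ and $1$. The fix is to use openness of $S$ together with the fact that the segment from $x$ to $y$ is compact to find an $\varepsilon>0$ such that $(1-t)x+ty\in S$ for all $t\in(-\varepsilon,1+\varepsilon)$, and then apply the one-variable result on that open interval. Everything else (the chain rule computation of $g''$ and the reduction to one dimension) is routine.
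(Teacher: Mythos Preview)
Your proof is correct. The reduction to one variable via $g(t)=f(x+tv)$, together with the chain-rule identity $g''(t)=v^T H(x+tv)v$, is the standard argument, and you correctly handle the only delicate point (extending the domain of $g$ to an open interval containing $[0,1]$ using openness of $S$ and compactness of the segment).

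By way of comparison: the paper does not actually prove this theorem but simply cites Rockafellar (Theorem~4.5 of \emph{Convex Analysis}). Your argument is essentially the proof one finds there, so you have supplied what the paper defers to the literature. One small remark: in the forward direction you assert that $I=\set{t:x+tv\in S}$ is an open interval; openness is immediate, but connectedness of $I$ uses convexity of $S$ (if $t_1,t_2\in I$ then the segment from $x+t_1v$ to $x+t_2v$ lies in $S$ and is exactly $\set{x+tv:t\in[t_1,t_2]}$). You clearly know this, but it is worth stating since you explicitly flagged the analogous subtlety in the reverse direction.
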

\begin{proof}
    See the proof of Theorem 4.5 from \cite{rockafellar1997convex}.
\end{proof}

For this thesis, we want to know if composition of a convex function with an affine function preserves convexity. For clarification, a function $g:\bb R^m\to\bb R^n$ is called affine if there exist $A\in\bb R^{n\times m}$ and $b\in\bb R^n$ such that $g(x) = Ax + b$.

\begin{lemma}
\label{lemma:convex+affine}
    Let $S\subseteq\bb R^n$ be convex and let $f: S\to\bb R$ be a convex function and let $g:\bb R^m\to\bb S$ be an affine function. Then the composition $f\circ g$ is convex. 
\end{lemma}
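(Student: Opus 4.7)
The plan is to unwind the definition of convexity of $f \circ g$ directly, exploiting the fact that affine maps commute with convex combinations. This will let me push a convex combination from inside the argument of $g$ out into a convex combination of values of $g$, at which point the convexity of $f$ finishes the job.

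First, I would fix arbitrary $x, y \in \bb R^m$ and $t \in [0, 1]$, with the goal of establishing
$$(f \circ g)(tx + (1-t)y) \leq t(f\circ g)(x) + (1-t)(f\circ g)(y).$$
Since $g$ is affine, there exist $A \in \bb R^{n \times m}$ and $b \in \bb R^n$ such that $g(z) = Az + b$ for every $z\in\bb R^m$. A direct computation then gives
$$g(tx + (1-t)y) = A(tx + (1-t)y) + b = t(Ax + b) + (1-t)(Ay + b) = tg(x) + (1-t)g(y),$$
so affine maps transport convex combinations through the composition. Substituting this into $(f\circ g)(tx + (1-t)y)$ and then applying the convexity of $f$ at the two points $g(x), g(y) \in S$ with weight $t$ yields
$$f(g(tx + (1-t)y)) = f(tg(x) + (1-t)g(y)) \leq tf(g(x)) + (1-t)f(g(y)),$$
which is exactly the desired inequality.

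There is no real obstacle to this argument; the only care required is a domain check. For the convexity inequality for $f$ to apply at the combination $tg(x) + (1-t)g(y)$, I need $g(x), g(y) \in S$, which is given by the hypothesis that $g$ maps into $S$, and I need $tg(x) + (1-t)g(y) \in S$, which is guaranteed by the convexity of $S$. Once these memberships are noted, the proof is a two-line chain of equalities and one inequality, so the whole lemma is essentially a bookkeeping exercise built on the identity that $g$ preserves convex combinations.
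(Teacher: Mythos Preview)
Your proof is correct and follows essentially the same approach as the paper: write $g(z)=Az+b$, observe that $g$ preserves convex combinations, and then apply the convexity inequality for $f$. If anything, your explicit domain check (that $g(x),g(y)\in S$ and hence $tg(x)+(1-t)g(y)\in S$) is a bit more careful than the paper's version.
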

\begin{proof}
    Since $g$ is an affine function, there exists $A\in\bb R^{n\times m}$ and $b\in\bb R^n$ such that $g(x) = Ax + b$. Let $t\in[0, 1]$ and let $x, y\in\bb R^m$. We have that 
    \begin{align*}
        (f\circ g)(tx + (1-t)y) &= f(A(tx + (1-t)y) + b)\\
        &= f(Atx + (1-t)Ay + b)\\
        &= f(Atx + tb + (1-t)Ay + (1-t)b)\\
        &\leq tf(Ax + b) + (1-t)f(Ay + b)\\
        &= t(f\circ g)(x) + (1-t)(f\circ g)(y).
    \end{align*}
    With this, we have proven that $f\circ g$ is convex.
\end{proof}

\begin{theorem}
\label{theorem:convex_inequality}
    Let $S\subseteq\bb R^n$ be convex and let $f: S\to\bb R$ be a convex function that is differentiable. Then for all $x, y\in S$ we have $$f(y)\geq f(x) + \nabla f(x)^T(y-x).$$
\end{theorem}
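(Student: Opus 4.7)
The plan is to use the standard first-order characterization argument: fix $x, y \in S$ and evaluate the convexity inequality along the line segment from $x$ to $y$, then send the interpolation parameter to zero to recover the gradient.

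First I would fix $x, y \in S$ arbitrary, and for $t \in (0, 1]$ consider the point $x + t(y - x) = (1-t)x + ty$, which lies in $S$ because $S$ is convex. Applying \autoref{definition:convex} gives
\[
  f(x + t(y-x)) \;=\; f((1-t)x + ty) \;\leq\; (1-t)f(x) + t f(y).
\]
Subtracting $f(x)$ from both sides and dividing by $t > 0$ (legal since $t$ is strictly positive) rearranges to
\[
  \frac{f(x + t(y-x)) - f(x)}{t} \;\leq\; f(y) - f(x).
\]

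Next I would pass to the limit $t \downarrow 0$ on the left-hand side. Because $f$ is differentiable at $x$, the directional derivative of $f$ at $x$ in the direction $y - x$ exists and equals $\nabla f(x)^T(y-x)$; that is,
\[
  \lim_{t \downarrow 0} \frac{f(x + t(y-x)) - f(x)}{t} \;=\; \nabla f(x)^T(y-x).
\]
Since the inequality above holds for every $t \in (0,1]$ and the right-hand side $f(y) - f(x)$ does not depend on $t$, the limit preserves the inequality, yielding $\nabla f(x)^T(y-x) \leq f(y) - f(x)$, which is the claim.

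The main obstacle, if any, is justifying the identification of the one-sided directional limit with $\nabla f(x)^T(y-x)$ when $x$ may lie on the boundary of $S$. However, since $S$ is convex and $y \in S$, the segment $\{x + t(y-x) : t \in [0,1]\}$ is entirely contained in $S$, so the difference quotient is well-defined for all small $t > 0$; differentiability of $f$ at $x$ then gives the stated limit via the standard first-order Taylor expansion. No additional regularity or openness of $S$ is needed beyond what is assumed.
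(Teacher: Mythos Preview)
Your proof is correct and follows essentially the same approach as the paper: both use the convexity inequality along the segment from $x$ to $y$, rearrange to isolate a difference quotient, and let $t\downarrow 0$ to obtain the directional derivative $\nabla f(x)^T(y-x)$. Your version is in fact slightly more careful in justifying the division by $t>0$ and the limiting step.
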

\begin{proof}
    This proof is based on the lecture notes from \cite{convex_lecture_notes}.
    Let $x, y\in S\subseteq\bb R^n$ and let $f:S\to\bb R$ be a convex function. From the definition of convexity, we have that for $t\in[0, 1]$ that $$(1-t)f(x) + tf(y)\geq f((1-t)x + ty).$$ Rewriting this equation gives us that $$f(y)\geq f(x) + \frac{f(x - t(y-x)) - f(x)}{t} .$$Now letting $t\downarrow0$ gives that $$f(y)\geq f(x) + \nabla f(x)(y-x).$$
\end{proof}

The importance of this theorem is demonstrated in the following corollary. 

\begin{corollary}
\label{corollary:convex_minimum}
    Let $S\subseteq\bb R^n$ and let $f:S\to\bb R$ be twice differentiable and convex. Suppose that for some $x\in S$ we have that $f$ takes on an extremum at $x$. Then $f(x)$ is a minimum of $f$.
\end{corollary}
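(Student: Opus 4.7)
The plan is to derive this as an almost immediate consequence of Theorem~\ref{theorem:convex_inequality}. The idea is that the first-order optimality condition at an extremum forces the gradient to vanish, which then makes the convexity inequality collapse into a direct comparison of function values on all of $S$.

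Concretely, I would argue in two short steps. First, because $f$ is (twice) differentiable on $S$ and $x$ is an extremum of $f$, the standard first-order necessary condition yields $\nabla f(x) = \0$. I would assume here, as is the case in the intended application in the proof of Theorem~\ref{thm:2}, that $x$ lies in the interior of $S$, which is automatic there since the domain is all of $\bb R^{d-1}$. Second, I would plug this into the first-order convex inequality from Theorem~\ref{theorem:convex_inequality}: for every $y \in S$,
\[
f(y) \;\geq\; f(x) + \nabla f(x)^T(y-x) \;=\; f(x).
\]
Hence $f(x) \leq f(y)$ for every $y \in S$, which is exactly the assertion that $f(x)$ is a (global) minimum of $f$.

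The main obstacle is essentially non-existent: this is a one-line corollary of the preceding theorem, and the only delicate point is the interior-point caveat in the first-order condition. The twice-differentiability hypothesis is stronger than strictly needed and is presumably there to match the way convexity is verified upstream, via the Hessian criterion in Theorem~\ref{theorem:convex_second_derivative}. Beyond that, no further machinery is required.
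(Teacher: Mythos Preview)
Your proposal is correct and essentially identical to the paper's own proof: the paper also notes that an extremum forces $\nabla f(x)=0$ and then applies Theorem~\ref{theorem:convex_inequality} to conclude $f(y)\geq f(x)+\nabla f(x)^T(y-x)=f(x)$ for all $y\in S$. Your additional remarks about the interior-point caveat and the redundancy of the twice-differentiability hypothesis are accurate observations that the paper does not spell out.
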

\begin{proof}
    Since $x$ is an extremum of $f$, we have that $\nabla f(x) = 0$. Now \autoref{theorem:convex_inequality} gives us that for all $y\in S$, we have $$f(y)\geq f(x) + \nabla f(x)^T(y-x) = f(x).$$ This proves that $f(x)$ is a minimum of $f$.
\end{proof}

\chapter{Applicable linear algebra}
This section gives lemmas, from linear algebra, that are useful for proving \autoref{thm:2}. 
\begin{lemma}
    \label{lemma:outer_product_sum}
    Let $m, n\in\bb N$. Let $X\in \bb R^{m\times n}$ and $W\in\bb R^{m\times m}$ a diagonal matrix. Denote $r_i$ to be the $i$'th row of $X$. We get that $$X^TWX = \sum_{k=1}^mW_{kk}(r_kr_k^T).$$
\end{lemma}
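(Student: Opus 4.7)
The plan is to verify the identity entrywise, since both sides are $n \times n$ matrices. First I would fix arbitrary indices $i, j \in \{1, \dots, n\}$ and expand $[X^T W X]_{ij}$ using the definition of matrix multiplication:
\[
[X^T W X]_{ij} = \sum_{k=1}^{m} \sum_{\ell=1}^{m} X_{ki} W_{k\ell} X_{\ell j}.
\]
Since $W$ is diagonal, $W_{k\ell} = 0$ whenever $k \neq \ell$, so the double sum collapses to
\[
[X^T W X]_{ij} = \sum_{k=1}^{m} W_{kk} X_{ki} X_{kj}.
\]

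Next I would compute the corresponding entry on the right-hand side. By definition of $r_k$ as the $k$-th row of $X$ (viewed as a column vector in $\bb R^n$), we have $(r_k)_i = X_{ki}$, so the outer product satisfies $[r_k r_k^T]_{ij} = X_{ki} X_{kj}$. Therefore
\[
\left[\sum_{k=1}^{m} W_{kk} r_k r_k^T\right]_{ij} = \sum_{k=1}^{m} W_{kk} X_{ki} X_{kj},
\]
which matches the expression derived above. Since $i$ and $j$ were arbitrary, the two matrices are equal.

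There is no genuine obstacle here; the identity is a routine consequence of the diagonality of $W$. The only care needed is in correctly identifying $(r_k)_i = X_{ki}$ (rows of $X$ indexed by the first coordinate), which ensures that the outer-product entries line up with the surviving terms of the matrix product. An alternative route, which avoids indices entirely, is to write $X = \sum_{k=1}^m e_k r_k^T$ where $e_k$ is the $k$-th standard basis vector of $\bb R^m$; then $X^T W X = \sum_{k,\ell} r_k (e_k^T W e_\ell) r_\ell^T = \sum_{k,\ell} W_{k\ell} r_k r_\ell^T$, and diagonality of $W$ again kills all off-diagonal terms.
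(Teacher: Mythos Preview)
Your proof is correct and follows essentially the same entrywise verification as the paper: expand $[X^TWX]_{ij}$ via matrix multiplication, collapse the double sum using diagonality of $W$, and identify the surviving terms with $[r_k r_k^T]_{ij}$. The alternative coordinate-free argument you sketch is a nice bonus but not needed.
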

\begin{proof}
    We will prove this by using the definition of matrix multiplication. We get that \begin{align*}[X^TWX]_{ij} &= \sum_{k=1}^m\sum_{\ell=1}^m X_{ik}^TW_{k\ell}X_{\ell j} \\&= \sum_{k=1}^mX_{ki}W_{kk}X_{kj}\\& = \sum_{k=1}^{m}W_{kk}(r_k)_i(r_k)_j \\&= \sum_{k=1}^mW_{kk}[r_kr_k^T]_{ij} \\&= \bric{\sum_{k=1}^mW_{kk}r_kr_k^T}_{ij}.\end{align*} Since the $X^TWX$ and $\sum_{k=1}^mW_{kk}(r_kr_k^T)$ are the same on every index, we can conclude that $$X^TWX = \sum_{k=1}^mW_{kk}(r_kr_k^T).$$
\end{proof}
\begin{lemma}
    \label{lemma:sum_binary_vectors}
    Let $n\in\bb N$ and let $s\in\set{1, \dots, n}$. The following equality holds $$\sum_{\substack{z\in\set{0, 1}^n\\|z|=s}}zz^T = \binom{n-2}{s-1}I + \binom{n-2}{s-2}J,$$ where $I, J\in\bb R^{n\times n}$ with $I$ the identity matrix and $J$ the matrix with only ones. We use the convention that for $k\in\bb N$ and $n\in\bb N$ such that $k<0$ or $n>k$, that $\binom{n}{k} = 0$.
\end{lemma}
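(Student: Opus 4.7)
The plan is to prove the identity by computing a single arbitrary entry $(i,j)$ of the matrix on the left and matching it against the corresponding entry of the claimed matrix on the right. Since each outer product $zz^T$ has entries $[zz^T]_{ij} = z_i z_j$, the sum $\sum_{|z|=s} zz^T$ has entries that are simply counts of binary vectors of weight $s$ satisfying certain prescribed values in positions $i$ and $j$.

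First I would split on whether $i=j$. For the diagonal case, $z_i z_j = z_i^2 = z_i$ (since $z_i \in \{0,1\}$), so
\[
\bigl[\textstyle\sum_{|z|=s} zz^T\bigr]_{ii} = \#\{z \in \{0,1\}^n : |z|=s,\ z_i = 1\} = \binom{n-1}{s-1},
\]
by distributing the remaining $s-1$ ones among the other $n-1$ positions. For the off-diagonal case $i\neq j$, the product $z_i z_j$ equals $1$ exactly when both coordinates are $1$, so
\[
\bigl[\textstyle\sum_{|z|=s} zz^T\bigr]_{ij} = \#\{z \in \{0,1\}^n : |z|=s,\ z_i = z_j = 1\} = \binom{n-2}{s-2}.
\]

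Next I would read off the entries of the right-hand side: the matrix $\binom{n-2}{s-1}I + \binom{n-2}{s-2}J$ has diagonal entries equal to $\binom{n-2}{s-1} + \binom{n-2}{s-2}$ and off-diagonal entries equal to $\binom{n-2}{s-2}$. The off-diagonal entries already agree with what was computed above. For the diagonal, I would invoke Pascal's identity
\[
\binom{n-1}{s-1} = \binom{n-2}{s-1} + \binom{n-2}{s-2},
\]
which finishes the match.

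There is no real obstacle beyond careful bookkeeping. The only subtlety is the boundary values $s=1$ and $s=n$ (and the small case $n=1$), where one of the binomial coefficients on the right-hand side refers to an out-of-range index; but the convention stated in the lemma that such coefficients are $0$ makes the computation and Pascal's identity remain valid in those cases, so no separate argument is needed.
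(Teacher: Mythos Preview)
Your proposal is correct and follows essentially the same approach as the paper: both compute the $(i,j)$-entry by counting binary vectors of weight $s$ with prescribed ones at positions $i$ and $j$, obtaining $\binom{n-1}{s-1}$ on the diagonal and $\binom{n-2}{s-2}$ off the diagonal, and then use Pascal's identity $\binom{n-1}{s-1}=\binom{n-2}{s-1}+\binom{n-2}{s-2}$ to match the right-hand side. The only cosmetic difference is that the paper treats $s=1$ as a separate case, whereas you (equivalently) absorb it into the binomial convention.
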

\begin{proof}
First suppose that $s>1$.
    We now have that for $z\in\set{0, 1}^n$, $[zz^T]_{ij} = 1$ if and only if $z_i = 1$ and $z_j=1$. This means that for $i, j\in\set{1, \dots, n}$ with $i\neq j$ that $\sum_{\substack{z\in\set{0, 1}^n\\|z|=s}}[zz^T]_{ij}$ is equal to $\#\set{z\in\set{0, 1}^n: z_i = z_j = 1, |z|=s}$. This is equal to $\binom{n-2}{s-2}$, because we need to fill $n-2$ spots with $s-2$ ones. 

    Now suppose that $i=j$. Then we have that $[zz^T]_{ij}=1$ if and only if $z_i=1$. This means that, through similar logic as before, $$\sum_{\substack{z\in\set{0, 1}^n\\|z|=s}}[zz^T]_{ii} = \#\set{z\in\set{0, 1}^n: z_i=1, |z|=s} = \binom{n-1}{s-1},$$ because we need to fill $n-1$ spots in a vector with $s-1$ ones.

    We can now conclude that $$\sum_{\substack{z\in\set{0, 1}^n\\|z|=s}}zz^T = \brac{\binom{n-1}{s-1} - \binom{n-2}{s-2}} I  + \binom{n-2}{s-2}J = \binom{n-2}{s-1}I + \binom{n-2}{s-2}J.$$

    Now suppose that $s=1$. We now find that $$\sum_{\substack{z\in\set{0, 1}^n\\|z|=s}}zz^T = I = \binom{n-2}{0}I + \binom{n-2}{-1}J = \binom{n-2}{s-1}I + \binom{n-2}{s-2}J.$$
\end{proof}

\begin{lemma}[Sherman-Morrison-Woodbury formula]
    \label{lemma:sherman-morrison-woodbury}
    Let $n\in\bb N$ and let $A\in\bb R^{n\times n}$ be invertible. Let $u, v\in\bb R^n$. If $1 + v^TA^{-1}u\neq 0$, then $A + uv^T$ is invertable with $$(A + uv^T)^{-1} = A^{-1} - \frac{A^{-1}uv^TA^{-1}}{1 + v^TA^{-1}u}.$$
\end{lemma}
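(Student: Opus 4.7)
The plan is to verify the identity by direct multiplication: define $B := A^{-1} - \frac{A^{-1}uv^T A^{-1}}{1 + v^T A^{-1} u}$ and check that $(A+uv^T)B = I$ and $B(A+uv^T) = I$. Both the existence of the inverse and the explicit formula will then follow at once, since a two-sided inverse of a square matrix is unique.

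Concretely, I would first set $\alpha := 1 + v^T A^{-1} u$, noting that $\alpha \neq 0$ by hypothesis so that $B$ is well-defined. Expanding the product gives
\[
(A+uv^T)B = I - \tfrac{1}{\alpha}uv^T A^{-1} + uv^T A^{-1} - \tfrac{1}{\alpha} u v^T A^{-1} u v^T A^{-1}.
\]
The crucial observation is that $v^T A^{-1} u$ is a scalar, so it commutes with the surrounding matrices and can be factored out of the last term. Grouping the rank-one pieces then gives
\[
(A+uv^T)B = I + uv^T A^{-1}\left(1 - \tfrac{1}{\alpha} - \tfrac{v^T A^{-1} u}{\alpha}\right) = I + uv^T A^{-1}\left(1 - \tfrac{\alpha}{\alpha}\right) = I.
\]
The computation of $B(A+uv^T)$ is entirely analogous, using the same scalar-extraction trick on $v^T A^{-1} u$.

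There is no real obstacle here; the formula is designed so that all of the rank-one corrections collapse. The only point that requires a moment of care is recognising that the nonzero scalar factor $v^T A^{-1} u$ in the double product $uv^T A^{-1} u v^T A^{-1}$ may be pulled out of the middle, which is what makes the algebra telescope to $I$. Once both one-sided products equal $I$, invertibility of $A+uv^T$ together with the stated formula are established simultaneously.
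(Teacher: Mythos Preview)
Your proof is correct: the direct verification that $(A+uv^T)B = B(A+uv^T) = I$ is the standard argument, and your algebra is right. The paper itself does not supply a proof of this lemma at all --- it simply cites page~66 of \cite{wg1994numerical} --- so your proposal in fact provides more than the paper does.
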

\begin{proof}
    This proof is given on page 66 from \cite{wg1994numerical}.
\end{proof}
\end{document}